\documentclass[sigconf]{acmart}

\AtBeginDocument{%
  \providecommand\BibTeX{{%
    \normalfont B\kern-0.5em{\scshape i\kern-0.25em b}\kern-0.8em\TeX}}}



\usepackage{amsmath,amsthm}
\usepackage{bm}
\usepackage[short]{optidef}
\usepackage{algpseudocode}
\usepackage{caption}
\usepackage{algorithm}
\usepackage{graphicx}
\usepackage{subfigure}

\usepackage{bbm}
\usepackage[short]{optidef}
\usepackage{tabularx,ragged2e}
\newtheorem{lemma}{Lemma}
\newtheorem{theorem}{Theorem}
\newtheorem{proposition}{Proposition}
\newtheorem{assumption}{Assumption}
\newtheorem{corollary}{Corollary}
\theoremstyle{definition}
\newtheorem{remark}{Remark}



\def \n2{{N_0 \over 2}}

\def \bbr{\bar{\mathbf{R}}}
\def \brh{\hat{\mathbf{R}}}
\def \brn{\mathbf{R}}

\def\det{\mbox{\textrm{det}}}
\def\adj{\mbox{\textrm{adj}}}

\def \h5{\hspace{0.5in}}

\def \bone{\mbox{\boldmath $1$}}

\usepackage{thmtools,thm-restate}

\usepackage{color,enumitem}
\newboolean{showcomments}
\setboolean{showcomments}{true}

\settopmatter{printacmref=false}
\setcopyright{none}
\renewcommand\footnotetextcopyrightpermission[1]{}

\newcommand{\Junshan}[1]{  \ifthenelse{\boolean{showcomments}}
	{ \textcolor{red}{(Junshan says:  #1)}} {}  }

\begin{document}

\title{Accelerating Distributed Online Meta-Learning via Multi-Agent Collaboration under Limited Communication}

\author{Sen Lin}
\affiliation{%
  \institution{Arizona State University}
  \city{Tempe}
  \country{USA}}
\email{slin70@asu.edu}

\author{ Mehmet Dedeoglu}
\affiliation{%
  \institution{Arizona State University}
  \city{Tempe}
  \country{USA}}
\email{mdedeogl@asu.edu}

\author{Junshan Zhang}
\affiliation{%
  \institution{Arizona State University}
  \city{Tempe}
  \country{USA}}
\email{junshan.zhang@asu.edu}

\begin{abstract}
   Online meta-learning is emerging as an  enabling technique for achieving edge intelligence in the IoT ecosystem. Nevertheless, to learn a good meta-model for within-task fast adaptation, a single agent alone has to learn over many tasks, and this is  the so-called `cold-start' problem. Observing that in a multi-agent network the learning tasks across different agents often share some model similarity, we ask the following fundamental question: ``Is it possible to accelerate the online meta-learning across agents via limited communication  and if yes how much benefit can be achieved? " To answer this question, we propose a multi-agent online meta-learning framework and cast it as an equivalent \emph{two-level nested online convex optimization (OCO)} problem.
    By characterizing the upper bound of the agent-task-averaged regret, we show that the performance of multi-agent online meta-learning depends heavily on how much an agent can benefit from the distributed network-level OCO for meta-model updates via limited communication, which however is not well understood. To tackle this challenge, we devise a distributed online gradient descent algorithm with \emph{gradient tracking} where each agent   tracks the global gradient using only one communication step with its neighbors per iteration, and it results in  an average regret $O(\sqrt{T/N})$ per agent, indicating that a factor of $\sqrt{1/N}$ speedup over the optimal single-agent regret $O(\sqrt{T})$ after $T$ iterations, where $N$ is the number of agents. Building on this sharp performance speedup, we next develop a multi-agent online meta-learning algorithm and show that it can achieve the optimal task-average regret  at a faster rate of $O(1/\sqrt{NT})$ via limited communication, compared to single-agent online meta-learning. Extensive experiments corroborate the theoretic results. 
\end{abstract}

\begin{CCSXML}
<ccs2012>
<concept>
<concept_id>10003033.10003079.10011672</concept_id>
<concept_desc>Networks~Network performance analysis</concept_desc>
<concept_significance>500</concept_significance>
</concept>
<concept>
<concept_id>10010147.10010257.10010282.10010284</concept_id>
<concept_desc>Computing methodologies~Online learning settings</concept_desc>
<concept_significance>500</concept_significance>
</concept>
<concept>
<concept_id>10003752.10010070.10010071.10010079</concept_id>
<concept_desc>Theory of computation~Online learning theory</concept_desc>
<concept_significance>500</concept_significance>
</concept>
<concept>
<concept_id>10003752.10010070.10010071.10010082</concept_id>
<concept_desc>Theory of computation~Multi-agent learning</concept_desc>
<concept_significance>500</concept_significance>
</concept>
</ccs2012>
\end{CCSXML}

\ccsdesc[500]{Networks~Network performance analysis}
\ccsdesc[500]{Computing methodologies~Online learning settings}
\ccsdesc[500]{Theory of computation~Online learning theory}
\ccsdesc[500]{Theory of computation~Multi-agent learning}

\keywords{multi-agent network, online meta-learning, distributed online convex optimization, gradient tracking}

\maketitle

\section{Introduction}

Meta-learning \cite{finn2017model,mishra2017simple,snell2017prototypical} has recently emerged as a promising approach for few-shot learning, aiming to solve new learning tasks quickly with only a few data samples by leveraging the prior knowledge from many related tasks. In particular, the gradient-based meta-learning \cite{finn2017model,nichol2018first} has become  popular because of its simplicity yet great effectiveness. Specifically, a meta-model is learnt across a set of training tasks sampled from some task distribution, such that the task-specific model for a new task can be quickly adapted from this meta-model via gradient descent using a few local samples. Such a fast learning capability with small datasets is critical for achieving artificial intelligence locally in resource-constrained devices,  paving the way to edge intelligence in the Internet-of-Things (IoT) ecosystem \cite{lin2020edge}. 

To enable continual lifelong learning as human beings do, much attention is being paid  to
online meta-learning \cite{denevi2019learning,finn2019online,khodak2019provable,khodak2019adaptive}, which
can be viewed as a synergy of two distinct learning methods, i.e., meta-learning and online learning \cite{shalev2011online}. Specifically, in online meta-learning, online learning tasks  arrive one at a time, and the agent intends to learn good priors based on its own experience about past tasks in a sequential manner so as to adapt quickly to the current task, and thus has a strong flavor of continual lifelong learning.  
Notably, \cite{khodak2019provable,khodak2019adaptive} study the gradient-based meta-learning algorithms in the framework of online convex optimization (OCO), where both within-task adaptation and update of the meta-models across tasks are treated as a OCO problem. 

Despite the superior fast learning performance of online meta-learning, to learn a good meta-model for within-task fast adaptation, a single agent alone still has to learn over many tasks, which inevitably encounters the cold-start problem. Observe that in a multi-agent network, the learning tasks across different agents in the same environment often share some model similarity \cite{smith2017federated}. For example, different robots may perform similar coordination behaviors according to the environment changes. In fact, one of the most remarkable abilities of human being is to continuously speed up learning of new tasks based on previous experiences from oneself as well as from others. Thus inspired, one may wonder if the cold-start problem for a single agent could be mitigated via limited collaboration among multiple agents by leveraging the task similarity therein. Here by ``limited collaboration" we mean limited communication between neighboring agents only, as the communication cost usually is a bottleneck in wireless communication systems. To be more specific, we seek to answer the following open questions: \emph{1) Can we accelerate the online meta-learning at a single agent on average in a multi-agent network, with only one communication step among neighbors per learning task? 2) If yes, how much can we improve upon the single-agent case?}

In this work, we give an affirmative answer to the first question, and show that the optimal task-average regret can be achieved at a faster rate for each agent in the multi-agent network via limited communication, compared to single-agent online meta-learning. More specifically, we propose MAOML, a multi-agent online meta-learning framework, which generalizes the single-agent online meta-learning framework, ARUBA, in \cite{khodak2019adaptive} to a multi-agent online meta-learning setting. In particular, we cast the multi-agent online meta-learning into an equivalent \emph{two-level nested} OCO problem, where we treat the within-task adaptation as a standard \emph{task-level} OCO problem, and the meta-model update as a distributed \emph{network-level} OCO problem across the multi-agent network. Mathematically, it can be shown that the performance ceiling of the multi-agent online meta-learning, in terms of the task-average regret, heavily depends on the performance of the distributed network-level OCO for the meta-model update. This is intuitive as a good meta-model should be able to capture the most important information across different tasks in the multi-agent network for enabling fast learning of a new task. \emph{Therefore, the problem of accelerating online meta-learning boils down to improving the performance per agent of distributed network-level OCO via limited communication.}

Then, the next key question is ``\emph{how much can an agent benefit from distributed OCO through limited communication with its neighbors?''} To  this end, consider a multi-agent network with $N$ agents. Intuitively, the more agents there are and the more information exchange, the smaller the average regret would be, and this is of interest particularly in a networked system. 
It is well known that the optimal regret in single-agent OCO is of order $O(\sqrt{T})$  after $T$ iterations, achievable by either online gradient descent (OGD) or follow-the-regularized-leader (FTRL) \cite{shalev2011online,hazan2019introduction}.
Interestingly, \cite{dekel2012optimal} and \cite{kamp2014communication} suggest that an average regret of $O(\sqrt{T/N})$, i.e., a factor of $\sqrt{1/N}$ speedup, can be obtained at each agent for multi-agent stochastic OCO, by performing  the synchronizations of local model predictions after each (or multiple) iteration. However,  the required synchronization  (for the model predictions) where all agents need to communicate until reaching consensus \cite{dekel2012optimal}  \cite{kamp2014communication}, incurs a significant communication burden, requiring $\Theta(\mathcal{Q}T)$ communication steps with $\mathcal{Q}$ being the diameter of the network, and hence inevitably suffers from  the latency which degrades the learning performance. \emph{In a nutshell, it remains unclear a priori if  distributed OCO algorithms  can achieve  significant improvement in terms of the average regret per agent, with only one communication step per iteration}.

The main contributions in this paper can be summarized as follows.
\begin{itemize}
    \item We propose a multi-agent online meta-learning framework to address the cold-start problem in single-agent online meta-learning, by leveraging the task similarity, i.e., the tasks follow some unknown distribution as in standard meta-learning \cite{finn2017model}, across multiple agents via limited communication. Along the line of the ARUBA framework introduced in \cite{khodak2019adaptive}, we treat the multi-agent online meta-learning as a two-level nested OCO problem, where the within-task adaptation and the meta-model update are formulated as a standard \emph{task-level} OCO problem and a distributed \emph{network-level} OCO problem across the multi-agent network, respectively.  
    
    \item We characterize the performance upper bound of multi-agent online meta-learning in terms of the \emph{agent-task-averaged regret}, and show that it heavily depends on how much an agent can benefit from the distributed network-level OCO for updating the meta-models through limited communication with its neighbors, which is  unclear a priori. To tackle this challenge, we further consider a distributed online gradient descent algorithm (DOGD-GT) with \emph{gradient tracking} \cite{qu2017harnessing,pu2020distributed}. We show that  by carefully tracking of the accumulated gradient consensus error through only limited communication among multiple agents,  the average  regret per agent can be significantly reduced to  $O(\sqrt{T/N})$ compared with the single-agent case,
    thus revealing a linear speedup of the learning performance.

    \item Building on the agent-level performance speedup benefiting from the multi-agent collaboration via gradient tracking in the distributed network-level OCO, we next propose a multi-agent online meta-learning algorithm called MAOML. It can be shown that each agent in MAOML can achieve a notable performance improvement in terms of the average regret per agent, i.e., approaching the optimal within-task regret at a faster rate of $O(1/\sqrt{NT})$ compared with the rate of $\Tilde{O}(1/\sqrt{T})$ in the single-agent online meta-learning ARUBA. 
    To the best of our knowledge, this is the first work to the address the cold-start problem by studying multi-agent online meta-learning under limited communication.
    
    \item We conduct extensive experiments on various datasets to demonstrate the performance of DOGD-GT and MAOML. The experimental results clearly indicate the improvement of MAOML over the single-agent online meta-learning in terms of the agent-task-averaged performance, corroborating the benefits of utilizing the task similarity across multiple agents through limited communication in both convex and nonconvex setups.

\end{itemize}

The rest of the paper is organized as follows. We present the related work in Section 2, and introduce the multi-agent online meta-learning framework in Section 3. In Section 4, we take a closer look to the distributed network-level OCO, and study the DOGD-GT algorithm. Building on the agent-level performance speedup achieved in the  distributed network-level OCO, we next propose a multi-agent online meta-learning algorithm MAOML in Section 5 with the performance analysis. The experimental study is presented in Section 6, followed by the conclusion in Section 7. 

\section{Related Work}

\emph{Online meta-learning.} Meta-learning has achieved great success in few-shot learning under the batch statistical setting  \cite{ravi2016optimization,finn2017model,nichol2018first}.
A gradient-based meta-learning algorithm called MAML is proposed in the seminal work \cite{finn2017model}, where a model initialization is learnt based on a lot of training tasks sampled from some task distribution, such that maximal performance at a new task can be achieved with the task-specific model quickly adapted from the model initialization via only one gradient descent step. To circumvent the need of Hessian computation in MAML, \cite{nichol2018first} studies a first-order meta-learning algorithm named Reptile.

Online meta-learning has recently received much attention. Particularly, \cite{finn2019online} extends the MAML algorithm \cite{finn2017model} to the online setting and proposes a follow-the-meta-leader algorithm. By applying the stochastic gradient descent to a proxy of true risk for a task based on a bias vector, \cite{denevi2019learning} proposes an online meta-algorithm by incrementally updating the bias when new tasks arrive, and quantifies the average excess risk bound. By building a decent connection between online meta-learning and OCO, \cite{khodak2019provable,khodak2019adaptive} study the gradient-based meta-learning algorithms in the framework of OCO. Moreover, \cite{denevi2019online} considers a general class of within-task learning based on primal-dual online learning, and \cite{zhuang2020no} extends the general online meta-learning  to the non-convex setting and evaluates the performance in terms of the local regret. 
In this paper, we make a first attempt to study the online meta-learning in a multi-agent scenario aiming to improve task-average performance.

\emph{Distributed OCO.} 
Distributed OCO \cite{yan2012distributed} in a multi-agent network has recently garnered much interest, where each agent first learns the model parameters  based on its local data and then communicates its local model information  with its neighbors. However, little attention has been paid to understand the impact of the network size on the average regret achievable at each individual agent therein.
To reap the potential benefits that an agent can achieve when carrying out distributed OCO, a convex loss function with both adversarial and stochastic components is considered in  \cite{zhao2019decentralized}. Assuming that the expected gradient is bounded above by $G$ and the stochastic variance is bounded above by $\sigma^2$, they have shown that the network expected regret is $O(\sqrt{N^2G^2T+NT\sigma^2})$. In contrast, \cite{dekel2012optimal} studies distributed OCO in a stochastic setup and proposes a distributed mini-batch algorithm, which leads to a network regret of order $O(\sqrt{NT})$, i.e., an agent-average regret of $O(\sqrt{T/N})$, indicating a possible linear speed-up of the average regret per  agent. However, there is a hidden cost associated with the needed synchronization  among all agents, required  at each iteration, 
which could incur a significant communication burden and learning performance degradation. To reduce the communication cost, a dynamic synchronization strategy is proposed in \cite{kamp2014communication} by reducing the frequency of synchronization, which however requires a central coordinator and still suffers from learning latency because of the  synchronization.



A  gradient-tracking based distributed OGD algorithm is considered in \cite{zhang2019distributed} for distributed OCO problem. However, the results therein are  different from ours, as outlined next: 1) \cite{zhang2019distributed} aims to show that the dynamic regret of distributed OCO has no explicit dependence on the time horizon, as in the centralized case, whereas we focus on characterizing the performance speedup by cleverly exploiting the limited multi-agent collaboration. 2) The results in \cite{zhang2019distributed}  rely on the assumption that the loss function  is strongly-convex, which is required even in the centralized case so as to remove the dependence on the time horizon. Since our focus is on the multi-agent speedup, strong-convexity is not a necessity and we  consider convex loss functions instead. 3) The dynamic regret defined in \cite{zhang2019distributed} cannot simply generalize to the problem setup in our setting, and a  non-trivial analysis of the regret bound is needed  to quantify the performance speedup.

\section{Multi-Agent Online Meta-Learning}

In this section, we first introduce the multi-agent online meta-learning framework, and cast it into an equivalent two-level nested OCO problem. By characterizing the upper bound of the agent-task-averaged regret, we show that the performance of the distributed network-level OCO for the meta-model update, is the bottleneck for the performance of multi-agent online meta-learning.


\subsection{Problem Formulation} 

As is standard in a multi-agent network, we assume that the agents communicate in an undirected and connected communication graph $\mathcal{G}=(\mathcal{V},\mathcal{E})$, where $\mathcal{V}\triangleq\mathcal{N}=\{1, ..., N\}$ is the set of vertices (agents) and $\mathcal{E}\subset \mathcal{V}\times \mathcal{V}$ is the set of edges connecting agents. Agent $i$ and $j$ can communicate with each other if and only if $(i,j)\in\mathcal{E}$. We further denote $\mathcal{N}_i=\{j|j\neq i, (i,j)\in\mathcal{E}\}$ as the set of neighbors of agent $i$. Each agent can  make its decision based on the local information and the information obtained from its neighbors via weighted averaging. To model this `weighting' process, a consensus weight matrix, $W=[w_{ij}]\in\mathbb{R}^{N\times N}$, is usually introduced with the following properties: 
\begin{itemize}
    \item For any $(i,j)\in\mathcal{E}$, we have $w_{ij}>0$; otherwise, $w_{ij}=0$. In particular, $w_{ii}>0$.
    \item Matrix $W$ is doubly stochastic, i.e., $\sum_{i'}w_{i'j}=\sum_{j'}w_{ij'}=1$ for all $i,j\in\mathcal{N}$.
\end{itemize}

In the multi-agent online meta-learning framework,
each agent $n\in\mathcal{N}$ faces with a sequence of online learning tasks $\mathcal{T}_{t,n}$ indexed by $t=1,...,T$, as illustrated in Figure \ref{Fig:maoml}. We assume that all agents are synchronized at the task level, i.e., new tasks arrive at all agents  at the same time. For each learning task $\mathcal{T}_{t,n}$, the agent $n$ must sequentially choose $m_{t,n}$ actions $\theta^i_{t,n}$ from some convex compact set $\Theta$ and incur loss $l^i_{t,n}:\Theta\rightarrow\mathbb{R}$ which is convex and Lipschitz, for $i\in[1,m_{t,n}]$. After learning one task, each agent would share learned model knowledge with its neighbors through one communication step to facilitate the learning of new tasks.

\begin{figure}
\centering
\includegraphics[scale=0.4]{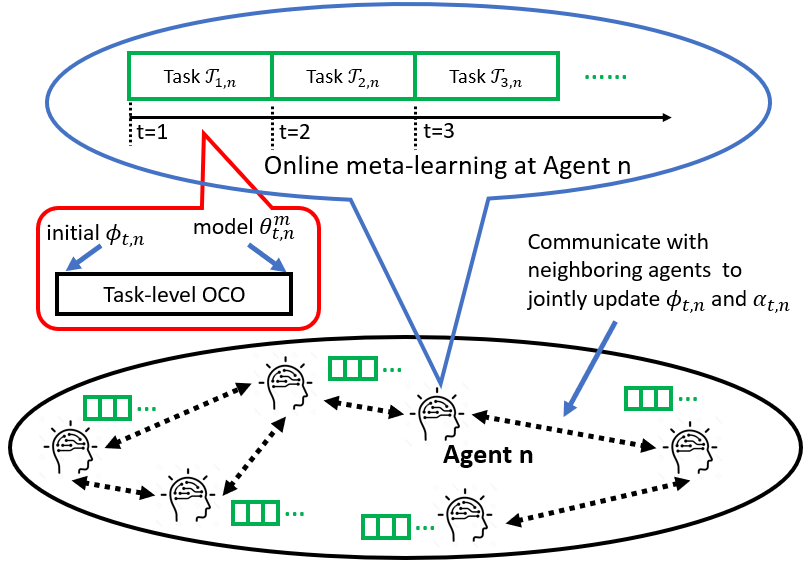}
\caption{The framework of multi-agent online meta-learning. Each agent in the multi-agent network has a sequence of online learning tasks, and it shares the learned model knowledge ($\phi_{t,n}$ and $\alpha_{t,n}$) with its neighbors to facilitate the learning of new tasks at time $t$.}
\label{Fig:maoml}
\end{figure}
Let $\theta^*_{t,n}$ denote the optimal model parameter for task $\mathcal{T}_{t,n}$, i.e., $\theta^*_{t,n}=\arg\min_{\theta\in\Theta}\sum_{i=1}^{m_{t,n}} l^i_{t,n}(\theta)$. Following the standard assumption in meta-learning \cite{finn2017model}, we assume that all the optimal model parameters $\theta^*_{t,n}$ for any $t\in[1,T]$ and $n\in\mathcal{N}$ follow some unknown distribution $\mathcal{P}_{\mathcal{T}}$, so as to capture the task similarity across the network.
In multi-agent online meta-learning, the agents aim to obtain good learning performance for each individual task. 
In the same spirit with \cite{khodak2019adaptive},  we study the  \textbf{agent-task-averaged regret} (ATAR) after each agent encounters $T$ tasks:
\begin{align*}
    \brn_a=\frac{1}{NT}\sum_{n=1}^N\sum_{t=1}^T\left(\sum_{i=1}^{m_{t,n}} l_{t,n}^i(\theta^i_{t,n})-\sum_{i=1}^{m_{t,n}} l^i_{t,n}(\theta^*_{t,n})\right).
\end{align*}
A low ATAR ensures that the individual task regret of an algorithm  is small on average over the network, compared to that of the optimal within-task parameter.
To this end, every agent in the network can
collaboratively learn, through limited communications with neighboring agents, the meta-models, i.e., \emph{a model initialization $\phi_{t,n}$ and a task-dedicated learning rate $\alpha_{t,n}$} by utilizing other agents' information, such that good within-task performance can be achieved with $\theta^i_{t,n}$ adapted from $\phi_{t,n}$ during the online meta-learning.

\subsection{Two-Level Nested OCO}

Based on the ARUBA framework \cite{khodak2019adaptive}, we treat the multi-agent online meta-learning 
as  a {\em two-level nested OCO} problem, and develop a theoretical framework for understanding the performance of multi-agent meta-learning through the lens of distributed OCO. For simplicity, we assume $m_{t,n}=m$, for any $t\in[1, T]$ and $n\in\mathcal{N}$.

\subsubsection{Task-level OCO} For the task $\mathcal{T}_{t,n}$ at the agent $n$, given the model initialization $\phi_{t,n}$ and within-task learning rate $\alpha_{t,n}$ learned jointly  based on the previous tasks, the agent seeks  to determine the action $\theta^i_{t,n}$ so as to minimize the within-task regret after $m$ rounds:
    \begin{align*}
        \brn_{t,n}=\sum\nolimits_{i=1}^m l_{t,n}^i(\theta^i_{t,n})-\sum\nolimits_{i=1}^m l^i_{t,n}(\theta^*_{t,n}).
    \end{align*}
 For a convex and $G$-Lipschitz loss function, it is well-known that the best upper bound for $\brn_{t,n}$ of online mirror descent (OMD),  regularized by Bregman divergence, is given as follows \cite{shalev2011online}:
    \begin{align}\label{within-upper}
        \brn_{t,n}\leq \frac{1}{\alpha_{t,n}}\mathcal{B}_R(\theta^*_{t,n}||\phi_{t,n})+\alpha_{t,n} G^2 m=\brh_{t,n},
    \end{align}
where for a  continuously-differentiable strictly convex function $g:\Theta\rightarrow \mathbb{R}$, the Bregman divergence is defined as
    \begin{align*}
        \mathcal{B}_R(\theta||\phi)=g(\theta)-g(\phi)-\langle \nabla g(\phi),\theta-\phi\rangle.
    \end{align*}
   This step corresponds to the within-task adaptation from the initial model $\phi_{t,n}$ using gradient descent regularized by the Bregman divergence, i.e., the inner loop of meta-learning.
   In order to use OCO for the meta-update of initial model $\phi_{t,n}$, we only consider the regularization as the set of Bregman divergence that is \emph{convex and smooth in the second argument}, i.e., $\mathcal{B}_R(\theta||\cdot)$ is convex and smooth for any fixed $\theta\in\Theta$. For example, when $g(\cdot)$ is the negative generalized entropy function defined for the expected loss of convex proper loss functions, the corresponding Bregman divergence satisfies the above condition \cite{painsky2019bregman}. The widely used $L_2$ regularization also satisfies this condition.

\subsubsection{Network-level OCO} Based on the definition of ATAR, it is clear that ATAR can be bounded above by the average of  $\{\brh_{t,n} \}$: 
    \begin{align}\label{upperATAR}
        \brn_a=\frac{1}{NT}\sum_{n=1}^N \sum_{t=1}^T \brn_{t,n}\leq \frac{1}{NT}\sum_{n=1}^N \sum_{t=1}^T\brh_{t,n}\triangleq\bbr_a,
    \end{align}
 which indicates that the ATAR is small if the average regret-upper-bound $\bbr_a$ is small. 
 Observe that each agent chooses one action pair ($\phi_{t,n}$, $\alpha_{t,n}$) and incurs the loss $\brh_{t,n}$ for each task $\mathcal{T}_{t,n}$. It follows that the outer loop of multi-agent online meta-learning, i.e., meta-update of the model initialization $\phi_{t,n}$ and the learning rate $\alpha_{t,n}$, can be cast as a distributed network-level  OCO among all $N$ agents.
 The objective here is to learn good meta-models ($\phi_{t,n}$, $\alpha_{t,n}$) for each agent via the multi-agent collaboration so as to minimize the following regret:
    \begin{align}\label{networkOCO}
        \brn_{out}=\frac{1}{NT}\sum_{n=1}^N\sum_{t=1}^T \left[\brh_{t,n}(\phi_{t,n},\alpha_{t,n})-\brh_{t,n}(\phi^*,\alpha^*)\right]
    \end{align}
where $(\phi^*,\alpha^*)=\arg\min \mathbb{E}_{\mathcal{P}_{\mathcal{T}}}[\brh_{t,n}(\phi,\alpha)]$. This distributed network-level  OCO enables the task-similarity to be learned on-the-fly, which is encapsulated in an adaptive learning rate by utilizing the information across the multi-agent network.

Note that the average regret-upper-bound $\bbr_a$ corresponds to the average loss in the distributed network-level  OCO for updating the meta-models. It is clear that $\bbr_a$ is small if the regret $ \brn_{out}$ is small for the distributed network-level  OCO, which consequently results in a small ATAR based on \eqref{upperATAR}. This is intuitive as the performance of online meta-learning directly depends on how good the meta-models are. In other words, if we could quickly learn good meta-models, i.e., the model initialization and learning rate, by utilizing the knowledge across the multi-agent network, good performance can be guaranteed for each task in online meta-learning, without the need of learning over many tasks at a single agent. Therefore, the problem of accelerating distributed online meta-learning  boils down to the problem of improving the performance per agent of distributed network-level  OCO, i.e., quickly learn good meta-models, via limited communication.

\section{Distributed Network-Level Online Convex Optimization}
\label{section3}

As alluded to earlier, it remains unclear a priori if any distributed OCO algorithms can achieve significant improvement in terms of the average regret per agent, with only one communication step per iteration. To tackle this challenge and also accelerate online meta-learning, we take a closer look to the distributed network-level OCO in this section, and devise a distributed OGD algorithm with gradient tracking.

For ease of exposition, we consider a more general formulation \cite{dekel2012optimal,hazan2012projection,chen2018projection,xie2020efficient} for the distributed network-level OCO \eqref{networkOCO}: 
In iteration $t$ the agent $i$ makes a local model prediction $x_{t,i}$ from a convex compact set $\mathcal{K}\subset\mathbb{R}^d$ and incurs convex loss $f_{t,i}(x_{t,i})$ that follows some unknown distribution $\mathcal{P}$, i.e., $f_{t,i}\sim \mathcal{P}$, for any $t$ and $i\in\mathcal{N}$. The stochastic assumption about the loss function corresponds to the underlying task distribution $\mathcal{P}_{\mathcal{T}}$ of meta-learning in an implicit manner.
The objective here is to make a sequence of  predictions $\{x_{t,i}\}$ given the knowledge of previous ones and possibly additional information so as to minimize the  average  regret (achieved at each agent) compared with the best predictor, given as:
\begin{align}\label{staticreg}
    \brn=\frac{1}{N}\left[\sum_{i=1}^N\sum_{t=1}^T f_{t,i}(x_{t,i})-\sum_{i=1}^N\sum_{t=1}^T f_{t,i}(x^*)\right],
\end{align}
where $x^*=\arg\min \mathbb{E}_{f_{t,i}\sim\mathcal{P}}[f_{t,i}(x)]$. 
Note that the above problem formulation is closely related to but different from the classical stochastic optimization  in the following sense \cite{dekel2012optimal}: Stochastic optimization is primarily concerned with finding the optimal solution efficiently,  for a given underlying model distribution.
In stark contrast,   for the (stochastic) online convex optimization, each agent  makes a sequence of decisions in a real-time manner   when new data arrives, and the objective is to make a sequence of model predictions that results in a small cumulative loss along the way. In this study, distributed OCO algorithms are devised to  reduce the average regret per agent with limited communication,  compared with the single agent case.

Since the regret depends on the distribution of   $f_{t,i}$, we focus on  the expected regret $\mathbb{E}[\brn]$, which is the same across agents because $\{f_{t,i}\}$ follow the same unknown distribution $\mathcal{P}$.  It is well known that in the centralized case  OGD can achieve the optimal regret $\mathbb{E}[\brn]=O(\sqrt{T/N})$ after totally $NT$ iterations are executed sequentially. In the distributed case where each agent runs OGD alone with no communication, it is clear that the regret $\mathbb{E}[\brn]$ at each agent has the order of $O(\sqrt{T})$, which is a factor of $\sqrt{N}$ worse than the centralized case. This performance gap points to the need of the collaboration among agents in order to obtain the optimal regret per agent.

\subsection{Distributed OGD with Gradient Tracking}


Gradient tracking has shown great potentials in distributed optimization to improve the convergence rate through the collaboration among agents  \cite{qu2017harnessing,pu2020distributed,tang2019distributed,li2020communication}. Particularly, by taking advantage of the smoothness of the local functions, an accurate estimation of the global gradient can be obtained as a better descent direction based on the history information, in contrast to gradient descent with local gradients. Nevertheless, the benefit of gradient tracking, especially the acceleration capability, is not well understood in distributed online learning where one cares about the learning process. To fully unleash the potential of gradient tracking, 
we explore a distributed OGD algorithm with gradient tracking (DOGD-GT) in order to achieve the performance speedup at each agent for distributed OCO, as outlined in Algorithm \ref{ogd_track}.

More specifically, an auxiliary variable $s_{t,i}$ is introduced for each agent to track the average gradients over the network by leveraging history information:
\begin{align*}
    s_{t,i}=\sum_{j\in\mathcal{N}_i} w_{ij}s_{t-1,j}+\nabla f_{t,i}(x_{t,i})-\nabla f_{t-1,i}(x_{t-1,i}),
\end{align*}
which serves as a more accurate estimation of the global gradient $\frac{1}{N}\sum_i\nabla f_{t,i}(x_{t,i})$, in contrast to the local gradient $\nabla f_{t,i}(x_{t,i})$.
As a result, the local model at each agent is updated based on $s_{t,i}$ using the gradient descent:
\begin{align*}
    x_{t+1,i}=\sum_{j\in\mathcal{N}_i} w_{ij}x_{t,j}-\eta s_{t,i}.
\end{align*}
Compared with the standard distributed OGD (DOGD) algorithms, DOGD-GT has the same order of the communication cost,  which is much smaller than that in the distributed mini-batch algorithm proposed in \cite{dekel2012optimal}, where additional consensus steps are needed in the network after every iteration.

\begin{algorithm}[H]
	\caption{Distributed OGD with gradient tracking}
	\label{ogd_track}
 	\begin{algorithmic}[1]
		  \State Initialize $x_{1,i}=0$  for all $i\in\mathcal{N}$;
		  \For{$t= 1, 2, ..., T$}
		    \For{each agent $i$}
		        \State Apply local model $x_{t,i}$ and incur loss $f_{t,i}(x_{t,i})$;
		        \State Compute gradient $\nabla f_{t,i}(x_{t,i})$;
		        \If{$t=1$}
		        \State Query the local model $x_{t,j}$ from  neighbors $j\in\mathcal{N}_i$; 
		        \State Compute $s_{t,i}=\nabla f_{t,i}(x_{t,i})$;
		        \Else
		        \State Query the local model $x_{t,j}$ and $s_{t-1,j}$ from neighbors $j\in\mathcal{N}_i$;
		        \State Update $s_{t,i}=\sum_{j\in\mathcal{N}_i} w_{ij}s_{t-1,j}+\nabla f_{t,i}(x_{t,i})-\nabla f_{t-1,i}(x_{t-1,i})$;
		        \EndIf
		        \State Update  $x_{t+1,i}=\sum_{j\in\mathcal{N}_i} w_{ij}x_{t,j}-\eta s_{t,i}$;
		    \EndFor
		\EndFor
	\end{algorithmic}
\end{algorithm}

\subsection{Performance Analysis}

We next quantify the performance speedup brought by the limited collaboration among agents in DOGD-GT. We first impose the following standard assumptions.


\begin{assumption}
Each $f_{t,i}(x)$ is convex and $L$-smooth. And there exists some constant $D$ such that $\mathbb{E}[\|\nabla f_{t,i}(x)\|^2]\leq D$.
\end{assumption}

\begin{assumption}
Let $F(x)=\mathbb{E}[f_{t,i}(x)]$. The stochastic gradient $\nabla f_{t,i}(x)$ has a $\sigma^2$-bounded variance, i.e., there exists a constant $\sigma\geq0$ such that
\begin{align*}
    \mathbb{E}[\|\nabla f_{t,i}(x)-\nabla F(x)\|^2]\leq \sigma^2.
\end{align*}
\end{assumption}

Let $\rho$ denote the spectral norm of $W-\frac{1}{N}\bone\bone^T$ where $\bone$ denotes an $N$-dimensional all one column vector, then $\rho\in (0,1)$. Moreover, it can be shown that \cite{qu2017harnessing}
\begin{align}
    \|Wx-\bone \Bar{x}\|\leq \rho \|x-\bone \Bar{x}\|
\end{align}
where $\Bar{x}=\frac{1}{N}\bone^T x$.

Let $\Bar{x}_t=\frac{1}{N}\sum_{i=1}^N x_{t,i}$, and $x_t=[x'_{t,1},x'_{t,2},\cdots,x'_{t,N}]'$.
To analyze the regret of DOGD-GT, 
we note that the techniques in stochastic optimization \cite{pu2020distributed} cannot be directly applied here, because  it is necessary to track the regret accumulated within the learning process instead of the optimality gap $\lim_{t\rightarrow \infty} (f_{t,i}(x_{t,i})-f_{t,i}(x^*))$.
In light of this, we decompose the regret into two parts: (a) the  regret $\sum_{i=1}^N\sum_{t=1}^T[f_{t,i}(x_{t,i})-f_{t,i}(\Bar{x}_t)]$ resulted from the consensus error among agents, and (b)  the regret $\sum_{i=1}^N\sum_{t=1}^T[f_{t,i}(\Bar{x}_t)-f_{t,i}(x^*)]$ accumulated over the iterations of $\Bar{x}_t$. 

For (a), we first have the following lemma to characterize the relationship between the regret and the consensus gap between model parameters.
\begin{restatable}{lemma}{lemmarestate}\label{lemma3}
Under Assumption 1,  the following inequality holds:
\begin{align*}
    \mathbb{E}\left[\sum_{i=1}^N\sum_{t=1}^T f_{t,i}(x_{t,i})-\sum_{i=1}^N\sum_{t=1}^T f_{t,i}(\Bar{x}_t)\right]\leq 2L\sum_{t=1}^T \mathbb{E}\left[\|x_t-\bone \Bar{x}_t\|^2\right].
\end{align*}
\end{restatable}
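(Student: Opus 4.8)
The plan is to bound the summand $f_{t,i}(x_{t,i})-f_{t,i}(\Bar{x}_t)$ for each pair $(t,i)$ using $L$-smoothness, split the resulting estimate into a linear (gradient) term and a quadratic term, and then show that the linear term vanishes in expectation once we sum over the agents $i$. The starting point is the descent inequality implied by $L$-smoothness of $f_{t,i}$ (Assumption 1),
\begin{align*}
f_{t,i}(x_{t,i})-f_{t,i}(\Bar{x}_t)\leq \langle \nabla f_{t,i}(\Bar{x}_t),\, x_{t,i}-\Bar{x}_t\rangle+\frac{L}{2}\|x_{t,i}-\Bar{x}_t\|^2.
\end{align*}
Summing over $i$ and $t$ and recalling the identity $\|x_t-\bone \Bar{x}_t\|^2=\sum_{i=1}^N\|x_{t,i}-\Bar{x}_t\|^2$, the quadratic term already contributes the desired form $\tfrac{L}{2}\sum_{t=1}^T\|x_t-\bone\Bar{x}_t\|^2$, which is dominated by the stated $2L\sum_{t=1}^T\|x_t-\bone\Bar{x}_t\|^2$. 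It therefore remains to dispose of the linear cross term.

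The crux is to show $\mathbb{E}\big[\sum_{i=1}^N \langle \nabla f_{t,i}(\Bar{x}_t),\, x_{t,i}-\Bar{x}_t\rangle\big]=0$ for every $t$. To this end I would appeal to the online protocol of Algorithm \ref{ogd_track}: the prediction $x_{t,i}$, and hence also $\Bar{x}_t=\tfrac1N\sum_{j} x_{t,j}$, is formed from the gradients observed strictly before time $t$, so both are measurable with respect to the history $\mathcal{F}_{t-1}$, whereas the loss $f_{t,i}\sim\mathcal{P}$ is drawn afresh and is independent of $\mathcal{F}_{t-1}$. Conditioning on $\mathcal{F}_{t-1}$ and using $\mathbb{E}[\nabla f_{t,i}(x)\,|\,\mathcal{F}_{t-1}]=\nabla F(x)$ for any $\mathcal{F}_{t-1}$-measurable $x$ (all $f_{t,i}$ share the common mean $F$), the conditional expectation of the cross term becomes $\langle \nabla F(\Bar{x}_t),\, x_{t,i}-\Bar{x}_t\rangle$. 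Summing over $i$ and invoking the centering identity $\sum_{i=1}^N (x_{t,i}-\Bar{x}_t)=0$, which is immediate from the definition of $\Bar{x}_t$, makes this sum equal to $\langle \nabla F(\Bar{x}_t),\, \sum_{i}(x_{t,i}-\Bar{x}_t)\rangle=0$. The tower property then yields the claimed vanishing, and combining with the quadratic bound and summing over $t$ completes the proof.

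The main obstacle is precisely this linear term: unlike the quadratic term it cannot be controlled pathwise in terms of the consensus error alone, since a crude Cauchy--Schwarz bound would instead introduce the gradient magnitude, of order $\sqrt{D}$, and destroy the bound. What rescues it is the interplay of two facts that must both be verified carefully: the conditional independence between the freshly drawn loss and the history-measurable predictions, and the exact zero-sum structure of the centered deviations $x_{t,i}-\Bar{x}_t$. The looser constant $2L$ in the statement, versus the $\tfrac{L}{2}$ produced above (or $L$ if one argues via convexity together with Lipschitzness of $\nabla f_{t,i}$ rather than the descent lemma), simply leaves slack for this standard smoothness step.
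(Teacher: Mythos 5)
Your proof is correct, but it takes a genuinely different route from the paper's. You expand each $f_{t,i}$ around the average point $\bar{x}_t$ and apply the descent lemma directly to the diagonal differences $f_{t,i}(x_{t,i})-f_{t,i}(\bar{x}_t)$; the resulting cross term carries \emph{agent-dependent} gradients $\nabla f_{t,i}(\bar{x}_t)$, which you kill by first conditioning on the history (unbiasedness replaces each by the common vector $\nabla F(\bar{x}_t)$) and only then invoking $\sum_{i}(x_{t,i}-\bar{x}_t)=0$. The paper expands in the opposite direction: it bounds $\frac{1}{N}\sum_i f_{t,i}(x)$ at an arbitrary point $x$ via smoothness around the local iterates $x_{t,i}$, lower-bounds the bracketed first-order terms by $\frac{1}{N}\sum_i f_{t,i}(\bar{x}_t)$ using \emph{convexity}, and its cross term $\langle g_t, x-\bar{x}_t\rangle$ involves the single common vector $g_t=\frac{1}{N}\sum_i \nabla f_{t,i}(x_{t,i})$, so after substituting $x=x_{t,j}$ and averaging over $j$ it vanishes \emph{pathwise} by the same centering identity, with no conditioning needed; expectation is then used there for a different purpose, namely to convert the off-diagonal evaluations $f_{t,i}(x_{t,j})$ into $F$ so that the diagonal regret is recovered. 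In short, both arguments rest on the same two structural facts—freshness of the time-$t$ losses relative to the history-measurable iterates, and the zero-sum of the centered deviations—but deploy them in transposed order, and your version is the more economical one: it needs no convexity, avoids the paper's slightly delicate step of substituting the random point $x_{t,i}$ into an inequality stated after taking expectations, and produces the sharper constant $L/2$ where the paper's bookkeeping yields $2L$. One small caveat: your unbiasedness step $\mathbb{E}[\nabla f_{t,i}(x)\,|\,\mathcal{F}_{t-1}]=\nabla F(x)$ formally leans on the definition of $F$ and the gradient--expectation interchange implicit in Assumption 2 (justified by $L$-smoothness and the bounded second moment), even though the lemma is stated under Assumption 1 alone; this is not a real gap, since the paper's own proof invokes $F$ and the same stochastic structure in exactly the same way.
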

The proofs for all Lemmas and Theorems in this work can be found in the appendix.
Next, we follow a similar way as in \cite{pu2020distributed} to build a linear system to bound the consensus error $E[\|x_{t,i}-\bar{x}_t\|^2]$. 
\begin{restatable}{lemma}{lemmarestatea}\label{lemma_3}
Let $\alpha=\frac{3+\rho^2}{4}$. Under Assumptions 1 and 2, the following inequality holds for some constant $A_1$ and $A_2$:
\begin{align*}
    &\sum_{t=1}^T \mathbb{E}[\|x_t-\bone \Bar{x}_t\|^2]
    \leq A_1\frac{\alpha-\alpha^T}{1-\alpha}+\|x_1-\bone\Bar{x}_1\|^2\\
    &+A_2\eta^2\frac{1+\rho^2}{1-\rho^2}[18\eta^2\sigma^2L^2+18N\eta^2L^2D+(1+\eta LN+N)\sigma^2]T.
\end{align*}
\end{restatable}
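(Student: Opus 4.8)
The plan is to control the two coupled consensus errors, namely the iterate error $a_t \triangleq \mathbb{E}[\|x_t-\bone\Bar{x}_t\|^2]$ and the tracker error $b_t \triangleq \mathbb{E}[\|s_t-\bone\Bar{s}_t\|^2]$, derive one-step inequalities that couple them, assemble these into a linear system with decay rate $\alpha=\frac{3+\rho^2}{4}$, and finally unroll and sum over $t=1,\dots,T$. First I would record the gradient-tracking invariant. Writing the updates in stacked form $x_{t+1}=Wx_t-\eta s_t$ and $s_{t+1}=Ws_t+\nabla f_{t+1}-\nabla f_t$ (with $W$ read as $W\otimes I_d$), left-multiplying by $\frac{1}{N}\bone^T$ and using double stochasticity gives $\Bar{x}_{t+1}=\Bar{x}_t-\eta\Bar{s}_t$ and, by induction from the initialization $s_1=\nabla f_1$, the key identity $\Bar{s}_t=\frac{1}{N}\sum_i\nabla f_{t,i}(x_{t,i})$. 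This lets me bound $\mathbb{E}[\|\Bar{s}_t\|^2]$ through the gradient-boundedness in Assumption 1.

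Next, for the iterate consensus error I would subtract $\bone\Bar{x}_{t+1}$ from $x_{t+1}$, use $(W-I)\bone=0$ together with the contraction $\|Wz-\bone\Bar{z}\|\le\rho\|z-\bone\Bar{z}\|$ to obtain $\|x_{t+1}-\bone\Bar{x}_{t+1}\|\le\rho\|x_t-\bone\Bar{x}_t\|+\eta\|s_t-\bone\Bar{s}_t\|$. Squaring and applying Young's inequality with the weight chosen precisely so that the $\rho^2$-coefficient becomes $\alpha=\frac{3+\rho^2}{4}$ produces $a_{t+1}\le\alpha a_t+c_1\eta^2 b_t$, where $c_1$ carries the $\frac{1}{1-\rho^2}$-type factor appearing in the statement; this explains both the geometric rate $\alpha$ and the $\frac{1+\rho^2}{1-\rho^2}$ prefactor. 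An identical manipulation on the tracker recursion, using $\|I-\frac{1}{N}\bone\bone^T\|\le1$, yields $\|s_{t+1}-\bone\Bar{s}_{t+1}\|\le\rho\|s_t-\bone\Bar{s}_t\|+\|\nabla f_{t+1}-\nabla f_t\|$, so $b_{t+1}$ is controlled once I bound $\mathbb{E}[\|\nabla f_{t+1}-\nabla f_t\|^2]$.

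Bounding this gradient-difference term is the heart of the argument and the step I expect to be the main obstacle. I would split $\nabla f_{t+1,i}(x_{t+1,i})-\nabla f_{t,i}(x_{t,i})$ into a smoothness part $\nabla f_{t+1,i}(x_{t+1,i})-\nabla f_{t+1,i}(x_{t,i})$, bounded by $L\|x_{t+1,i}-x_{t,i}\|$, and a stochastic ``function-change'' part $\nabla f_{t+1,i}(x_{t,i})-\nabla f_{t,i}(x_{t,i})$. Since $f_{t,i}$ and $f_{t+1,i}$ are i.i.d. from $\mathcal{P}$, the latter is centered about $\nabla F(x_{t,i})$ and contributes at most $2\sigma^2$ in expectation by Assumption 2; making this rigorous requires conditioning on the history, because $x_{t,i}$ is correlated with past draws, and it is exactly here that the independence of $f_{t+1,i}$ from the current iterate must be invoked. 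The displacement then feeds back through $x_{t+1}-x_t=(W-I)(x_t-\bone\Bar{x}_t)-\eta s_t$ and the orthogonal decomposition $\|s_t\|^2=\|s_t-\bone\Bar{s}_t\|^2+N\|\Bar{s}_t\|^2$, closing the loop and coupling $b_{t+1}$ back to $a_t$, $b_t$, and the constants $\sigma^2$, $D$, $N$.

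Finally, assembling the two inequalities into a linear system $v_{t+1}\le Mv_t+c$ (entrywise) with $v_t=(a_t,b_t)^\top$, I would verify that for $\eta$ small enough $M$ is an entrywise contraction dominated by the rate $\alpha$, with forcing vector $c$ collecting exactly the terms $18\eta^2\sigma^2L^2$, $18N\eta^2L^2D$, and $(1+\eta LN+N)\sigma^2$. Unrolling the resulting scalar recursion $a_{t+1}\le\alpha a_t+(\text{const})$ gives a homogeneous geometric contribution whose partial sums telescope to $A_1\frac{\alpha-\alpha^T}{1-\alpha}$ plus the initial term $\|x_1-\bone\Bar{x}_1\|^2$, while the constant forcing, accumulated across all $T$ iterations, contributes the $O(T)$ term with the stated variance and gradient-bound factors. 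Summing over $t=1,\dots,T$ then delivers the claimed bound.
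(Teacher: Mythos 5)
Your proposal is correct and, at the level of strategy, matches the paper's proof exactly: the same two coupled consensus errors $\mathbb{E}[\|x_t-\bone\Bar{x}_t\|^2]$ and $\mathbb{E}[\|s_t-\bone g_t\|^2]$ (your $\Bar{s}_t=g_t$ invariant is the paper's Lemma~4 of the appendix), the same $2\times2$ nonnegative linear system, the same spectral-radius-below-$\alpha$ contraction, and the same unroll-and-sum conclusion; indeed your Young step on the iterate recursion with $(1+\epsilon)\rho^2=\frac{1+\rho^2}{2}$, hence $1+1/\epsilon=\frac{1+\rho^2}{1-\rho^2}$, reproduces the paper's first row verbatim. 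Where you genuinely diverge is in the tracker recursion. The paper does \emph{not} discard the cross term $2\langle Ws_t-\bone g_t,\nabla_{t+1}-\nabla_t\rangle$: it bounds it in conditional expectation via two imported lemmas, $\mathbb{E}[\langle Ws_t-\bone g_t, G_t-\nabla_t\rangle\,|\,\mathcal{F}_t]\leq\sigma^2$ and $\mathbb{E}[\langle G_{t+1},G_t-\nabla_t\rangle\,|\,\mathcal{F}_t]\leq\eta LN\sigma^2$, which is precisely where the forcing terms $2\sigma^2+2\eta LN\sigma^2+2N\sigma^2$ come from, and it later makes the geometric decay explicit by diagonalizing the system matrix $\Tilde{A}=V\Lambda V^{-1}$. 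You instead project out the mean using $\|I-\frac{1}{N}\bone\bone^T\|\leq1$, Young away the cross term, and split $\nabla_{t+1}-\nabla_t$ into an $L$-smoothness displacement part (fed back through $x_{t+1}-x_t=(W-I)(x_t-\bone\Bar{x}_t)-\eta s_t$ and $\|s_t\|^2=\|s_t-\bone\Bar{s}_t\|^2+N\|\Bar{s}_t\|^2$ with $\mathbb{E}\|\Bar{s}_t\|^2\leq D$) and an i.i.d.\ redraw part with conditional second moment at most $2N\sigma^2$, the conditioning on $\mathcal{F}_t$ being valid because $x_{t,i}$ is $\mathcal{F}_t$-measurable while $f_{t,i},f_{t+1,i}$ are independent of $\mathcal{F}_t$. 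This is a more elementary route: it avoids the paper's inner-product bookkeeping entirely, at the cost of larger absolute constants, which is harmless since the lemma quantifies over $A_1,A_2$ and your forcing vector consists of exactly the stated monomials. Two details deserve tightening. First, ``entrywise contraction'' is not the right criterion: the $a_t$-coefficient in your $b$-row is of order $L^2/(1-\rho^2)$ and is never small; what must be checked (the paper's Lemma~5) is that the off-diagonal product is small relative to $(1-\rho^2)^2$, which is where the step-size restriction of the form $\eta=O\bigl((1-\rho^2)^{1.5}/L\bigr)$ enters. Second, bounding the first component of $M^p v_1$ by $C\alpha^p$ for a nonsymmetric $M$ needs a diagonalization or equivalent-norm argument, not just a scalar recursion; the paper carries this out explicitly. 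Both are gaps of detail rather than substance.
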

The challenge lies in the characterization of the convergence rate of the consensus error, which needs a careful manipulation and analysis of the coefficient matrices in the linear system. 

For (b), the key question is how to analyze this regret term without strong convexity. The techniques from \cite{qu2017harnessing} and \cite{pu2020distributed} cannot be applied, as the former considers that each agent has the same loss function in the entire learning process and the later assumes the strong convexity. To resolve this issue, we quantify both the optimaltiy gap at iteration $t+1$, i.e., $f_{t,i}(\bar{x}_{t+1})-f_{t,i}(x^*)$, and the one-iteration gap between iteration $t$ and iteration $t+1$, i.e., $f_{t,i}(\bar{x}_t)-f_{t,i}(\bar{x}_{t+1})$. In this way, we can characterize the relationship between the optimality gap and the consensus error, and bound the one-iteration gap by the norm of global gradients, which leads to the following result. 
\begin{restatable}{lemma}{lemmarestateb}\label{lemma7}
Under Assumptions 1 and 2, the following inequality holds:
\begin{align*}
&\mathbb{E}\left[\sum_{i=1}^N\sum_{t=1}^T f_{t,i}(\Bar{x}_t)-\sum_{i=1}^N\sum_{t=1}^T f_{t,i}(x^*)\right]\\
    \leq& \frac{4N\|\Bar{x}_1-x^*\|^2}{\eta}+26L\sum_{t=1}^T\mathbb{E}[\|x_t-\bone\Bar{x}_t\|^2]+\frac{N\eta}{2}\mathbb{E}[\|\nabla F(\Bar{x}_{T+1})\|^2]\\
    &+2\sigma^2\eta T+24L\mathbb{E}[\|x_{T+1}-\bone\Bar{x}_{T+1}\|^2].
    \end{align*}
\end{restatable}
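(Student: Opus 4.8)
The plan is to analyze the dynamics of the network-averaged iterate $\Bar{x}_t=\frac1N\sum_i x_{t,i}$, which plays the role of a virtual centralized trajectory. First I would establish the defining invariant of gradient tracking: since $W$ is doubly stochastic and $s_{1,i}=\nabla f_{1,i}(x_{1,i})$, a one-line induction gives $\frac1N\sum_i s_{t,i}=\frac1N\sum_i\nabla f_{t,i}(x_{t,i})=:g_t$ for every $t$. Averaging the update $x_{t+1,i}=\sum_j w_{ij}x_{t,j}-\eta s_{t,i}$ then collapses (again by double stochasticity) to the clean recursion $\Bar{x}_{t+1}=\Bar{x}_t-\eta g_t$, i.e. the average iterate performs exact gradient descent with the network-averaged stochastic gradient. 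With this in hand I decompose each per-agent, per-round term through the intermediate iterate $\Bar{x}_{t+1}$, writing $f_{t,i}(\Bar{x}_t)-f_{t,i}(x^*)=[f_{t,i}(\Bar{x}_{t+1})-f_{t,i}(x^*)]+[f_{t,i}(\Bar{x}_t)-f_{t,i}(\Bar{x}_{t+1})]$, and handle the two pieces separately.

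For the optimality gap $f_{t,i}(\Bar{x}_{t+1})-f_{t,i}(x^*)$ I would use convexity to bound it by an inner product against $\Bar{x}_{t+1}-x^*$, then replace the gradient at the local iterate $x_{t,i}$ by $g_t$ via $L$-smoothness (Assumption 1); the mismatch is controlled by the consensus error $\|x_t-\bone\Bar{x}_t\|^2$, exactly the quantity bounded in Lemma~\ref{lemma_3}. Summing over $i$ and invoking the three-point identity $\|\Bar{x}_{t+1}-x^*\|^2=\|\Bar{x}_t-x^*\|^2-2\eta\langle g_t,\Bar{x}_t-x^*\rangle+\eta^2\|g_t\|^2$ converts $\langle g_t,\Bar{x}_t-x^*\rangle$ into a telescoping difference of squared distances, so that summing over $t$ leaves the boundary term $\frac{N}{\eta}\|\Bar{x}_1-x^*\|^2$ (the prefactor being inflated to $4$ after the Young-type splittings needed to absorb the cross terms). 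What remains at this stage is an accumulated quadratic term proportional to $\eta\sum_t\|g_t\|^2$ that must still be controlled.

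The one-iteration gap $f_{t,i}(\Bar{x}_t)-f_{t,i}(\Bar{x}_{t+1})$ is precisely what tames this accumulated term: bounding it by smoothness along the step $\Bar{x}_t-\Bar{x}_{t+1}=\eta g_t$ produces a descent inequality for the expected objective $F$, and after taking conditional expectations (using $\mathbb{E}[\nabla f_{t,i}(x)\mid\mathcal{F}_{t-1}]=\nabla F(x)$ together with the $\sigma^2$-variance bound of Assumption 2 to replace $g_t$ by $\nabla F(\Bar{x}_t)$ up to variance and consensus error) the quantity $\sum_t\|g_t\|^2$ telescopes through the values $\mathbb{E}[F(\Bar{x}_t)]$. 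This telescoping cancels all intermediate gradient norms and leaves only the single boundary term $\frac{N\eta}{2}\|\nabla F(\Bar{x}_{T+1})\|^2$, while the accumulated variance produces $2\sigma^2\eta T$ and the accumulated smoothness mismatches produce the consensus terms $26L\sum_t\mathbb{E}[\|x_t-\bone\Bar{x}_t\|^2]$ along with the boundary term $24L\,\mathbb{E}[\|x_{T+1}-\bone\Bar{x}_{T+1}\|^2]$ (the latter arising because the final step of the telescoping has no successor to absorb it).

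I expect the main obstacle to be executing this argument \emph{without} strong convexity. The strongly convex analyses of \cite{qu2017harnessing,pu2020distributed} exploit a contraction to control $\sum_t\|g_t\|^2$ directly, a route unavailable here; the delicate part is therefore arranging the intermediate-iterate decomposition so that the smoothness descent step yields a genuine telescoping of $\mathbb{E}[F(\Bar{x}_t)]$. This is further complicated by the fact that the loss $f_{t,i}$ changes every round and $\Bar{x}_{t+1}$ depends on the current round's gradient, so the telescoping of function values must be carried out in expectation with careful conditioning. Finally, one must book-keep the three distinct error sources—the initialization gap $\|\Bar{x}_1-x^*\|^2$, the stochastic variance $\sigma^2$, and the $L$-smoothness-induced consensus error—with step-size-dependent coefficients tuned so that everything consolidates into the stated bound, including the unavoidable boundary terms at iteration $T+1$.
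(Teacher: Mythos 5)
Your high-level architecture does match the paper's proof: the averaged recursion $\Bar{x}_{t+1}=\Bar{x}_t-\eta g_t$ from double stochasticity and the tracking invariant, the decomposition of each term through the intermediate iterate $\Bar{x}_{t+1}$, the three-point expansion $\|\Bar{x}_t-x^*\|^2=\|\Bar{x}_{t+1}-x^*\|^2+2\eta\langle g_t,\Bar{x}_{t+1}-x^*\rangle+\eta^2\|g_t\|^2$ telescoped into \eqref{eq:before}, the smoothness descent bound on the one-iteration gap that leaves the boundary term $\frac{N\eta}{2}\mathbb{E}[\|\nabla F(\Bar{x}_{T+1})\|^2]$, and the variance/consensus splitting $\mathbb{E}[\|\nabla f_t(\Bar{x}_t)-\nabla F(\Bar{x}_t)\|^2]\leq\sigma^2/N$ together with $\|g_t-\nabla f_t(\Bar{x}_t)\|\leq\frac{L}{\sqrt{N}}\|x_t-\bone\Bar{x}_t\|$ are all exactly the paper's steps.

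However, there is a genuine gap at precisely the point you flag as delicate: your claim that the telescoping of $\mathbb{E}[F(\Bar{x}_t)]$ ``cancels all intermediate gradient norms'' is not a working mechanism. After Young's inequality on $\eta\langle\nabla F(\Bar{x}_{t+1}),g_t\rangle$, every round contributes a \emph{positive} leftover $\frac{\eta+\eta^2L}{2}\|g_t\|^2$, and the intermediate terms $\|\nabla F(\Bar{x}_t)\|^2$, $t\leq T$, are themselves reduced (via the variance and consensus splittings) back to further $\|g_t\|^2$ contributions; after all telescoping one is left facing $\frac{3\eta+\eta^2L}{2}\sum_{t=1}^T\mathbb{E}[\|g_t\|^2]$ with a positive sign, and no telescoping of function values removes it. The paper's resolution—the step your proposal is missing—is to \emph{retain} the negative term $-(\frac{\eta}{2}-\eta^2L)\sum_t\|g_t\|^2$ produced by the three-point telescoping in \eqref{eq:before}, and to combine it with the nonnegativity of the averaged agent suboptimality, $\sum_{t=1}^T\frac{1}{N}\sum_{i=1}^N\mathbb{E}[F(x_{t+1,i})-F(x^*)]\geq 0$ (passing from $F(\Bar{x}_{t+1})$ to $F(x_{t+1,i})$ costs additional consensus terms), which yields for $\eta<\frac{1}{4L}$ the key estimate $\sum_{t=1}^T\mathbb{E}[\|g_t\|^2]\leq\frac{2\|\Bar{x}_1-x^*\|^2}{\eta^2}+\frac{12L}{\eta N}\sum_{t=1}^{T+1}\mathbb{E}[\|x_t-\bone\Bar{x}_t\|^2]$. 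This serves as the surrogate for the strong-convexity contraction of \cite{qu2017harnessing,pu2020distributed}, and it is also the actual source of the stated constants that you attributed elsewhere: the prefactor $4N/\eta$ comes from adding $\frac{N(3\eta+\eta^2L)}{2}\cdot\frac{2\|\Bar{x}_1-x^*\|^2}{\eta^2}$ to $\frac{N\|\Bar{x}_1-x^*\|^2}{2\eta}$, and the coefficients $26L$ and $24L$ arise from $\frac{3\eta+\eta^2L}{2}\cdot\frac{12L}{\eta N}$ acting on the consensus sums up to $T+1$. Without this device your argument either stalls at an uncontrolled $\sum_t\mathbb{E}[\|g_t\|^2]$, or—if one crudely invokes $\mathbb{E}[\|g_t\|^2]\leq D$ from Assumption 1—produces a term of order $\eta DT$ in place of $2\sigma^2\eta T$, which happens to be of the right order under the step size of Theorem \ref{theorem1} but does not establish the lemma as stated.
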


Based on Lemma \ref{lemma3}, \ref{lemma_3} and \ref{lemma7}, 
we have the following result about the average regret per agent.

\begin{theorem}\label{theorem1}
Under Assumptions 1 and 2, when $\eta$ satisfies that
\begin{align*}
    \eta\leq \min\left\{\frac{(1-\rho^2)^{1.5}}{32L\sqrt{1+\rho^2}},\frac{1}{2L}\sqrt{\frac{N}{T}}\right\}
\end{align*}
with $N=o(T^{1/3})$, the DOGD-GT algorithm attains the following regret bound:
\begin{align*}
    \mathbb{E}[\brn] 
    =& O\left(\eta^2T+\frac{\eta T}{N}+\frac{1}{\eta}\right)
    = O\left(\sqrt{\frac{T}{N}}\right).
\end{align*}
\end{theorem}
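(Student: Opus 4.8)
The plan is to combine the three lemmas additively along the decomposition already set up in the text, namely $N\,\mathbb{E}[\brn]=\mathbb{E}\big[\sum_{i,t}(f_{t,i}(x_{t,i})-f_{t,i}(\bar{x}_t))\big]+\mathbb{E}\big[\sum_{i,t}(f_{t,i}(\bar{x}_t)-f_{t,i}(x^*))\big]$. First I would bound the consensus group by Lemma~\ref{lemma3} and the optimality-gap group by Lemma~\ref{lemma7}. Adding these, the two accumulated-consensus sums merge into a single $28L\sum_{t}\mathbb{E}[\|x_t-\bone\bar{x}_t\|^2]$, and the leftover terms are the initial gap $4N\|\bar{x}_1-x^*\|^2/\eta$, the global-gradient term $\tfrac{N\eta}{2}\mathbb{E}[\|\nabla F(\bar{x}_{T+1})\|^2]$, the variance term $2\sigma^2\eta T$, and a terminal consensus error $24L\,\mathbb{E}[\|x_{T+1}-\bone\bar{x}_{T+1}\|^2]$.

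Next I would feed Lemma~\ref{lemma_3} into the merged consensus sum. The initialization $x_{1,i}=0$ kills $\|x_1-\bone\bar{x}_1\|^2$, the geometric factor $\tfrac{\alpha-\alpha^T}{1-\alpha}$ is $O(1)$ since $\alpha=\tfrac{3+\rho^2}{4}\in(0,1)$, and what survives is the steady-state term proportional to $\eta^2 T\,[\,18\eta^2\sigma^2 L^2+18N\eta^2 L^2 D+(1+\eta LN+N)\sigma^2\,]$. Expanding this bracket and tracking the dependence on $(\eta,N,T)$, its dominant contribution is $\Theta(N\eta^2 T)$ coming from the $N\sigma^2$ piece; dividing the whole regret by $N$ turns this into the $\eta^2 T$ term. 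Likewise $2\sigma^2\eta T$ divided by $N$ yields $\eta T/N$, and $4N\|\bar{x}_1-x^*\|^2/\eta$ divided by $N$ yields $1/\eta$ (using compactness of $\mathcal{K}$ to bound $\|\bar{x}_1-x^*\|$). For the residual pieces I would apply Jensen's inequality with Assumption~1 to get $\|\nabla F(\bar{x}_{T+1})\|^2\le D$, so $\tfrac{N\eta}{2}D/N=O(\eta)$, and bound the single terminal consensus error by the steady-state scale $O(N\eta^2)$; both are lower order. This collapses the bound to $\mathbb{E}[\brn]=O(\eta^2 T+\eta T/N+1/\eta)$, with each term traceable respectively to gradient-tracking consensus error, stochastic variance, and the initial optimality gap.

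Finally I would optimize over $\eta$. The two genuinely $T$-growing terms $\eta T/N$ and $1/\eta$ balance at $\eta\asymp\sqrt{N/T}$, and the stated choice $\eta=\tfrac{1}{2L}\sqrt{N/T}$ makes each of them $\Theta(\sqrt{T/N})$. The remaining term $\eta^2 T$ equals $O(N)$ at this step size, and the hypothesis $N=o(T^{1/3})$, equivalently $N^3=o(T)$, is precisely what makes $O(N)=o(\sqrt{T/N})$, so the consensus term is strictly lower order and the rate is preserved; the same hypothesis suppresses the sub-dominant consensus residuals (scaling like $N^2/T$ and $N^{3/2}/\sqrt{T}$ after normalization). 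The first branch $\eta\le\tfrac{(1-\rho^2)^{1.5}}{32L\sqrt{1+\rho^2}}$ of the min is a $T$-independent constant, needed only to keep the linear system of Lemma~\ref{lemma_3} contractive (so its geometric factor stays $O(1)$); for $T$ large the optimal $\tfrac{1}{2L}\sqrt{N/T}$ lies below it, making the second branch active and giving $\mathbb{E}[\brn]=O(\sqrt{T/N})$.

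The main obstacle I expect is the bookkeeping in the middle step: isolating, among the several monomials in $\eta,N,T$ produced by the Lemma~\ref{lemma_3} bracket, the one that dominates and feeds the $\eta^2 T$ term, and then confirming that $N=o(T^{1/3})$ simultaneously suppresses every residual. In particular one must verify that it is the $N\sigma^2$ piece of the bracket — which after dividing the regret by $N$ is exactly the $\eta^2 T$ contribution evaluating to $O(N)$ — that dictates the $T^{1/3}$ threshold, whereas the smoothness/variance cross-terms impose only the weaker requirement $N=o(T^{3/5})$. Everything beyond this is routine substitution and order comparison.
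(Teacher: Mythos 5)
Your proposal is correct and follows essentially the same route as the paper's own proof: the identical $R_1+R_2$ decomposition, with Lemmas~\ref{lemma3} and \ref{lemma7} merging into the $28L\sum_{t}\mathbb{E}[\|x_t-\bone\bar{x}_t\|^2]$ term, Lemma~\ref{lemma_3} supplying the steady-state consensus error, and the same balancing $\eta\asymp\frac{1}{2L}\sqrt{N/T}$ with $N=o(T^{1/3})$ absorbing the dominant $N\sigma^2$-induced $\eta^2T$ contribution. The only nit is your closing aside: the $\eta^3T$ cross-term actually imposes $N=o(T^{1/2})$ rather than $o(T^{3/5})$ after normalization, but since both are weaker than the stated hypothesis, nothing in the argument is affected.
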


\begin{remark}
(1) 
Theorem \ref{theorem1} indicates that  each agent can achieve a factor of $\sqrt{1/N}$ speedup in terms of the average  regret $\mathbb{E}[\brn]$, through only one communication step per iteration by leveraging gradient tracking, compared to the case where a single agent can  achieve a regret of order $O(\sqrt{T})$ without collaboration with other agents. 
(2) The overall regret obtained by DOGD-GT, i.e., $O(\sqrt{NT})$, also matches the optimal regret in the centralized case where $NT$ iterations are processed sequentially. 
(3) Note that the learning rate $\eta$ requires the knowledge of the time horizon $T$, which however can be relaxed by applying a standard doubling trick \cite{cesa2006prediction}.
\end{remark}
\begin{remark}
The classical DOGD algorithm \cite{zhao2019decentralized} cannot achieve such  performance gain in the  setting here, because essentially DOGD performs a consensus step followed by a gradient descent along the local gradient $\nabla f_{t,i}(x_{t,i})$. For a fixed learning rate, DOGD only converges to a neighborhood of the optimizer $x^*$, because the local gradient is data-driven and hence random. Such an oscillation around $x^*$ slows down the convergence and results in a larger regret, calling for a more elegant consensus algorithm. This is also corroborated by the consensus schemes in the work on distributed multi-armed bandits \cite{landgren2016distributed,shahrampour2017multi}.
In contrast, gradient tracking provides an efficient way to communicate local estimations of the global gradient with the neighbors, and each agent is able to quickly construct a more accurate estimate of the global gradient $\frac{1}{N}\sum_i\nabla f_{t,i}(x_{t,i})$ with only one communication step per iteration as the information diffuses in the network until consensus. And the global gradient estimation clearly serves as a better direction than the local gradient no matter the stochasticity is in place or not, leading to a better regret bound.
\end{remark}

\section{MAOML}

Thanks to gradient tracking, the proposed DOGD-GT algorithm clearly showcases the potential for accelerating the learning process in distributed OCO through limited collaboration among agents. 
To reap the potential benefits, we next devise a multi-agent online meta-learning (MAOML) algorithm based on DOGD-GT, to mitigate the cold-start problem.

    \begin{algorithm}[H]
	\caption{MAOML}
	\label{maoml}
 	\begin{algorithmic}[1]
		  \State Initialize $\phi_{1,n}$ and $\alpha_{1,n}$ for all $n\in\mathcal{N}$;
		  \For{$t= 1, 2, ..., T$}
		    \For{each agent $n$}
		        \State Receive task $\mathcal{T}_{t,n}$ which would be learnt for $m$ rounds; 
		        \For{round $i\in[m]$}
		            \State Run online mirror descent with $\phi_{t,n}$ and $\alpha_{t,n}=\frac{v_{t,n}}{G\sqrt{m}}$ to obtain $\theta^i_{t,n}$; ~~//~~ \textbf{(within-task adaptation)}
		            \State Incur loss $l^i_{t,n}(\theta^i_{t,n})$;
		        \EndFor
		        \State Run DOGD-GT with all agents to update $\phi_{t+1,n}$ and $\alpha_{t+1,n}=\frac{v_{t+1,n}}{G\sqrt{m}}$;
		        ~~//~~ \textbf{(multi-agent meta-update of OMD initialization and learning rate)}
		    \EndFor
		\EndFor
	\end{algorithmic}
\end{algorithm}

As shown in \eqref{within-upper},  $\brh_{t,n}$ is a joint function for $\phi_{t,n}$ and $\alpha_{t,n}$, and it would be  easier to learn $\phi_{t,n}$ and $\alpha_{t,n}$ separately  \cite{khodak2019adaptive}. Specifically, the distributed network-level OCO can be decoupled as two separate distributed OCOs over the following two function sequences $\{f^{init}_{t,n}\}_{t,n}$ and $\{f^{rate}_{t,n}\}_{t,n}$ for every task at each agent:
\begin{align*}
    f^{init}_{t,n}(\phi)&=\mathcal{B}_R(\theta^*_{t,n}||\phi)G\sqrt{m},\\
    f^{rate}_{t,n}(v)&=\left(\frac{\mathcal{B}_R(\theta^*_{t,n}||\phi_{t,n})}{v}+v\right)G\sqrt{m}.
\end{align*}
In what follows, we make a few further remarks on the algorithm design:
\begin{itemize}
    \item Here for each agent $n$ at every task $\mathcal{T}_{t,n}$, the model initialization $\phi_{t,n}$ is updated based on DOGD-GT over the function $f^{init}_{t,n}$ for $\phi_{t,n}\in\Theta$, and the learning rate $\alpha_{t,n}=\frac{v_{t,n}}{G\sqrt{m}}$ where $v_{t,n}$ is updated based on DODG-GT over the function $f^{rate}_{t,n}$ for $v_{t,n}\geq \epsilon>0$. By assuming that  $\mathcal{B}_R(\theta||\phi)\leq H^2$ for any $\theta$, $\phi\in\Theta$, it is easy to check that $f^{rate}_{t,n}(v)$ is convex and $\frac{2H^2}{\epsilon^3}$-smooth for $v\in [\epsilon,\infty)$. 
    \item Note that although the global optimal $\phi^*$ and $\alpha^*$ exist for all tasks,
at each iteration $t$ different agents would have distinct model initialization $\phi_{t,n}$ and learning rate $\alpha_{t,n}$ for their current tasks. 
\item And for implementation, one can use the last iterate $\theta_{t,n}^m$ to replace the optimal $\theta^*_{t,n}$, which  incurs an additional $o(\sqrt{m})$ regret term only for many practical settings \cite{khodak2019adaptive}.
\end{itemize}
The details are summarized in Algorithm \ref{maoml}.

\subsection{Performance Analysis}

Based  on Theorem \ref{theorem1}, we  have the following result about the performance of MAOML.

\begin{theorem}\label{theorem2}
Suppose that  the model initialization $\phi_{t,n}$ and $v_{t,n}$ are updated based on DOGD-GT with
 $\alpha_{t,n}=\frac{v_{t,n}}{G\sqrt{m}}$. Then, the ATAR achieved by each agent in the multi-agent online meta-learning satisfies that
\begin{align*}
    \mathbb{E}[\brn_a]\leq \mathbb{E}[\bbr_a]= O\left(\frac{1+\frac{1}{V_{\phi}}}{\sqrt{NT}}+V_{\phi}\right)\sqrt{m}
\end{align*}
where $V_{\phi}^2=\min_{\phi\in\Theta}\mathbb{E}[\mathcal{B}_R(\theta^*_{t,n}||\phi)]$.
\end{theorem}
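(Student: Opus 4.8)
The plan is to bound $\mathbb{E}[\bbr_a]$ by reducing it to the two decoupled distributed OCOs over $\{f^{init}_{t,n}\}$ and $\{f^{rate}_{t,n}\}$ and then invoking Theorem \ref{theorem1} for each. First I would apply \eqref{upperATAR} to get $\mathbb{E}[\brn_a]\le\mathbb{E}[\bbr_a]$, and observe that under $\alpha_{t,n}=\frac{v_{t,n}}{G\sqrt{m}}$ the per-task surrogate loss in \eqref{within-upper} coincides with the rate loss, i.e.\ $\brh_{t,n}=f^{rate}_{t,n}(v_{t,n})=G\sqrt{m}\big(\frac{\mathcal{B}_R(\theta^*_{t,n}\|\phi_{t,n})}{v_{t,n}}+v_{t,n}\big)$, so that $NT\,\mathbb{E}[\bbr_a]=\mathbb{E}\big[\sum_{n,t}f^{rate}_{t,n}(v_{t,n})\big]$. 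Introducing a fixed comparator rate $v^*$, I would split this into the rate regret $\mathrm{Reg}^{rate}:=\mathbb{E}\big[\sum_{n,t}(f^{rate}_{t,n}(v_{t,n})-f^{rate}_{t,n}(v^*))\big]$ and the comparator value $\mathbb{E}\big[\sum_{n,t}f^{rate}_{t,n}(v^*)\big]$.

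For the two regret terms I would apply Theorem \ref{theorem1} to each sub-OCO. Since $\mathcal{B}_R(\theta\|\cdot)$ is convex, smooth and bounded, $f^{init}$ satisfies Assumptions 1--2, so the initialization played by DOGD-GT obeys $\mathrm{Reg}^{init}:=\mathbb{E}\big[\sum_{n,t}(\mathcal{B}_R(\theta^*_{t,n}\|\phi_{t,n})-\mathcal{B}_R(\theta^*_{t,n}\|\phi^*))\big]=O(\sqrt{NT})$, where $\phi^*$ is the minimizer defining $V_{\phi}$. Likewise, using that each $f^{rate}_{t,n}$ is convex and $\frac{2H^2}{\epsilon^3}$-smooth on $v\in[\epsilon,\infty)$, Theorem \ref{theorem1} yields $\mathrm{Reg}^{rate}=O(\sqrt{NT})\sqrt{m}$.

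Finally, I would expand the comparator value as $\mathbb{E}\big[\sum_{n,t}f^{rate}_{t,n}(v^*)\big]=G\sqrt{m}\big(\frac{1}{v^*}\mathbb{E}[\sum_{n,t}\mathcal{B}_R(\theta^*_{t,n}\|\phi_{t,n})]+NT v^*\big)$, and control the accumulated Bregman divergence through the initialization regret via $\mathbb{E}[\sum_{n,t}\mathcal{B}_R(\theta^*_{t,n}\|\phi_{t,n})]\le NT\,V_{\phi}^2+\mathrm{Reg}^{init}$, using $\mathbb{E}[\mathcal{B}_R(\theta^*_{t,n}\|\phi^*)]=V_{\phi}^2$. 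Choosing $v^*=V_{\phi}$ then gives $NT\,\mathbb{E}[\bbr_a]\le\mathrm{Reg}^{rate}+G\sqrt{m}\big(2NT V_{\phi}+\mathrm{Reg}^{init}/V_{\phi}\big)$; dividing by $NT$ and inserting the $O(\sqrt{NT})$ bounds for both regrets produces $O\big(V_{\phi}+\frac{1+1/V_{\phi}}{\sqrt{NT}}\big)\sqrt{m}$, as claimed.

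The hard part will be the coupling between the two sub-OCOs: the rate losses $f^{rate}_{t,n}$ depend on the initializations $\phi_{t,n}$ that are learned simultaneously, so $\{f^{rate}_{t,n}\}$ is not i.i.d.\ from a fixed distribution as Theorem \ref{theorem1} nominally assumes, and $v^*=V_{\phi}$ is not the minimizer of any single fixed expected loss. I would address this by (i) checking that the DOGD-GT guarantee holds against an arbitrary fixed comparator, not only against $\arg\min\mathbb{E}[f]$, and (ii) exploiting that $\mathcal{B}_R\le H^2$ uniformly and $f^{rate}$ is uniformly smooth on $[\epsilon,\infty)$, so the $\phi_{t,n}$-dependence enters only multiplicatively through bounded quantities and the $O(\sqrt{NT})$ rate survives. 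Carefully propagating $\mathrm{Reg}^{init}$ through the $1/v^*$ factor in the rate comparator value---rather than treating the two problems as independent---is the step that must be handled with the most care.
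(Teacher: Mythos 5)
Your proof is correct and follows essentially the same route as the paper's: bound $\mathbb{E}[\brn_a]$ by the rate-loss sum, apply Theorem \ref{theorem1} to the two decoupled sub-OCOs over $\{f^{init}_{t,n}\}$ and $\{f^{rate}_{t,n}\}$, propagate the initialization regret through the $1/v$ factor by swapping $\phi_{t,n}$ for $\phi^*$, and pick the comparator $v^*\approx V_\phi$ (the paper takes $v=\max\{V_\phi,\sqrt{\mathbb{E}[\brn^{init}(\phi^*)]/(GT\sqrt{m})}\}$, which only sharpens the small-$V_\phi$ regime and is not needed for the stated bound). The coupling you flag---that $f^{rate}_{t,n}$ depends on the learned $\phi_{t,n}$, so the rate losses are not i.i.d.\ as Theorem \ref{theorem1} nominally assumes---is equally present in the paper's own appeal to Theorem \ref{theorem1} and is passed over silently there, so your explicit comparator-robustness check is, if anything, more careful than the published argument.
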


To obtain a more concrete sense  about the performance improvement of MAOML, we compare it with the single-agent online meta-learning, i.e., $N=1$ (thus ignore the subscript $n$). In particular, we apply the general algorithm (Algorithm 1 therein)  \cite{khodak2019adaptive} to our setting here, which yields the following proposition.

\begin{proposition}\label{corollary2}
Suppose that the model initialization is updated based on $\phi_{t}=\frac{1}{t-1}\sum_{j=1}^{t-1} \theta^*_j$, and the learning rate $\alpha_t=\frac{v_t}{G\sqrt{m}}$ where $v_t$ is updated using simplified exponentially-weighted online-optimization (EWOO) \cite{hazan2007logarithmic} with parameter $\epsilon=\frac{1}{T^{1/4}}$. Then, the ATAR achieved by the single-agent online meta-learning satisfies that
\begin{align*}
    \mathbb{E}[\brn_a]\leq\mathbb{E}[\bbr_a]= \Tilde{O}\left(\min\left\{\frac{1+\frac{1}{V_{\phi}}}{\sqrt{T}},\frac{1}{T^{1/4}}\right\}+V_{\phi}\right)\sqrt{m}
\end{align*}
where $V_{\phi}^2=\min_{\phi\in\Theta}\mathbb{E}[\mathcal{B}_R(\theta^*_{t}||\phi)]$.
\end{proposition}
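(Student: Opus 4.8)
The plan is to specialize the single-agent instance ($N=1$) of the ARUBA analysis \cite{khodak2019adaptive} to the present notation. First I would invoke the reduction already established in \eqref{upperATAR}: since $\mathbb{E}[\brn_a]\le\mathbb{E}[\bbr_a]$ and, by \eqref{within-upper} with $\alpha_t=\tfrac{v_t}{G\sqrt m}$, the per-task upper bound reads $\brh_t=\big(\tfrac{B_t}{v_t}+v_t\big)G\sqrt m$ with $B_t\triangleq\mathcal{B}_R(\theta^*_t\|\phi_t)$, it suffices to bound $\mathbb{E}[\bbr_a]=\tfrac{G\sqrt m}{T}\sum_{t=1}^T\mathbb{E}\big[\tfrac{B_t}{v_t}+v_t\big]$ against the floor $\min_{\phi,v}\mathbb{E}[\tfrac{\mathcal{B}_R(\theta^*\|\phi)}{v}+v]=2V_\phi$. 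As in the MAOML construction I would then decouple this into two single-agent online problems, one for the initialization over $\{f^{init}_t\}$ and one for the rate over $\{f^{rate}_t\}$.

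Second, for the initialization I would observe that $\phi_t=\tfrac{1}{t-1}\sum_{j<t}\theta^*_j$ is exactly Follow-the-Leader on the losses $f^{init}_t(\phi)=\mathcal{B}_R(\theta^*_t\|\phi)G\sqrt m$, which are convex (and, for the admissible regularizers such as $L_2$, strongly convex) in the second argument. Invoking the standard FTL regret bound and taking expectation then controls $\tfrac1T\sum_t\mathbb{E}[B_t]$ in terms of $V_\phi^2=\min_\phi\mathbb{E}[\mathcal{B}_R(\theta^*\|\phi)]$ plus a vanishing average-regret term; because this initialization regret ultimately enters divided by the near-optimal rate $v\approx V_\phi$, it is responsible for the $\tfrac{1}{V_\phi}$ coefficient in the final bound.

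Third, for the learning rate I would analyze the simplified EWOO update on $f^{rate}_t(v)=\big(\tfrac{B_t}{v}+v\big)G\sqrt m$ restricted to $v\in[\epsilon,\infty)$. Using $\mathcal{B}_R(\theta\|\phi)\le H^2$ together with $v\ge\epsilon$, these functions are $\beta$-exp-concave with a curvature parameter governed by $\epsilon$, so the logarithmic-regret guarantee of EWOO \cite{hazan2007logarithmic} applies and yields $\sum_t f^{rate}_t(v_t)\le\min_{v\ge\epsilon}\sum_t f^{rate}_t(v)+\tilde O(1)$. Combining the two pieces, the comparator $\min_{v\ge\epsilon}\tfrac1T\sum_t(\tfrac{B_t}{v}+v)$ must be split according to whether the unconstrained optimizer $\sqrt{\bar B}$ (with $\bar B=\tfrac1T\sum_tB_t$) exceeds $\epsilon$: on the branch $\sqrt{\bar B}\ge\epsilon$ the value is $2\sqrt{\bar B}$, giving (by Jensen and $\sqrt{a+b}\le\sqrt a+\tfrac{b}{2\sqrt a}$) the floor $2V_\phi$ plus the $\tfrac{1+1/V_\phi}{\sqrt T}$ term, whereas on the branch $\sqrt{\bar B}<\epsilon$ the constraint $v\ge\epsilon$ binds and, with $\epsilon=T^{-1/4}$, produces the $T^{-1/4}$ fallback; the $\min\{\cdot,\cdot\}$ in the statement records taking whichever branch is tighter. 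The overall $\sqrt m$ prefactor is carried through from $\brh_t$.

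The main obstacle I anticipate is precisely this last trade-off: controlling the constrained learning-rate comparator when $V_\phi$ is small (so the unconstrained optimum would fall below $\epsilon=T^{-1/4}$), and simultaneously verifying the $\epsilon$-dependent exp-concavity constant needed for EWOO's logarithmic regret, so that the two regime-dependent estimates combine into the single $\tilde O\!\big(\min\{\tfrac{1+1/V_\phi}{\sqrt T},T^{-1/4}\}+V_\phi\big)\sqrt m$ bound. Since everything below the reductions is an $N=1$ specialization of Algorithm 1 of \cite{khodak2019adaptive}, I would otherwise import their accounting wholesale, taking expectations over the task draws from $\mathcal{P}_{\mathcal{T}}$ at the end via Jensen's inequality for the concave $\sqrt{\cdot}$.
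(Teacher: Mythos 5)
Your proposal is correct and takes essentially the same route as the paper, which proves this proposition simply by specializing Algorithm~1 / Theorem~3.1 of \cite{khodak2019adaptive} to $N=1$ without reproducing the internals: your reduction via \eqref{upperATAR}--\eqref{within-upper}, the observation that $\phi_t=\frac{1}{t-1}\sum_{j<t}\theta^*_j$ is Follow-the-Leader on $\{f^{init}_t\}$ (indeed the arithmetic mean minimizes a sum of Bregman divergences in the second argument), and the EWOO analysis balancing the truncation cost $\epsilon$ against the $\epsilon$-dependent regret to get the $T^{-1/4}$ branch with $\epsilon=T^{-1/4}$ are exactly the ARUBA accounting the paper imports wholesale. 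The details you fill in are consistent with that cited analysis, so no gap.
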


\begin{remark}
(1) It can be seen from Proposition \ref{corollary2} that for the single-agent online meta-learning, if $V_{\phi}$, the average deviation of $\theta^*_{t,n}$, is $\Omega_T(1)$, then the   ATAR approaches $O(V_{\phi}\sqrt{m})$ at rate $\Tilde{O}(1/\sqrt{T})$. In contrast, with the same number $T$ of online learning tasks, each agent in multi-agent online meta-learning can achieve a clear performance gain by utilizing the task similarity across multiple agents through the limited collaboration, i.e., the ATAR approaches $O(V_{\phi}\sqrt{m})$ at a faster rate of $O(1/\sqrt{NT})$.  Although we consider $m_{t,n}=m$ for simplicity, it is worth to note that the results still hold as long as all $\{m_{t,n}\}$ follow some distribution across all tasks.

(2) Moreover, the result shown in Theorem \ref{theorem2} also matches the optimal performance in the centralized case with $NT$ tasks in total. It is worth to note that, for the set $\Theta$ with diameter $H$, the single-task regret achieved by OGD is $O(H\sqrt{m})$, whereas in online meta-learning the optimal regret for each task is  smaller, i.e., $O(V_{\phi}\sqrt{m})$ when the optimal $\theta_{t,n}^*$ are close, especially for the few-shot setting of a small $m$ \cite{khodak2019adaptive}. 

(3) Built on joint learning of the model initialization and the learning rate from all past tasks, online meta-learning is intimately related to the regularization-based methods, particularly the prior-focused methods, in continual learning \cite{de2019continual}. This strong connection indicates that the multi-agent online meta-learning methods can be used to speed up learning in continual learning, in particular, few-shot continual learning where each task only has a few data samples.
\end{remark}

\section{Experiments}

In what follows, we present extensive experiments on both DOGD-GT and MAOML which corroborate the theoretic results in previous sections, respectively.

\begin{figure*}
    \centering
    \subfigure[Performance comparison in stochastic setup for $N=8$.]{\label{fig:a}\includegraphics[width=43.5mm]{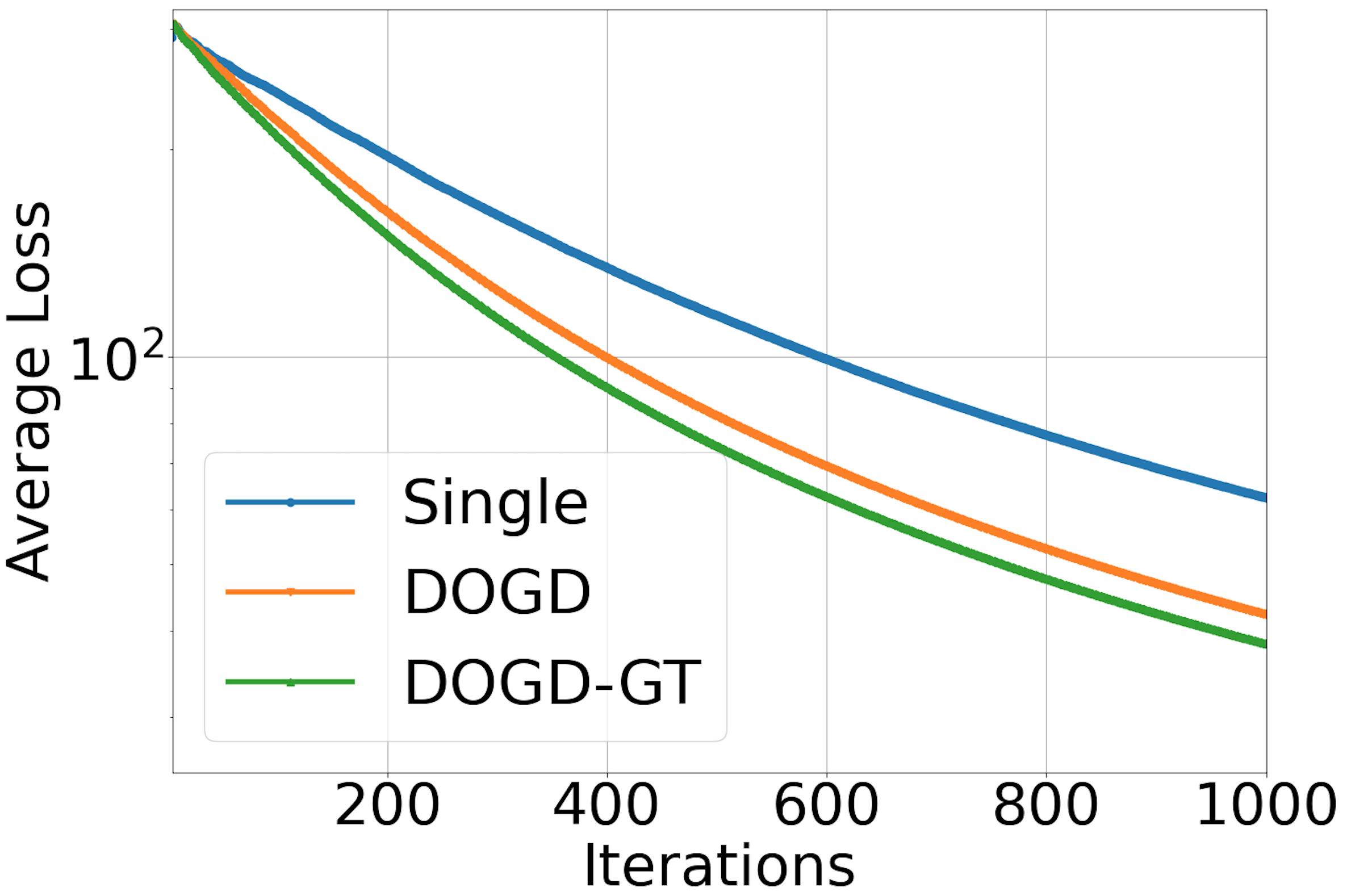}}
    \subfigure[Impact of $N$ in stochastic setup on DOGD-GT. ]{\label{fig:b}\includegraphics[width=43.5mm]{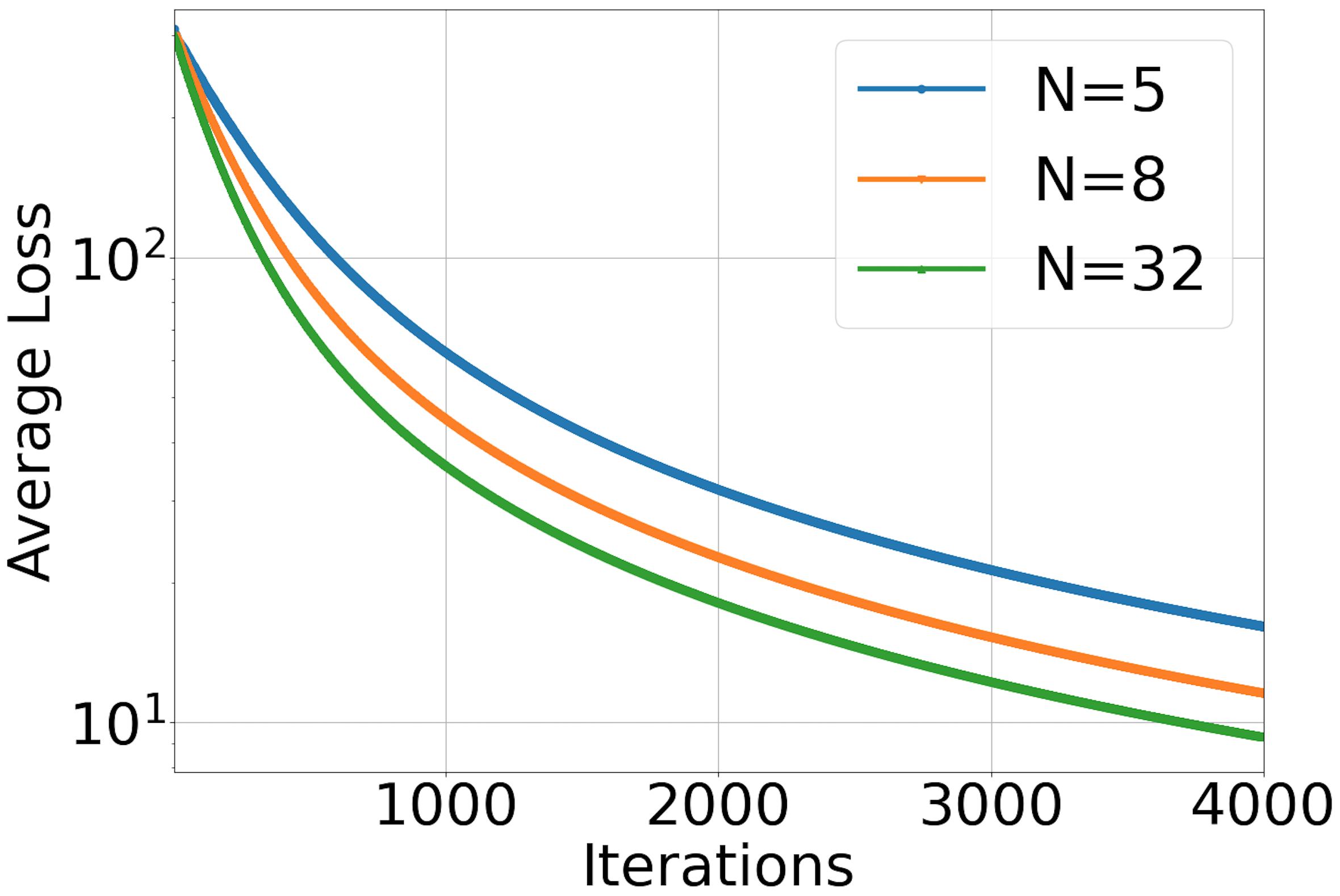}}
    \subfigure[Performance comparison in adversarial setup for $N=12$.]{\label{fig:c}\includegraphics[width=43.5mm]{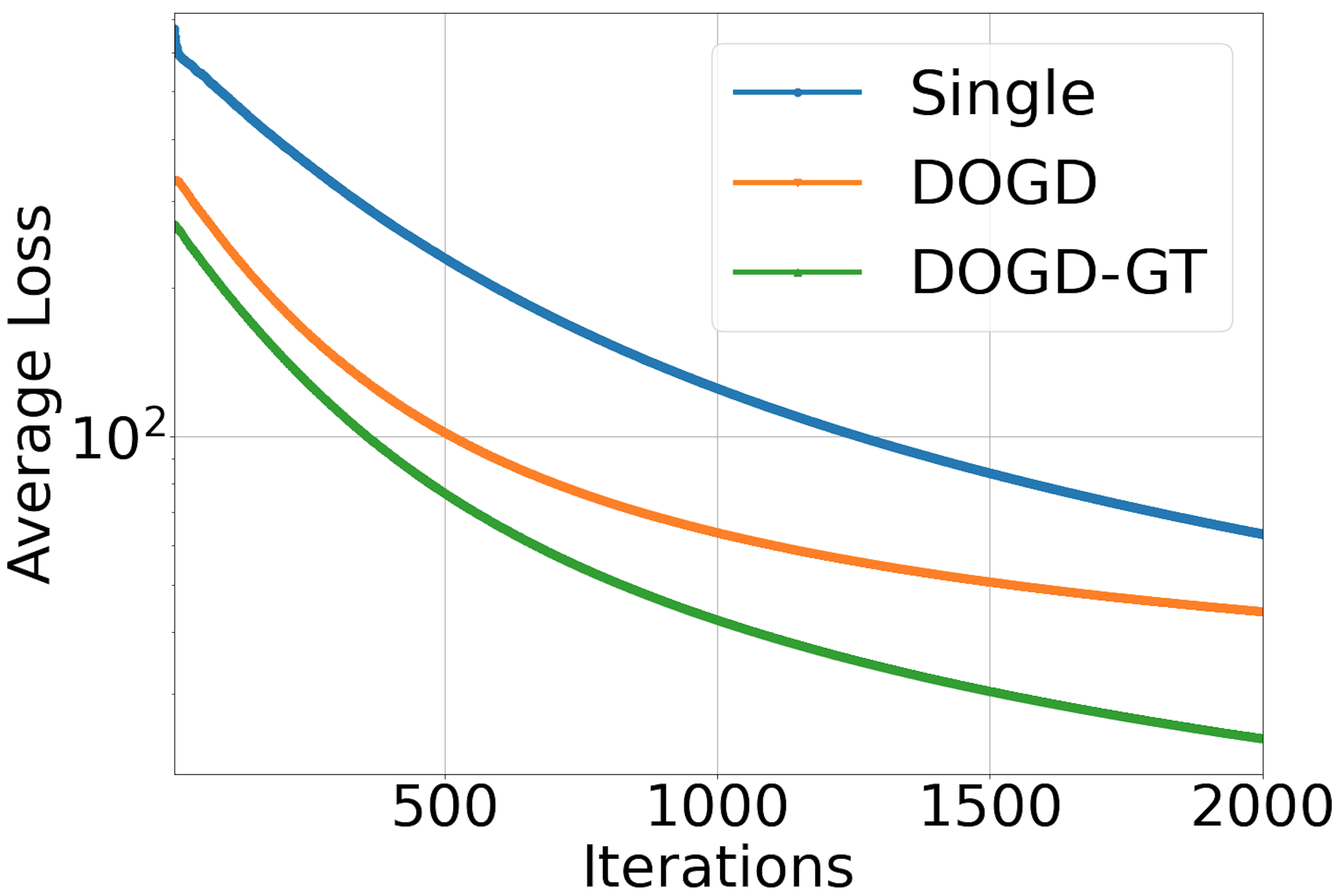}}
    \subfigure[Impact of $N$ in adversarial setup on DOGD-GT.]{\label{fig:d}\includegraphics[width=43.5mm]{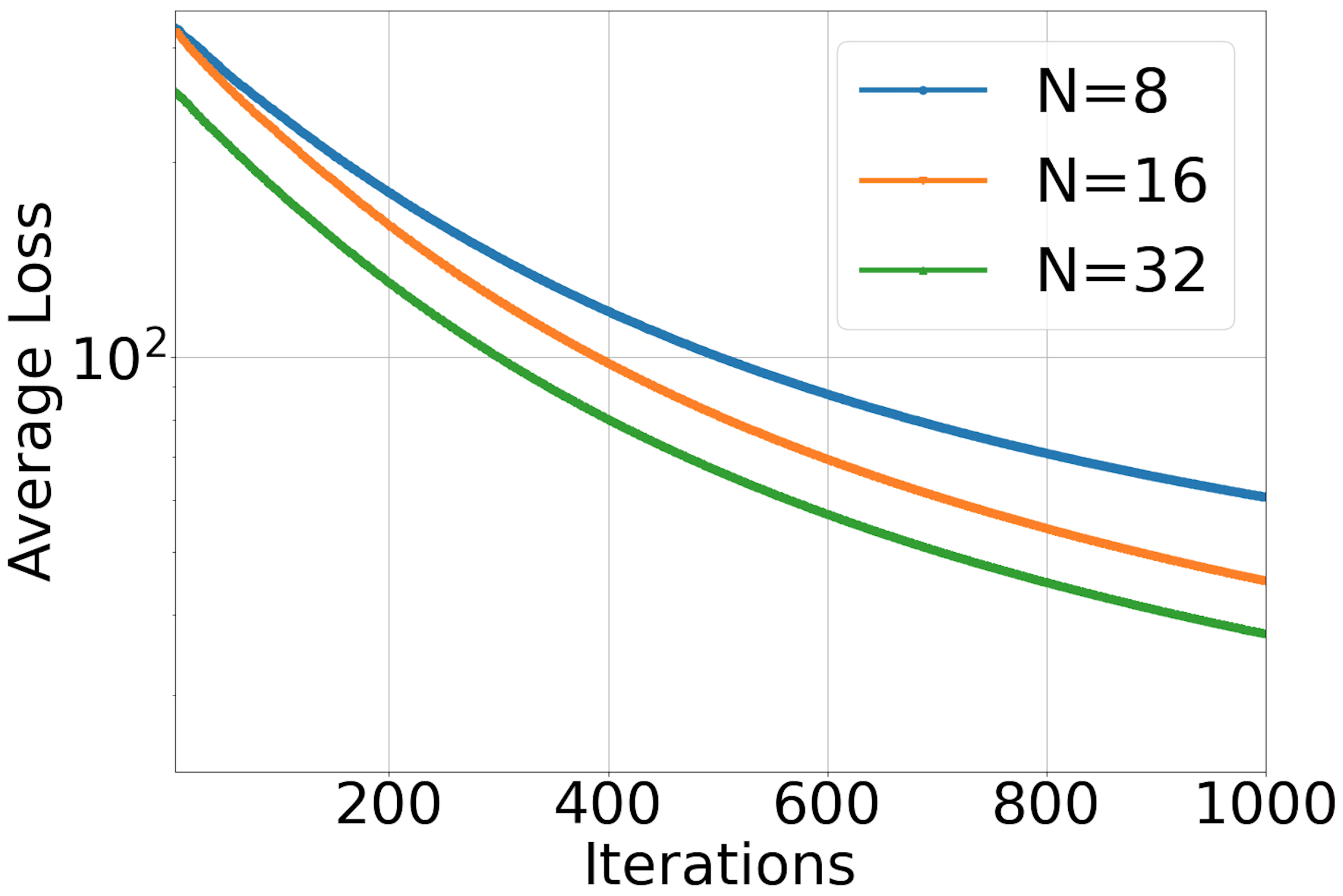}}
    \caption{Performance evaluations of DOGD-GT on synthesic data.}
    \label{fig:impact_conv}
\end{figure*}

\begin{figure*}
    \centering
    \subfigure[Performance comparison for 5-way 10-shot MNIST.]{\label{fig:ma}\includegraphics[width=44mm]{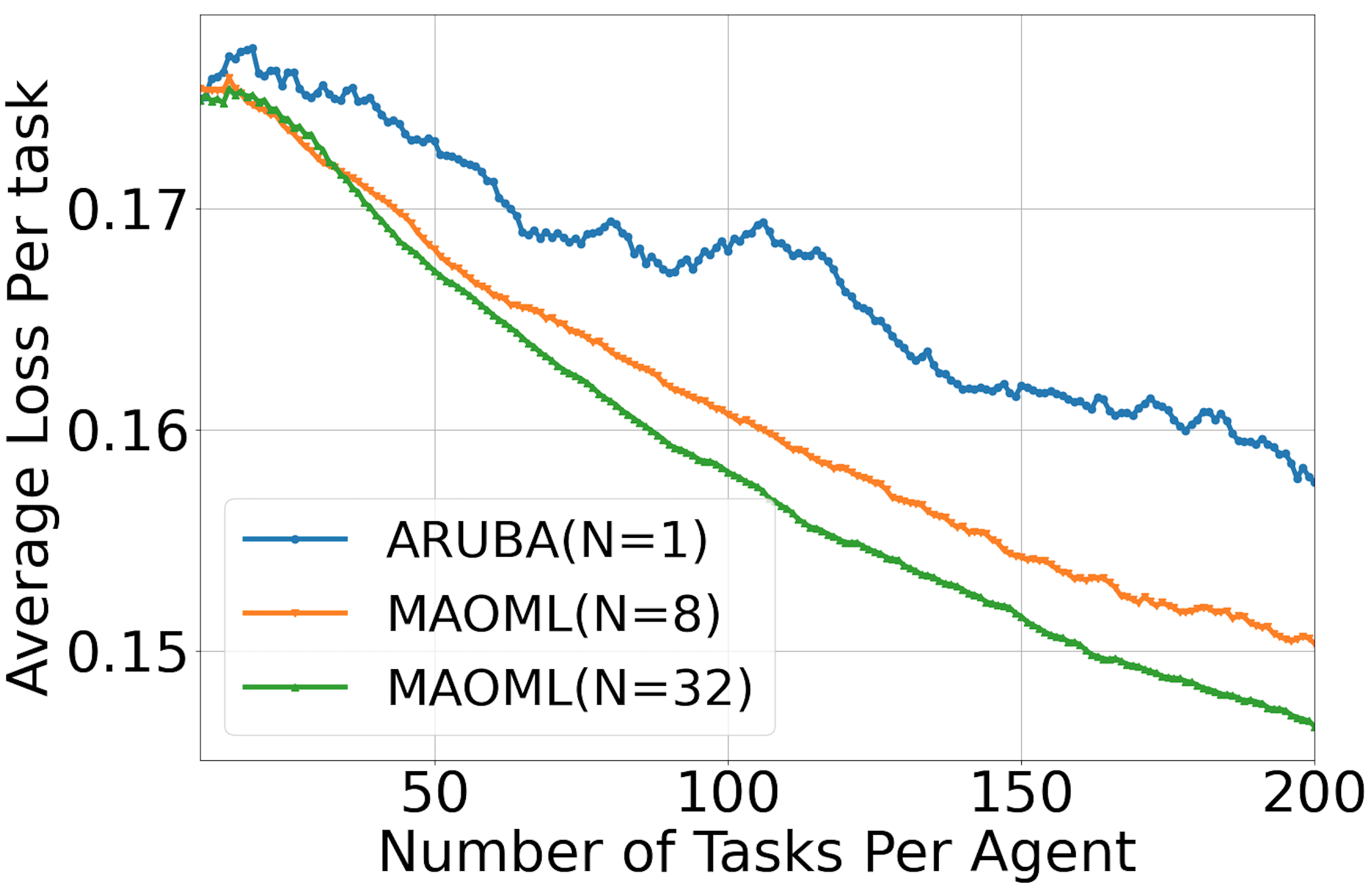}}
    \subfigure[Impact of $m$ on MAOML for 5-way 10-shot MNIST. ]{\label{fig:mb}\includegraphics[width=44mm]{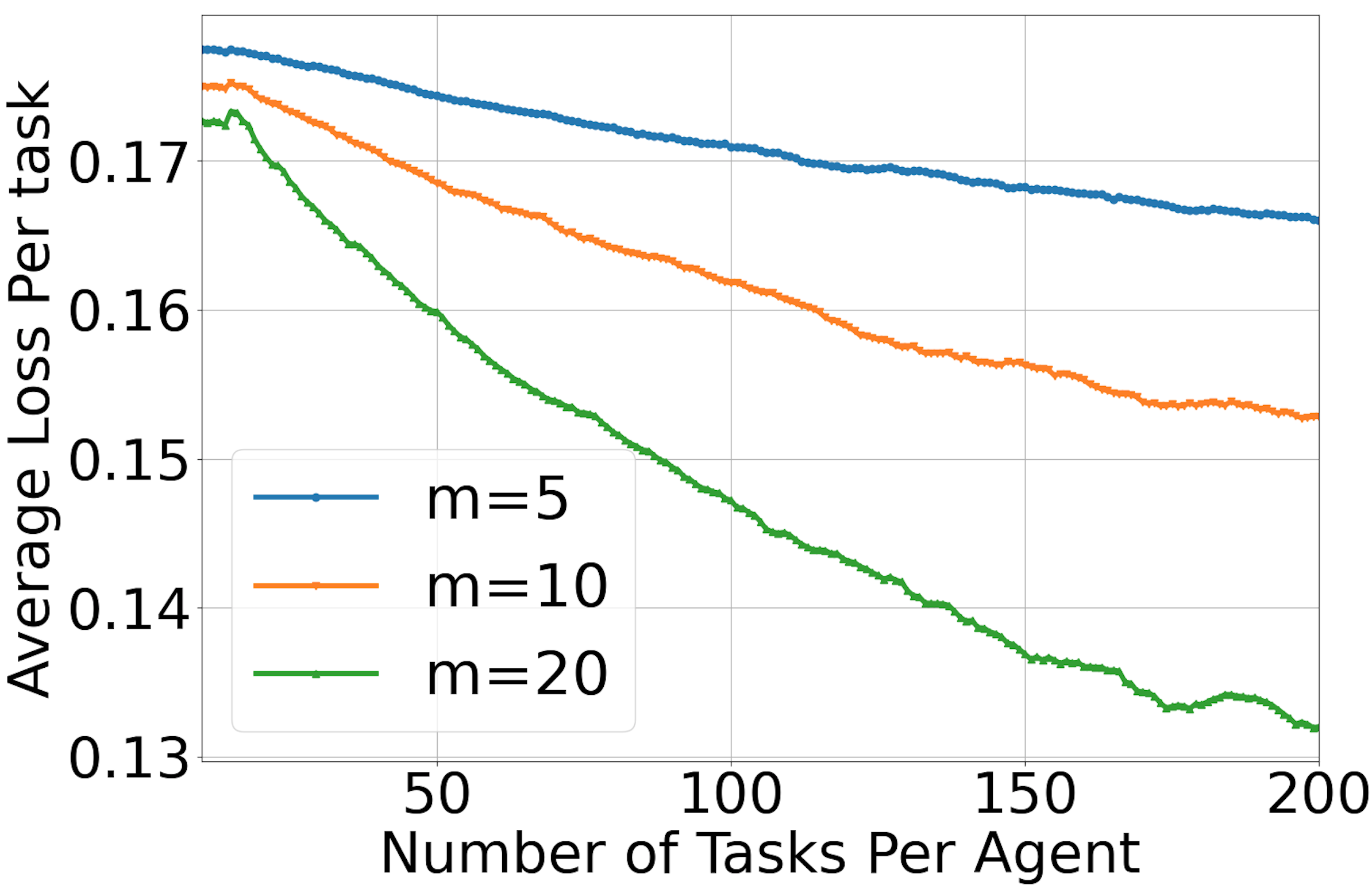}}
    \subfigure[Performance comparison for 5-way 5-shot Omniglot.]{\label{fig:mc}\includegraphics[width=43.1mm]{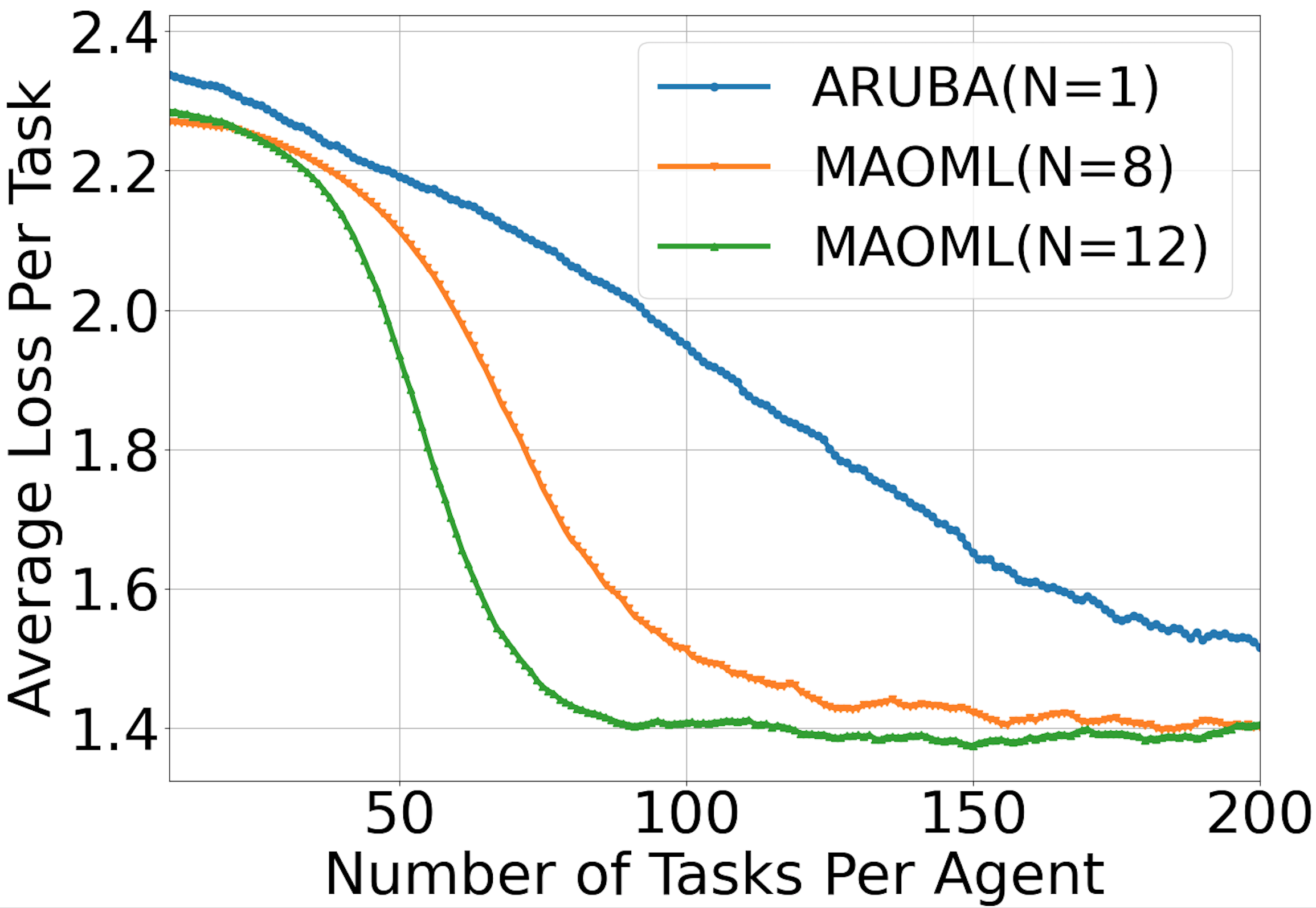}}
    \subfigure[Impact of $m$ on MAOML for 5-way 5-shot Omniglot. ]{\label{fig:md}\includegraphics[width=43.4mm]{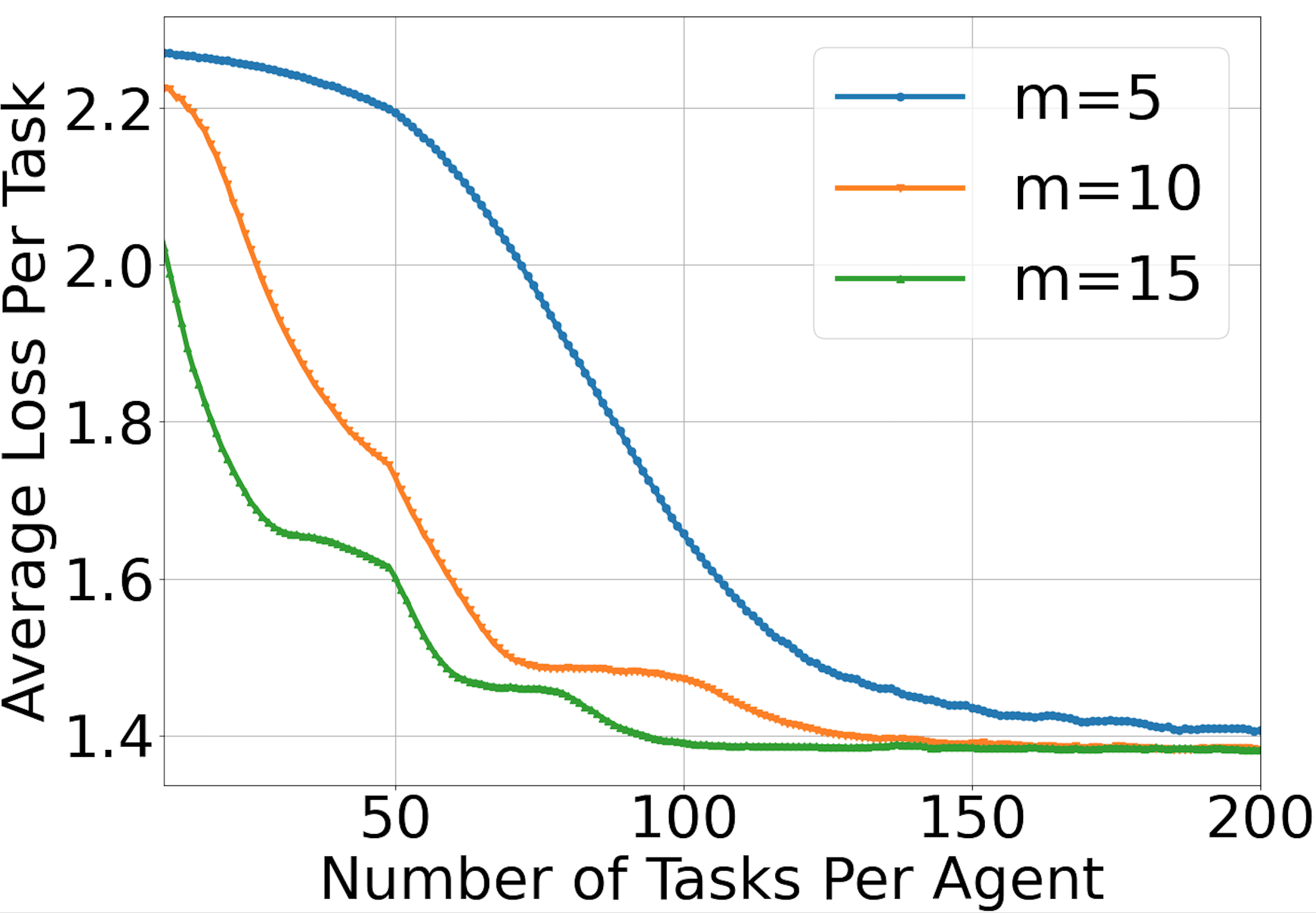}}
    \caption{Performance evaluations of MAOML on MNIST and Omniglot.}
    \label{fig:maoml}
\end{figure*}

We first introduce the setup of the communication graph for the multi-agent network. More specifically, we consider that $N$ agents communicate in a random network \cite{pu2020distributed,xie2020efficient}, where each two agents are linked with probability $0.5$ (discard the graphs that are not connected). And the weight matrix $W$ is defined based on the Metropolis rule \cite{sayed2014adaptive}:
\begin{align*}
    w_{ij}=\begin{cases}
    1/\max\{\deg(i),\deg(j)\} &\text{if}~j\in\mathcal{N}_i,\\
    1-\sum_{j\in\mathcal{N}_i}w_{ij} &\text{if}~i=j,\\
    0 &\text{otherwise},
    \end{cases}
\end{align*}
where $\deg(i)$ is the degree of agent $i$. We also consider a complete communication graph where all agents are connected with each other for evaluating the performance of MAOML.

\subsection{Performance of DOGD-GT}

As in \cite{qu2017harnessing, pu2020distributed}, we study the online Ridge regression problem, where each agent $i$ at each iteration $t$  incurs the following loss:
\begin{align*}
    f_{t,i}(x_{t,i})=(u_{t,i}^Tx_{t,i}-v_{t,i})^2+\rho\|x_{t,i}\|^2
\end{align*}
for a given model $x_{t,i}$ and the data sample $(u_{t,i},v_{t,i})$. Here $\rho>0$ is a penalty parameter.

In the experiments, each $u_{t,i}$ is uniformly sampled from $[0.3,0.4]^p$ with dimension $p$, and $v_{t,i}$ is generated according to $v_{t,i}=u_{t,i}^T\Tilde{x}_{t,i}+\epsilon_{t,i}$, where $\Tilde{x}_{t,i}$ is a predefined parameter, and $\epsilon_{t,i}$ are independent Gaussian random noises with mean $0$ and variance $0.5$. For completeness, 
we evaluate the performance of DOGD-GT in both stochastic and adversarial setups:
(1) \emph{Stochastic setup}: all $\Tilde{x}_{t,i}$ are the same in this case, set as a constant from $[0,5]^p$;
(2) \emph{Adversarial setup}:  $\Tilde{x}_{t,i}$ are randomly and independently located in $[0,10]^p$ in this case.
Moreover, $\rho=0.001$, $p=10$, and the learning rate $\eta=0.001$. We evaluate the average learning performance by measuring the average loss $\frac{1}{NT}\sum_{i=1}^N\sum_{t=1}^T f_{t,i}(x_{t,i})$ as in \cite{zhao2019decentralized} over multiple simulations.

To demonstrate the performance gain achieved by gradient tracking, we compare the performance of DOGD-GT with both DOGD and the single agent approach. 
Clearly, as shown in Figure \ref{fig:a} and \ref{fig:c}, DOGD-GT outperforms DOGD and the single agent approach in both stochastic and adversarial setups, indicating the benefits brought by collaborating with neighbors to track the global gradient via limited communication.  We also evaluate the impact of the network size $N$ on the performance of DOGD-GT. As expected, it can be seen from Figure \ref{fig:b} and \ref{fig:d} that the learning performance improves with $N$ for both stochastic and adversarial setups, validating the results in Theorem \ref{theorem1}. 

 To further validate the performance of DOGD-GT, we  study the online multiclass logistic regression on the MNIST dataset. For a batch $\mathcal{B}_{t,i}$ of data samples $j=(u_{t,i}^j,v_{t,i}^j)\in \mathbb{R}^d\times \{0,...,9\}$ where $u_{t,i}^j$ is the feature  and $v_{t,i}^j$ is the label, the logistic loss function for $x_{t,i}\in\mathbb{R}^{d\times 10}$ is defined as:
    \begin{align*}
        f_{t,i}(x_{t,i})=\frac{-1}{|\mathcal{B}_{t,i}|}\sum_{j\in\mathcal{B}_{t,i}}\sum_{v=0}^9 \bone\{v_{t,i}^j=v\}\log\frac{\exp(x_{t,i}^T u_{t,i}^j)}{\sum_{k=0}^9 \exp(x_{t,i}^T(k)u_{t,i}^j)}.
    \end{align*}
Specifically, each batch $\mathcal{B}_{t,i}$ is randomly and independently sampled from the entire MNIST dataset. We set $d=784$ and $\eta=0.0001$. As shown in Figure \ref{fig:amnist}, DOGD-GT outperforms both DOGD and the single agent case. When $N$ increases, the performance of DOGD-GT will be better, as illustrated in Figure \ref{fig:bmnist}.

\begin{figure}
    \centering
    \subfigure[Performance comparison for $N=8$.]{\label{fig:amnist}\includegraphics[width=85mm]{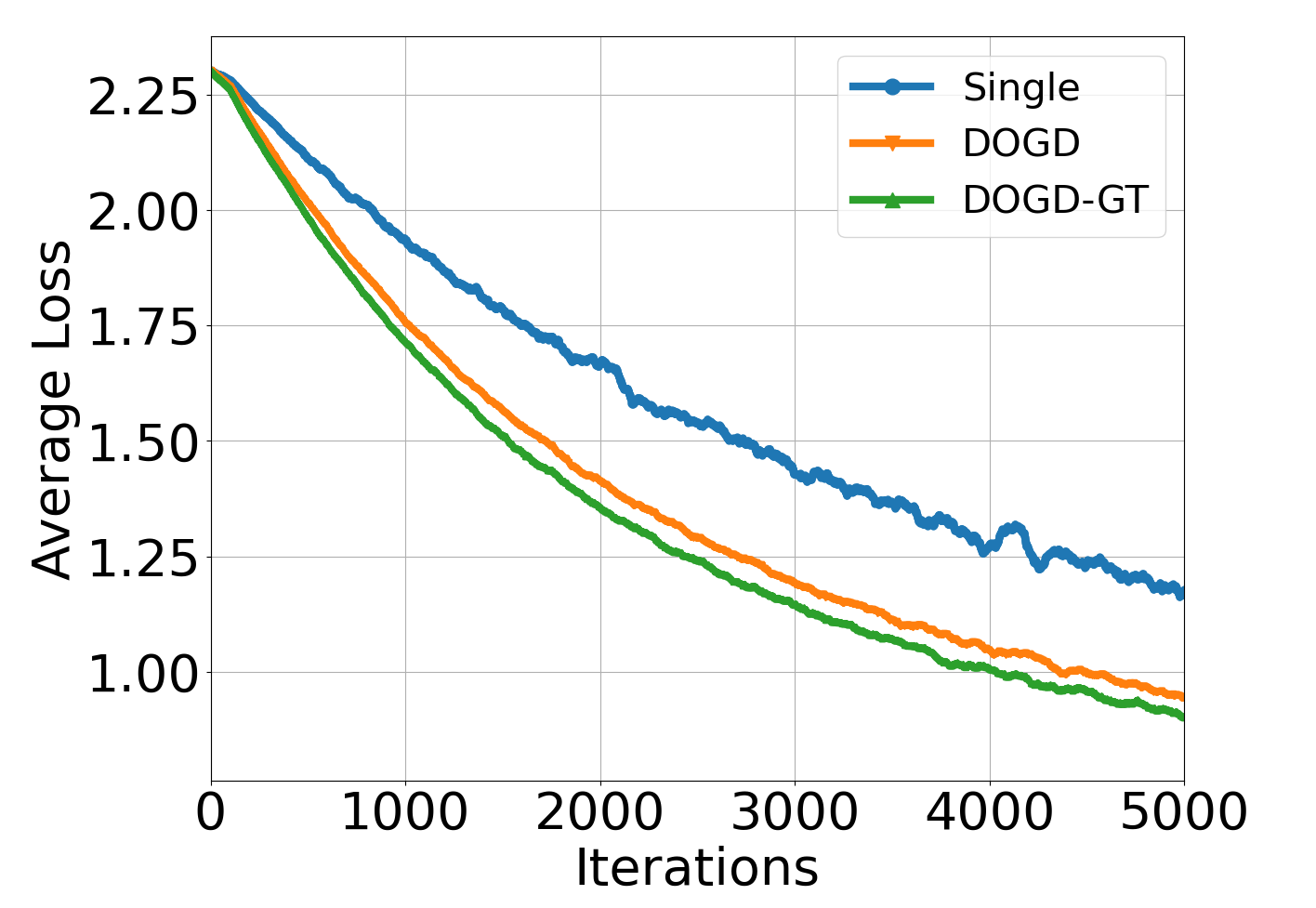}}
    
    \subfigure[Impact of $N$ on DOGD-GT. ]{\label{fig:bmnist}\includegraphics[width=85mm]{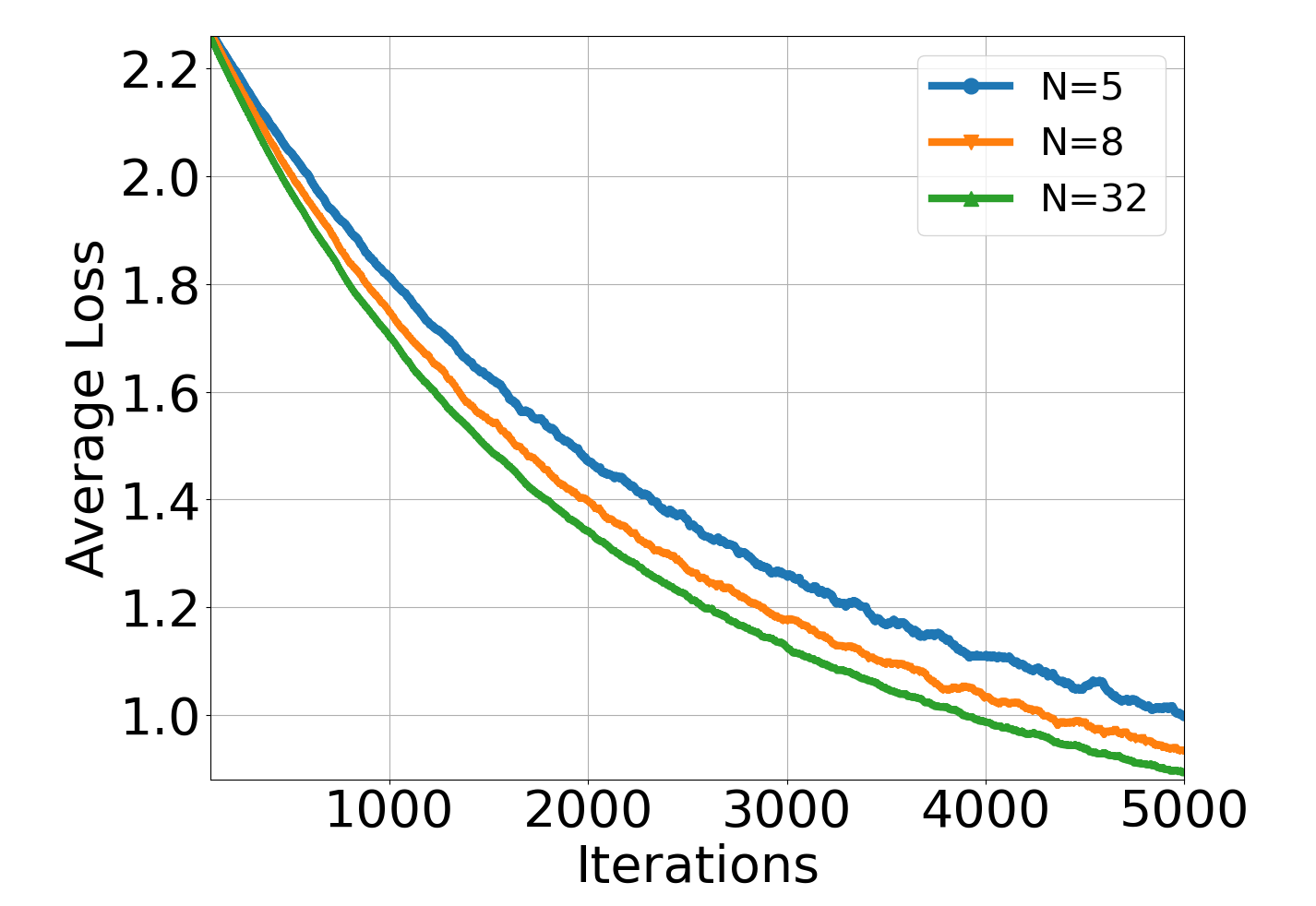}}
    \caption{Performance evaluation of DOGD-GT on MNIST.}
    \label{fig:maoml_mnist}
\end{figure}

\subsection{Performance of MAOML}

Next, we evaluate the performance of MAOML in both convex and nonconvex setups: (1) \emph{Convex setup}: we consider the online multiclass logistic regression \cite{xie2020efficient} on the MNIST dataset as an online learning task $\mathcal{T}_{t,n}$ at each agent. For a batch $\mathcal{B}^i_{t,n}$ of data points $j=(u_{t,n}^j,v_{t,n}^j)\in \mathbb{R}^d\times \{0,...,9\}$ where $u_{t,n}^j$ is the feature  and $v_{t,n}^j$ is the label, the logistic loss function for $\theta\in\mathbb{R}^{d\times 10}$ is defined as:
    \begin{align*}
        l_{t,n}^i(\theta)=\frac{-1}{|\mathcal{B}^i_{t,n}|}\sum_{j\in\mathcal{B}^i_{t,n}}\sum_{v=0}^9 \bone_{\{v_{t,n}^j=v\}}\log\frac{\exp(\theta^T u_{t,n}^j)}{\sum_{k=0}^9 \exp(\theta^T(k)u_{t,n}^j)}.
    \end{align*}
And for each agent we consider 5-way 10-shot classification with the dataset randomly sampled from the entire dataset.
 (2) \emph{Nonconvex setup}: we study 5-way 5-shot classification on Omniglot \cite{lake2011one} as the online learning task $\mathcal{T}_{t,n}$ using a deep neural network (DNN). The DNN architecture for each task consists of two \(2D\) convolutional layers (first with \(6\) output channels and second with \(16\) output channels) with kernel sizes \(5\times 5\). Each convolution operation is followed first by \(\mathrm{ReLu}\) non-linearity, and then by \(2D\) max-pooling operation with stride of \(2\). The final layer is a fully connected layer with input of size \(16\times 4\times 4\) and output of size \(10\). We deploy the cross entropy to quantify the loss with respect to a single sample.
In the experiments, we evaluate the average learning performance by measuring the average loss $\frac{1}{NTm}\sum_{n=1}^N\sum_{t=1}^T\sum_{i=1}^m l_{t,n}^i(\theta_{t,n}^i)$. Along the same line in \cite{khodak2019provable}, we use OGD as the learning algorithm within each task.

When applying DOGD-GT to update the model initialization $\phi_{t,n}$ and  $v_{t,n}$, in the experiments, we set the learning rate $\eta=0.001$ for the outer loop meta-update with DOGD-GT. For the selection of $G$, we test different values and choose the one with the best performance for every experimental setup. For example, when we use the logistic regression for the few-shot classification on  MNIST, we set $G=80$. We further clarify the parameters used in different experiments: (1) For Figure \ref{fig:ma}, we set $m=10$; (2) For Figure \ref{fig:mb} and \ref{fig:md}, we set $N=8$; (3) For Figure \ref{fig:mc}, we set $m=5$.

We first compare the performance of MAOML under different number of agents with the single-agent general algorithm ARUBA in \cite{khodak2019adaptive}. As shown in Figure \ref{fig:ma} and \ref{fig:mc}, MAOML clearly outperforms ARUBA, by utilizing the task similarity across multiple agents through limited communication in both convex and nonconvex setups. More specifically, compared with ARUBA, MAOML learns  good model priors at a faster rate, and performs significantly better  after each agent learns over the same number of tasks.
Moreover, with more agents collaborating in the network, the performance of MAOML increases further, corroborating the results in Theorem \ref{theorem2}. We next examine the impact of $m$, i.e., the number of iterations within each task, on the learning performance of MAOML. As expected, the average loss per task decreases with $m$ because each task has a sublinear regret on average, as illustrated in Figure \ref{fig:mb} and \ref{fig:md}.

Following the same line as in \cite{khodak2019provable,khodak2019adaptive}, we also evaluate the performance of MAOML in a meta-testing setup. More specifically, for each pair of $(\phi_{t,n},v_{t,n})$ obtained at each iteration $t$, we test its performance on a set of testing tasks. For each testing task $\mathcal{T}^{te}_{t,n}$ at each agent with a training dataset $\mathcal{D}^{tr}_{t,n}$ and a testing dataset $\mathcal{D}^{te}_{t,n}$, we first run online gradient descent from the model initial $\phi_{t,n}$ with the learning rate $\alpha_{t,n}=\frac{v_{t,n}}{G\sqrt{m}}$ for $m$ iterations using the training dataset $\mathcal{D}^{tr}_{t,n}$, and obtain the task specific model parameter $\theta^{te}_{t,n}$. Next, we evaluate the accuracy of $\theta^{te}_{t,n}$ on the testing dataset $\mathcal{D}^{te}_{t,n}$ for each testing task $\mathcal{T}^{te}_{t,n}$. 

We run the experiments for 5-way 2-shot classification and 5-way 5-shot classification on Omniglot, and evaluate the average testing accuracy over 10 testing tasks after each iteration $t$ for every agent. Particularly, we consider a complete graph where all agents are connected with each other, and set $m=50$.
As shown in Figure \ref{test:omniglot2} and \ref{test:omniglot5}, MAOML clearly achieves a better meta-testing accuracy compared with ARUBA, and its performance further increases as the number of agents $N$ increases. Therefore,  by utilizing the 
task similarity across different agents through limited collaboration among them, each agent can achieve good testing performance in MAOML after learning over a smaller number of tasks, in contrast to learning alone by itself.

\begin{figure}
    \centering
    \includegraphics[width=80mm]{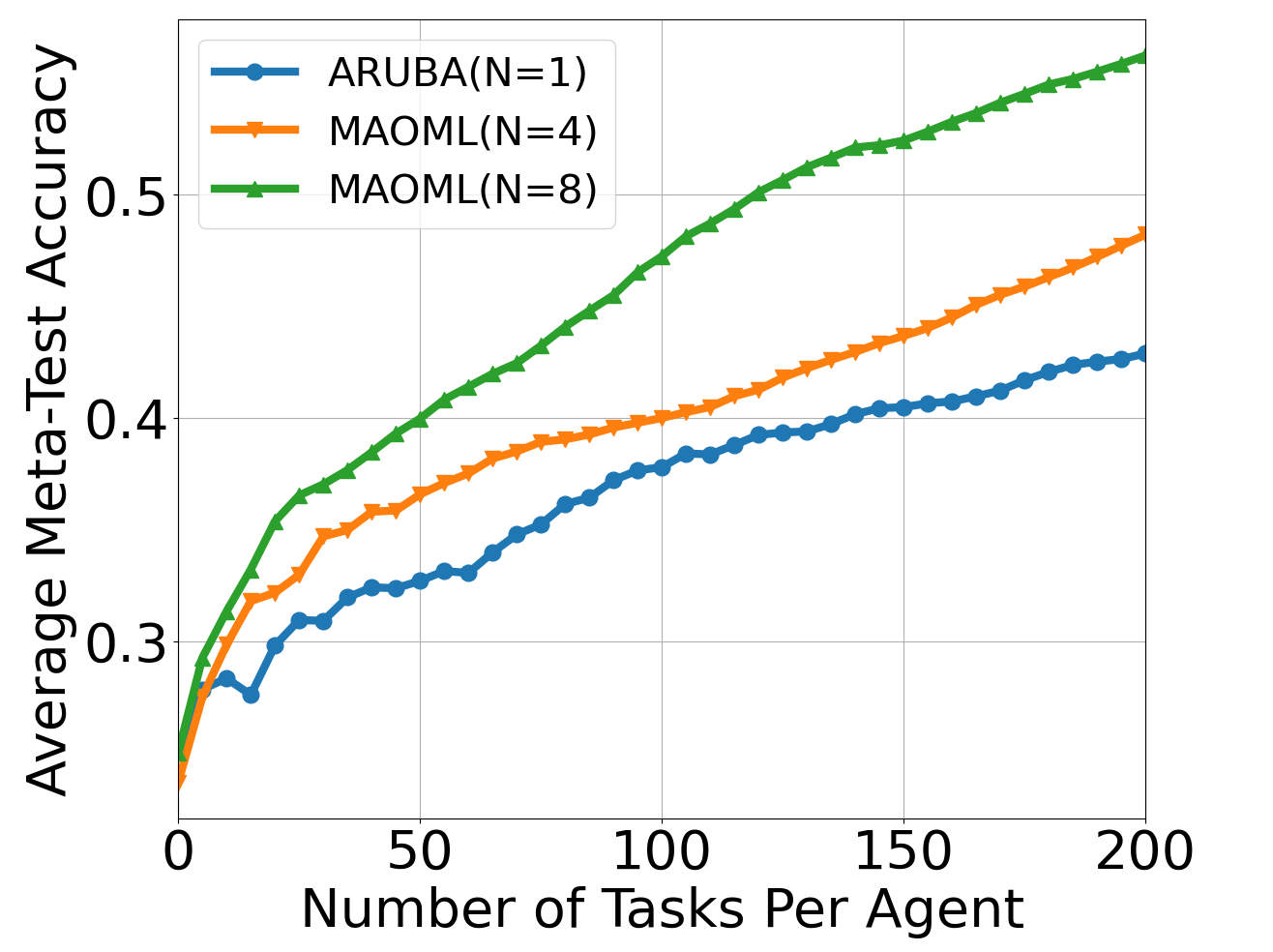}
    \caption{Meta-testing performance evaluation of MAOML on 5-way 2-shot Omniglot.}
    \label{test:omniglot2}
\end{figure}

\begin{figure}
    \centering
    \includegraphics[width=80mm]{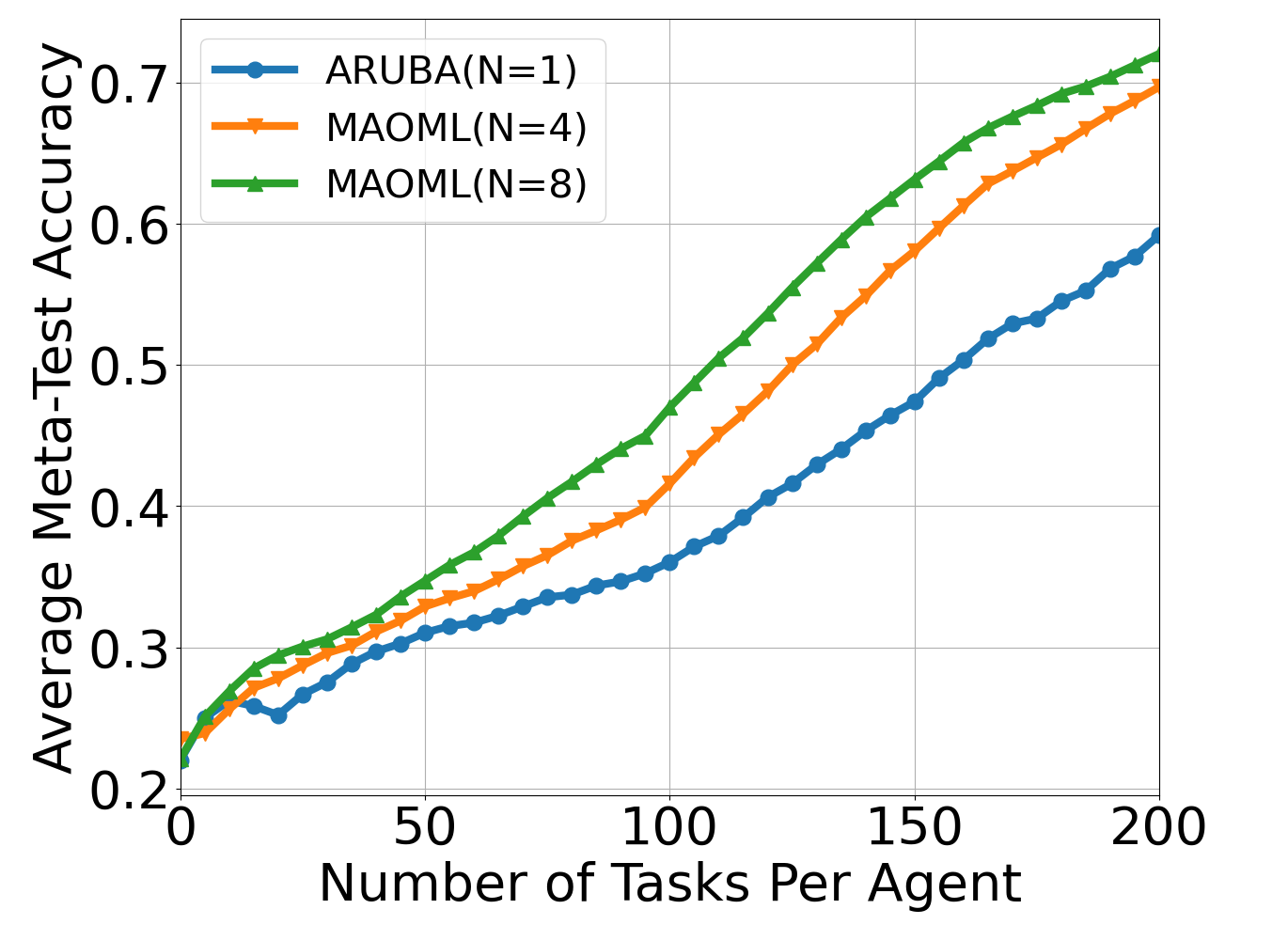}
    \caption{Meta-testing performance evaluation of MAOML on 5-way 5-shot Omniglot.}
    \label{test:omniglot5}
\end{figure}

\section{Conclusion}

In single-agent online meta-learning, the agent has to learn over many tasks so as to obtain good meta-models, based on which within-task fast adaptation can be achieved. Nevertheless, this would inevitably lead to the cold-start problem. To address this problem, we propose a multi-agent online meta-learning framework to leverage the task similarity across multiple agents, and cast it into an equivalent two-level nested OCO problem. By pinpointing that the performance bottleneck lies in the distributed network-level OCO, where it still remains unclear that how much an agent can benefit from it through limited communication with neighboring agents, we further explore a DOGD algorithm with gradient tracking. 
We show that the  average regret $O(\sqrt{T/N})$ can be achieved at each agent, thus revealing a linear speedup of the learning performance compared with the single-agent case. Building on the foundation of the agent-level performance speedup achieved  in the distributed network-level OCO, we next propose a multi-agent online meta-learning algorithm MAOML, and show that the optimal within-task regret can be achieved at a faster rate of $O(1/\sqrt{NT})$ compared with the rate of $\Tilde{O}(1/\sqrt{T})$ in the single agent case. The theoretic results have been clearly verified in the experimental studies on different datasets.

\bibliographystyle{ACM-Reference-Format}
\bibliography{sample-base}

\onecolumn
 \section*{Appendix}
\appendix

For ease of exposition, we define the following average sequences:
\begin{align*}
    \Bar{x}_t=\frac{1}{N}\sum_{i=1}^N x_{t,i},~~ \Bar{s}_t=\frac{1}{N}\sum_{i=1}^N s_{t,i},~~ g_t=\frac{1}{N}\sum_{i=1}^N \nabla f_{t,i}(x_{t,i}),
\end{align*}
and further, rewrite
$\{x_{t,i}\}$, $\{s_{t,i}\}$ and $\{\nabla f_{t,i}(x_{t,i})\}$ in vector form, i.e.,
\begin{align*}
    x_t=\begin{bmatrix}x_{t,1}\\x_{t,2}\\\vdots\\x_{t,N}\end{bmatrix},~~
    s_t=\begin{bmatrix}s_{t,1}\\s_{t,2}\\\vdots\\s_{t,n}\end{bmatrix},~~
    \nabla_t=\begin{bmatrix}\nabla f_{t,1}(x_{t,1})\\ \nabla f_{t,2}(x_{t,2})\\\vdots\\\nabla f_{t,n}(x_{t,n})\end{bmatrix}.
\end{align*}
Based on Algorithm \ref{ogd_track}, the update rule can be reformulated as 
\begin{align}
    s_t&=W s_{t-1}+\nabla_{t}-\nabla_{t-1},\\
    x_{t+1}&=W x_t-\eta s_t,
\end{align}
where $s_1=\nabla_1$.
We define $G_t=\begin{bmatrix}
\nabla F(x_{t,1})\\
\vdots\\
\nabla F(x_{t,N})
\end{bmatrix}$ as the expected gradient at $x_t$.

\section{Preliminaries}
To facilitate the regret analysis, we first restate some useful results in the literature. More specifically, to understand the updates of the average sequences $\Bar{s}_t$ and $\Bar{x}_t$, based on Lemma 7 in \cite{qu2017harnessing},  we have
\begin{lemma}\label{lemma1}
The following equalities hold.
\begin{enumerate}[label=(\alph*)]
    \item  \ \ \ $\Bar{s}_{t+1}=\Bar{s}_t+g_{t+1}-g_t=g_{t+1}$;
    \item \ \ \  $\Bar{x}_{t+1}=\Bar{x}_t-\eta\Bar{s}_t=\Bar{x}_t-\eta g_t$.
\end{enumerate}
\end{lemma}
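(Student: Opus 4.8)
The plan is to verify both identities directly from the vector-form update rules $s_t = W s_{t-1} + \nabla_t - \nabla_{t-1}$ and $x_{t+1} = W x_t - \eta s_t$, exploiting the single structural fact that the consensus matrix $W$ is doubly stochastic. Double stochasticity gives $\bone^T W = \bone^T$, so left-multiplying any vector update by the averaging operator $\frac{1}{N}\bone^T$ passes cleanly through $W$ as if $W$ were the identity. All four equalities fall out of this one observation.

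First I would establish the recursion in part (a). Applying $\frac{1}{N}\bone^T$ to the $s$-update and using $\frac{1}{N}\bone^T W = \frac{1}{N}\bone^T$ yields $\Bar{s}_{t+1} = \Bar{s}_t + g_{t+1} - g_t$, since $\frac{1}{N}\bone^T \nabla_t = g_t$ by definition. To collapse this to $\Bar{s}_{t+1} = g_{t+1}$, I would argue by induction that $\Bar{s}_t = g_t$ for all $t$: the base case $\Bar{s}_1 = g_1$ follows from the initialization $s_1 = \nabla_1$, and the inductive step substitutes $\Bar{s}_t = g_t$ into the recursion so that the two $g_t$ terms telescope away, leaving $\Bar{s}_{t+1} = g_{t+1}$.

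Part (b) then follows by the same averaging trick. Left-multiplying the $x$-update by $\frac{1}{N}\bone^T$ and again invoking $\frac{1}{N}\bone^T W = \frac{1}{N}\bone^T$ gives $\Bar{x}_{t+1} = \Bar{x}_t - \eta \Bar{s}_t$, and substituting the identity $\Bar{s}_t = g_t$ just established in part (a) produces $\Bar{x}_{t+1} = \Bar{x}_t - \eta g_t$.

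There is no serious obstacle here; the statement is a bookkeeping identity restated from \cite{qu2017harnessing}. The one conceptual point worth flagging is that the exact equality $\Bar{s}_t = g_t$ — the hallmark of gradient tracking — is what guarantees that the network-averaged iterate $\Bar{x}_t$ evolves as a clean centralized gradient-descent trajectory driven by the true average gradient $g_t$, with no residual consensus error entering the mean dynamics. This is precisely the property that the later consensus-error analysis in Lemmas \ref{lemma3}--\ref{lemma7} builds upon, which is why it is isolated as a preliminary.
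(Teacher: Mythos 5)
Your proof is correct and takes essentially the same route as the paper: both arguments left-multiply the vector-form updates by $\frac{1}{N}\bone^T$, use double stochasticity of $W$ (i.e., $\bone^T W = \bone^T$) to pass the averaging operator through, and invoke the initialization $\Bar{s}_1 = g_1$ to collapse the recursion. The only cosmetic difference is that you close $\Bar{s}_t = g_t$ by induction while the paper telescopes the recursion $\Bar{s}_{t+1} = \Bar{s}_t + g_{t+1} - g_t$ directly; these are the same computation.
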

\begin{proof}
Since $W$ is doubly stochastic, it follows that $\bone^T W=\bone^T$.
To prove (a), we have 
\begin{align*}
    \Bar{s}_{t+1}=&\frac{1}{N}\bone^T s_{t+1}\\
              =&\frac{1}{N}\bone^T(Ws_t+\nabla_{t+1}-\nabla_t)\\
              =& \Bar{s}_t+g_{t+1}-g_t.
\end{align*}
Telescoping the above equation, we have $\Bar{s}_{t+1}=\Bar{s}_1+g_{t+1}-g_1$. Since $\Bar{s}_1=g_1$, we can obtain $\Bar{s}_{t+1}=g_{t+1}$.

To prove (b), we have
\begin{align*}
    \Bar{x}_{t+1}=&\frac{1}{N}\bone^T x_{t+1}\\
                =&\frac{1}{N}\bone^T Wx_t-\eta s_t\\
                =&  \Bar{x}_t-\eta \Bar{s}_t\\
                =& \Bar{x}_t-\eta g_t.
\end{align*}
\end{proof}

Denote $\mathcal{F}_t$ as the $\sigma$-algebra generated by the sequence $\{f_1,f_2,...,f_{t-1}\}$ where $f_t=[f_{t,1}^T,...,f_{t,N}^T]^T$, and define $\mathbb{E}[\cdot | \mathcal{F}_t]$ as the conditional expectation given $\mathcal{F}_t$. Based on\cite{pu2020distributed}, we  have the following two lemmas.

\begin{lemma}\label{lemma2}
The following inequality holds:
\begin{align*}
    \mathbb{E}[\langle Ws_t-\bone g_t, G_{t}-\nabla_t\rangle|\mathcal{F}_t]\leq \sigma^2.
\end{align*}
\end{lemma}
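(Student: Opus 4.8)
The plan is to prove the bound
\[
\mathbb{E}[\langle Ws_t-\bone g_t, G_t-\nabla_t\rangle \mid \mathcal{F}_t]\leq \sigma^2
\]
by exploiting the conditional independence structure induced by the filtration $\mathcal{F}_t$. First I would note that $G_t-\nabla_t$ is, componentwise, the negative of the stochastic gradient noise: its $i$-th block is $\nabla F(x_{t,i})-\nabla f_{t,i}(x_{t,i})$, which has conditional mean zero and whose squared norm has conditional expectation bounded by $\sigma^2$ under Assumption 2. The key observation is that $x_{t,i}$, and hence $G_t$, is $\mathcal{F}_t$-measurable (it depends only on the past losses $f_1,\dots,f_{t-1}$), so that $\mathbb{E}[G_t-\nabla_t\mid\mathcal{F}_t]=0$.

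Next I would decompose $Ws_t-\bone g_t$ into an $\mathcal{F}_t$-measurable part plus the current-step noise. Using the update rule $s_t = Ws_{t-1}+\nabla_t-\nabla_{t-1}$ together with Lemma \ref{lemma1}(a) (which gives $\bar{s}_t=g_t$, i.e. $\bone g_t = \tfrac{1}{N}\bone\bone^T s_t$ scaled appropriately), I would write $Ws_t$ in terms of $s_{t-1}$, $\nabla_t$ and $\nabla_{t-1}$. Only $\nabla_t$ carries the time-$t$ randomness; the terms $Ws_{t-1}$ and $\nabla_{t-1}$ are $\mathcal{F}_t$-measurable and therefore contribute zero when paired with the mean-zero noise $G_t-\nabla_t$. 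This reduces the inner product, after taking conditional expectation, to the contribution of the $\nabla_t$ term paired against $G_t-\nabla_t$, i.e. essentially $-\mathbb{E}[\langle \nabla_t-G_t,\, G_t-\nabla_t\rangle\mid\mathcal{F}_t]$ up to the action of $W$ and the $\bone g_t$ correction.

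The remaining step is to bound the surviving term by $\sigma^2$. After the cancellations, the cross term collapses to (a projection of) $\mathbb{E}[\|G_t-\nabla_t\|^2\mid\mathcal{F}_t]$, which by Assumption 2 summed blockwise is controlled; the doubly stochastic structure of $W$ and the subtraction of the average $\bone g_t$ ensure the coefficient in front works out so that the bound is exactly $\sigma^2$ rather than $N\sigma^2$. Concretely, I expect the matching to rely on identities like $\bone^T W=\bone^T$ and on the fact that $G_t-\nabla_t$ restricted to its per-agent variance gives $\tfrac{1}{N}\sum_i\mathbb{E}[\|\nabla F(x_{t,i})-\nabla f_{t,i}(x_{t,i})\|^2\mid\mathcal{F}_t]\leq\sigma^2$.

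The main obstacle I anticipate is bookkeeping the cancellation cleanly: one must verify precisely which summands in $Ws_t-\bone g_t$ are $\mathcal{F}_t$-measurable and confirm that the $W$ and $\tfrac{1}{N}\bone\bone^T$ operators, acting on the noise block $\nabla_t-G_t$, combine so that the resulting quadratic form has operator norm at most one, thereby yielding $\sigma^2$ and not a larger multiple. I would handle this by writing everything in the stacked vector notation of the appendix, isolating $\nabla_t$, and invoking Lemma \ref{lemma1}(a) to replace $g_t$, so that the $W-\tfrac{1}{N}\bone\bone^T$ combination makes the per-agent variance bound from Assumption 2 directly applicable.
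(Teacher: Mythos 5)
Your reduction is on the right track, and it matches how this lemma is actually established; note that the paper itself offers no proof of this statement---it imports Lemma \ref{lemma2} (and Lemma \ref{lemma_22}) directly from \cite{pu2020distributed}. Concretely, since $\bar{s}_t=g_t$ gives $\bone g_t=\frac{1}{N}\bone\bone^T s_t$, you have $Ws_t-\bone g_t=\left(W-\frac{1}{N}\bone\bone^T\right)s_t$, and by the update $s_t=Ws_{t-1}+\nabla_t-\nabla_{t-1}$ every piece except $\left(W-\frac{1}{N}\bone\bone^T\right)\nabla_t$ is $\mathcal{F}_t$-measurable and therefore pairs to zero against the conditionally centered noise $G_t-\nabla_t$ (using $\mathbb{E}[\nabla_t\,|\,\mathcal{F}_t]=G_t$, valid because $x_t$ is $\mathcal{F}_t$-measurable). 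Centering $\nabla_t$ as well, the left-hand side collapses exactly to $\mathbb{E}\left[\left\langle \left(\frac{1}{N}\bone\bone^T-W\right)\delta_t,\,\delta_t\right\rangle\,\middle|\,\mathcal{F}_t\right]$ with $\delta_t=\nabla_t-G_t$. Up to this point your plan is sound.

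The genuine gap is in your finishing step. You propose to conclude by showing the quadratic form induced by $W-\frac{1}{N}\bone\bone^T$ has ``operator norm at most one.'' That is true ($\rho<1$), but it cannot deliver $\sigma^2$: the spectral bound only yields $\rho\,\mathbb{E}[\|\delta_t\|^2\,|\,\mathcal{F}_t]\leq\rho N\sigma^2$, a factor of $N$ too large---and without additional structure the claimed bound is in fact false. For instance, with $N=2$, $W=cI+(1-c)\left(\begin{smallmatrix}0&1\\1&0\end{smallmatrix}\right)$ for small $c>0$, and $\delta_{t,i}=\epsilon u_i v$ with a \emph{single shared} Rademacher variable $\epsilon$, $u=(1,-1)$, $\|v\|=1$, each agent's conditional variance is $1=\sigma^2$ yet the quantity equals $2(1-2c)>\sigma^2$. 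The missing ingredient is independence of the stochastic gradient noise \emph{across agents} (implicit in the i.i.d.\ sampling $f_{t,i}\sim\mathcal{P}$, and explicit in \cite{pu2020distributed}): it forces $\mathbb{E}[\langle\delta_{t,i},\delta_{t,j}\rangle\,|\,\mathcal{F}_t]=0$ for $i\neq j$, so all off-diagonal terms of the quadratic form vanish and only the diagonal survives,
\begin{align*}
\mathbb{E}\left[\left\langle \left(\tfrac{1}{N}\bone\bone^T-W\right)\delta_t,\,\delta_t\right\rangle\,\middle|\,\mathcal{F}_t\right]
=\sum_{i=1}^N\left(\tfrac{1}{N}-w_{ii}\right)\mathbb{E}\left[\|\delta_{t,i}\|^2\,\middle|\,\mathcal{F}_t\right]
\leq \tfrac{1}{N}\sum_{i=1}^N\sigma^2=\sigma^2,
\end{align*}
where the last step uses $w_{ii}\geq 0$ and discards the nonpositive summands. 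So the $\frac{1}{N}$-averaging you correctly anticipated in your final display comes from the diagonal entries of $\frac{1}{N}\bone\bone^T$ via cross-agent independence, not from any operator-norm or doubly-stochastic property of $W-\frac{1}{N}\bone\bone^T$; as written, your proposed resolution of the ``main obstacle'' would fail.
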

\begin{lemma}\label{lemma_22}
The following inequality holds:
\begin{align*}
    \mathbb{E}[\langle G_{t+1}, G_t-\nabla_t\rangle|\mathcal{F}_t]\leq \eta LN\sigma^2.
\end{align*}
\end{lemma}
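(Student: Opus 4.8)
The plan is to exploit the filtration structure: conditioned on $\mathcal{F}_t$, every quantity generated up through iteration $t$ (in particular $x_t$, $s_{t-1}$, $\nabla_{t-1}$, and hence $G_t$) is deterministic, while $\nabla_t = \nabla f_t(x_t)$ is the only stochastic object and is unbiased, i.e. $\mathbb{E}[\nabla_t \mid \mathcal{F}_t] = G_t$ since $x_t$ is $\mathcal{F}_t$-measurable and $\mathbb{E}[\nabla f_{t,i}(x)] = \nabla F(x)$. First I would split the inner product as $\langle G_{t+1}, G_t - \nabla_t\rangle = \langle G_t, G_t - \nabla_t\rangle + \langle G_{t+1} - G_t, G_t - \nabla_t\rangle$. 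The first piece vanishes in conditional expectation because $G_t$ is $\mathcal{F}_t$-measurable and $\mathbb{E}[G_t - \nabla_t \mid \mathcal{F}_t] = 0$, so the entire bound must come from the cross term $\mathbb{E}[\langle G_{t+1} - G_t, G_t - \nabla_t\rangle \mid \mathcal{F}_t]$, whose nonzero value is exactly the correlation induced by the dependence of $x_{t+1}$ (and hence $G_{t+1}$) on the random $\nabla_t$.

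The crux is to quantify that dependence. Writing the vectorized update $x_{t+1} = W x_t - \eta(W s_{t-1} + \nabla_t - \nabla_{t-1})$, I observe that the coefficient on $\nabla_t$ is exactly $-\eta I$, so blockwise $x_{t+1,i} = c_i - \eta \nabla f_{t,i}(x_{t,i})$, where $c_i$ collects the $(W x_t)_i$, $(W s_{t-1})_i$, and $\nabla f_{t-1,i}(x_{t-1,i})$ terms and is therefore $\mathcal{F}_t$-measurable. I would then introduce the measurable surrogate $\hat{x}_{t+1,i} \triangleq c_i - \eta \nabla F(x_{t,i})$, obtained by replacing the stochastic gradient by its conditional mean, so that $\|x_{t+1,i} - \hat{x}_{t+1,i}\| = \eta\|\nabla f_{t,i}(x_{t,i}) - \nabla F(x_{t,i})\|$. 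By the $L$-smoothness of $F$ (inherited from Assumption 1), this yields $\|\nabla F(x_{t+1,i}) - \nabla F(\hat{x}_{t+1,i})\| \le \eta L \|\nabla f_{t,i}(x_{t,i}) - \nabla F(x_{t,i})\|$.

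With $Y_i \triangleq \nabla F(x_{t,i}) - \nabla f_{t,i}(x_{t,i})$ denoting the $i$-th block of $G_t - \nabla_t$ (a zero-mean vector given $\mathcal{F}_t$ with $\mathbb{E}[\|Y_i\|^2 \mid \mathcal{F}_t] \le \sigma^2$ by Assumption 2), I would write the $i$-th summand of the cross term as $\langle \nabla F(x_{t+1,i}) - \nabla F(x_{t,i}), Y_i\rangle$ and insert $\pm \nabla F(\hat{x}_{t+1,i})$. The contributions of the two $\mathcal{F}_t$-measurable vectors $\nabla F(x_{t,i})$ and $\nabla F(\hat{x}_{t+1,i})$ drop out after taking $\mathbb{E}[\cdot \mid \mathcal{F}_t]$ because $\mathbb{E}[Y_i \mid \mathcal{F}_t] = 0$; the surviving term is controlled by Cauchy--Schwarz together with the smoothness estimate above, giving $\mathbb{E}[\langle \nabla F(x_{t+1,i}) - \nabla F(\hat{x}_{t+1,i}), Y_i\rangle \mid \mathcal{F}_t] \le \eta L\, \mathbb{E}[\|Y_i\|^2 \mid \mathcal{F}_t] \le \eta L \sigma^2$. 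Summing over $i = 1, \dots, N$ yields $\eta L N \sigma^2$, which is precisely the claim of Lemma \ref{lemma_22}.

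The main obstacle I anticipate lies in the second step: correctly isolating that $x_{t+1}$ depends on the noise only through the additive $-\eta \nabla_t$ block, so that the non-$\eta$ consensus terms such as $W x_t$ are safely absorbed into the measurable $c_i$ and do not inflate the bound, and verifying that the coefficient on $\nabla_t$ is the identity rather than $W$. Everything else---unbiasedness, $L$-smoothness, Cauchy--Schwarz, and the variance bound---is routine once the surrogate $\hat{x}_{t+1,i}$ is in place.
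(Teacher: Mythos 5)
Your proof is correct: the splitting $\langle G_{t+1}, G_t-\nabla_t\rangle = \langle G_t, G_t-\nabla_t\rangle + \langle G_{t+1}-G_t, G_t-\nabla_t\rangle$, the key observation that $x_{t+1}$ depends on the fresh noise only through the additive block $-\eta\nabla_t$ (the $W$-mixing acts on the $\mathcal{F}_t$-measurable $s_{t-1}$, so the coefficient is indeed $-\eta I$ blockwise, including at $t=1$ where $s_1=\nabla_1$), and the $L$-smooth surrogate comparison $\|\nabla F(x_{t+1,i})-\nabla F(\hat{x}_{t+1,i})\|\leq \eta L\|Y_i\|$ yielding $\eta L\sigma^2$ per agent constitute exactly the standard argument for this bound. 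The paper itself offers no proof of this lemma --- it imports it directly from \cite{pu2020distributed} --- and your blind reconstruction is precisely the reasoning behind that citation, so there is nothing to correct or add.
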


We also need the following standard\cite{chen1993convergence,tseng2008accelerated,dekel2012optimal}.
\begin{lemma}\label{lemma_new}
Let $V$  be a closed convex set, $\phi$ be a convex function on $V$, and $h$ be a differentiable, strongly convex function on $V$. Let $d$ be the Bregman divergence generated by $h$. Given $u\in V$, if
\begin{align*}
    w^+=\arg\min_{w\in V} \{\phi(w)+d(w,u)\},
\end{align*}
then
\begin{align*}
    \phi(w)+d(w,u)\geq \phi(w^+)+d(w^+,u)+d(w,w^+).
\end{align*}
\end{lemma}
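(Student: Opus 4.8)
The plan is to combine the first-order optimality (variational-inequality) characterization of the constrained minimizer $w^+$ with the convexity of $\phi$ and an exact algebraic identity for Bregman divergences, often called the three-point identity. First I would record that, writing $d(w,u)=h(w)-h(u)-\langle\nabla h(u),w-u\rangle$, the gradient of $d(\cdot,u)$ in its first argument is $\nabla h(w)-\nabla h(u)$. Because $w^+$ minimizes the convex function $w\mapsto\phi(w)+d(w,u)$ over the convex set $V$, there is a subgradient $p\in\partial\phi(w^+)$ such that the variational inequality
\begin{align*}
    \langle p+\nabla h(w^+)-\nabla h(u),\,w-w^+\rangle\geq 0
\end{align*}
holds for every $w\in V$. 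This is the only place where the hypothesis $w^+=\arg\min$ and the convexity/closedness of $V$ are used.

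The second ingredient is the three-point identity. A direct expansion of the three Bregman terms, in which all the $h$-values and the $\nabla h(u)$-terms cancel in pairs, yields the exact identity
\begin{align*}
    d(w,u)-d(w^+,u)-d(w,w^+)=\langle\nabla h(w^+)-\nabla h(u),\,w-w^+\rangle.
\end{align*}
I would verify this by substituting the definition of each divergence and collecting the linear terms; no smoothness or strong convexity is needed for the identity itself, only differentiability of $h$.

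Finally I would assemble the pieces. Convexity of $\phi$ gives $\phi(w)-\phi(w^+)\geq\langle p,\,w-w^+\rangle$. Adding this lower bound to the three-point identity shows that the difference between the left and right sides of the claimed inequality is at least $\langle p+\nabla h(w^+)-\nabla h(u),\,w-w^+\rangle$, which the variational inequality renders nonnegative; this is precisely the asserted inequality $\phi(w)+d(w,u)\geq\phi(w^+)+d(w^+,u)+d(w,w^+)$. The only step requiring genuine care, rather than routine algebra, is the optimality condition: since $\phi$ need not be differentiable, I must invoke the subgradient form of the first-order condition for constrained minimization and keep the \emph{same} subgradient $p$ consistent across both the optimality inequality and the convexity bound. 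Strong convexity of $h$ guarantees that the minimizer is well defined and that $d(w,w^+)\geq 0$, but it is not otherwise needed for the inequality.
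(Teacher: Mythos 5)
Your proof is correct, and it is precisely the canonical argument: the paper itself states this lemma without proof, citing it as standard from the referenced works (Chen--Teboulle, Tseng, Dekel et al.), and the proof in those sources is exactly your combination of the subgradient variational inequality at $w^+$, the three-point identity $d(w,u)-d(w^+,u)-d(w,w^+)=\langle\nabla h(w^+)-\nabla h(u),\,w-w^+\rangle$, and the convexity bound $\phi(w)-\phi(w^+)\geq\langle p,\,w-w^+\rangle$ with the same subgradient $p$. You also correctly identify the one delicate point --- keeping $p$ consistent across the optimality condition and the convexity inequality since $\phi$ need not be differentiable --- and correctly observe that strong convexity of $h$ is needed only for well-posedness of the minimizer and nonnegativity of $d(w,w^+)$, not for the inequality itself.
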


\section{Regret Analysis}


In order to prove Theorem \ref{theorem1}, based on \eqref{staticreg}, we can first rewrite the regret as
\begin{align}\label{regretdecomp}
    N\brn=\brn_s=&\sum_{i=1}^N\sum_{t=1}^T f_{t,i}(x_{t,i})-\sum_{i=1}^N\sum_{t=1}^T f_{t,i}(x^*)\nonumber\\
    =&\underbrace{\sum_{i=1}^N\sum_{t=1}^T f_{t,i}(x_{t,i})-\sum_{i=1}^N\sum_{t=1}^T f_{t,i}(\Bar{x}_t)}_{R_1}+\underbrace{\sum_{i=1}^N\sum_{t=1}^T f_{t,i}(\Bar{x}_t)-\sum_{i=1}^N\sum_{t=1}^T f_{t,i}(x^*)}_{R_2},
\end{align}
where $x^*=\arg\min F(x)$. It can be seen from \eqref{regretdecomp} that the regret can be decomposed into  two terms: 1) $R_1$, the regret  resulted by the difference between local model $x_{t,i}$ and the global average $\Bar{x}_t$, and 2) $R_2$, the regret
 accumulated over the iteration of the global average $\Bar{x}_t$.

\subsection{Analysis of $R_1$}

To analyze $R_1$, we first have the following lemma to characterize the relationship between the regret and the consensus gap between model parameters.

\lemmarestate*

\begin{proof}
For any $x$, we can have
\begin{align}\label{consensusregret}
    \frac{1}{N}\sum_{i=1}^N f_{t,i}(x)\leq& \frac{1}{N}\sum_{i=1}^N \left\{f_{t,i}(x_{t,i})+\langle \nabla f_{t,i}(x_{t,i}), x-x_{t,i}\rangle+\frac{L}{2}\|x-x_{t,i}\|^2\right\}\nonumber\\
    =&\frac{1}{N}\sum_{i=1}^N\left\{f_{t,i}(x_{t,i})+\langle \nabla f_{t,i}(x_{t,i}), \Bar{x}_t-x_{t,i}\rangle+\langle \nabla f_{t,i}(x_{t,i}), x-\Bar{x}_t\rangle+\frac{L}{2}\|x-x_{t,i}\|^2\right\}\nonumber\\
    =& \frac{1}{N}\sum_{i=1}^N \left\{f_{t,i}(x_{t,i})+\langle \nabla f_{t,i}(x_{t,i}), \Bar{x}_t-x_{t,i}\rangle\right\}+\langle g_t, x-\Bar{x}_t\rangle+\frac{L}{2N}\sum_{i=1}^N \|x-x_{t,i}\|^2\nonumber\\
    =& \frac{1}{N}\sum_{i=1}^N \left\{f_{t,i}(x_{t,i})+\langle \nabla f_{t,i}(x_{t,i}), \Bar{x}_t-x_{t,i}\rangle\right\}+\langle g_t, x-\Bar{x}_t\rangle+\frac{L}{2N}\sum_{i=1}^N \|x-\Bar{x}_t+\Bar{x}_t-x_{t,i}\|^2\nonumber\\
    \leq& \frac{1}{N}\sum_{i=1}^N \left\{f_{t,i}(x_{t,i})+\langle \nabla f_{t,i}(x_{t,i}), \Bar{x}_t-x_{t,i}\rangle\right\}+\langle g_t, x-\Bar{x}_t\rangle+ L\|x-\Bar{x}_t\|^2+\frac{L}{N}\|x_t-\bone \Bar{x}_t\|^2.
\end{align}

Moreover, based on the convexity and smoothness of $f_{t,i}$, it can be shown that
\begin{align*}
    f_{t,i}(\Bar{x}_t)\geq& f_{t,i}(x_{t,i})+\langle \nabla f_{t,i}(x_{t,i}),\Bar{x}_t-x_{t,i}\rangle+\frac{1}{2L}\|\nabla f_{t,i}(\Bar{x}_t)-\nabla f_{t,i}(x_{t,i})\|^2\\
    \geq&f_{t,i}(x_{t,i})+\langle \nabla f_{t,i}(x_{t,i}),\Bar{x}_t-x_{t,i}\rangle.
\end{align*}
Continuing with \eqref{consensusregret} and taking expectation at both sides, it follows that
\begin{align*}
    \mathbb{E}\left[\frac{1}{N}\sum_{i=1}^N f_{t,i}(x)\right]\leq& \mathbb{E}\left[\frac{1}{N}\sum_{i=1}^N f_{t,i}(\Bar{x}_t)+ \langle g_t, x-\Bar{x}_t\rangle+ L\|x-\Bar{x}_t\|^2+\frac{L}{N}\|x_t-\bone \Bar{x}_t\|^2\right],
\end{align*}
such that
\begin{align}\label{expectation}
    \mathbb{E}[F(x)]\leq \mathbb{E}[F(\Bar{x}_t)]+\mathbb{E}[\langle g_t, x-\Bar{x}_t\rangle]+L\mathbb{E}[\|x-\Bar{x}_t\|^2]+\frac{L}{N}\mathbb{E}[\|x_t-\bone \Bar{x}_t\|^2].
\end{align}

Since \eqref{expectation} holds for any $x$, we can have
\begin{align*}
    \mathbb{E}[F(x_{t,i})]\leq \mathbb{E}[F(\Bar{x}_t)]+\mathbb{E}[\langle g_t, x_{t,i}-\Bar{x}_t\rangle]+L\mathbb{E}[\|x_{t,i}-\Bar{x}_t\|^2]+\frac{L}{N}\mathbb{E}[\|x_t-\bone \Bar{x}_t\|^2],
\end{align*}
which indicates that
\begin{align*}
    \frac{1}{N}\sum_{i=1}^N \mathbb{E}[F(x_{t,i})]\leq&
    \frac{1}{N}\sum_{i=1}^N \left\{\mathbb{E}[F(\Bar{x}_t)]+\mathbb{E}[\langle g_t, x_{t,i}-\Bar{x}_t\rangle]+L\mathbb{E}[\|x_{t,i}-\Bar{x}_t\|^2]+\frac{L}{N}\mathbb{E}[\|x_t-\bone \Bar{x}_t\|^2]\right\}\\
    =& \frac{1}{N}\sum_{i=1}^N \mathbb{E}[F(\Bar{x}_t)]+\frac{1}{N}\sum_{i=1}^N\mathbb{E}[\langle g_t, x_{t,i}-\Bar{x}_t\rangle]+\frac{L}{N}\sum_{i=1}^N  \mathbb{E}[\|x_{t,i}-\Bar{x}_t\|^2]+\frac{L}{N}\mathbb{E}[\|x_t-\bone \Bar{x}_t\|^2]\\
    =& \frac{1}{N}\sum_{i=1}^N \mathbb{E}[F(\Bar{x}_t)]+\mathbb{E}\left[\langle g_t, \frac{1}{N}\sum_{i=1}^N(x_{t,i}-\Bar{x}_t)\rangle\right]+\mathbb{E}\left[\frac{L}{N}\sum_{i=1}^N \|x_{t,i}-\Bar{x}_t\|^2\right]+\frac{L}{N}\mathbb{E}[\|x_t-\bone \Bar{x}_t\|^2]\\
    =& \frac{1}{N}\sum_{i=1}^N \mathbb{E}[F(\Bar{x}_t)]+\frac{2L}{N}\mathbb{E}[\|x_t-\bone \Bar{x}_t\|^2].
\end{align*}

Therefore, $R_1$ can be bounded above as follows:
\begin{align*}
    \mathbb{E}\left[\sum_{i=1}^N\sum_{t=1}^T f_{t,i}(x_{t,i})-\sum_{i=1}^N\sum_{t=1}^T f_{t,i}(\Bar{x}_t)\right]=&\sum_{i=1}^N\sum_{t=1}^T \mathbb{E}[F(x_{t,i})]- \sum_{i=1}^N\sum_{t=1}^T \mathbb{E}[F(\Bar{x}_t)]\\
    \leq & 2L\sum_{t=1}^T \mathbb{E}[\|x_t-\bone \Bar{x}_t\|^2],
\end{align*}
thereby completing the proof of Lemma \ref{lemma3}.
\end{proof}

Based on Lemma \ref{lemma3}, to analyze $R_1$, it suffices to analyze the consensus error  $\mathbb{E}[\|x_t-\bone \Bar{x}_t\|^2]$. To this end,
it can be first seen that the average of $s_{t,i}$, i.e., $\Bar{s}_t$, is equal to the global stochastic gradient average $g_t=\frac{1}{N}\sum_{i=1}^N f_{t,i}(x_{t,i})$ from Lemma \ref{lemma1}. Since $s_{t,i}$ is designed to estimate the global gradient average $g_t$, it is necessary to quantify the estimation gap $\|s_t-\bone g_t\|$. Through careful manipulations, we have the following result regarding the consensus error.

\begin{lemma}\label{lemma4}
For any $\beta>0$, we have the following result:
\begin{align*}
    \begin{bmatrix}
    \mathbb{E}[\|x_{t+1}-\bone \Bar{x}_{t+1}\|^2]\\
    \mathbb{E}[\|s_{t+1}-\bone g_{t+1}\|^2]
    \end{bmatrix}
    \leq& \begin{bmatrix}
    \frac{1+\rho^2}{2} & \eta^2\frac{1+\rho^2}{1-\rho^2} \\
    (2+\frac{1}{\beta})L^2\|W-I\|^2 & \rho^2+2\beta\rho^2+2L\rho\eta+2\eta^2L^2
    \end{bmatrix}
    \cdot \begin{bmatrix}
    \mathbb{E}[\|x_{t}-\bone \Bar{x}_{t}\|^2]\\
    \mathbb{E}[\|s_{t}-\bone g_{t}\|^2]
    \end{bmatrix}\nonumber\\
    &+\begin{bmatrix}
    0\\
    (2+\frac{1}{\beta})N\eta^2L^2\mathbb{E}[\|g_t\|^2]+2\sigma^2+2\eta LN\sigma^2+2N\sigma^2
    \end{bmatrix}.
\end{align*}
\end{lemma}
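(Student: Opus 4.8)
The plan is to establish the two rows of the $2\times 2$ system separately, in each case starting from the exact one-step recursion obtained from the vectorized updates $x_{t+1}=Wx_t-\eta s_t$ and $s_{t+1}=Ws_t+\nabla_{t+1}-\nabla_t$, together with the averaging identities $\Bar{x}_{t+1}=\Bar{x}_t-\eta g_t$ and $\Bar{s}_t=g_t$ from Lemma~\ref{lemma1}. For the first row, subtracting $\bone\Bar{x}_{t+1}$ gives $x_{t+1}-\bone\Bar{x}_{t+1}=(Wx_t-\bone\Bar{x}_t)-\eta(s_t-\bone g_t)$. Since $W$ is doubly stochastic, $Wx_t-\bone\Bar{x}_t=(W-\frac{1}{N}\bone\bone^T)(x_t-\bone\Bar{x}_t)$, so the stated contraction yields $\|Wx_t-\bone\Bar{x}_t\|\leq\rho\|x_t-\bone\Bar{x}_t\|$. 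I would then apply the weighted inequality $\|a-b\|^2\leq(1+c)\|a\|^2+(1+c^{-1})\|b\|^2$ with $c=\frac{1-\rho^2}{2\rho^2}$, engineered so that $(1+c)\rho^2=\frac{1+\rho^2}{2}$ and $1+c^{-1}=\frac{1+\rho^2}{1-\rho^2}$; taking expectations reproduces the first row exactly, with zero additive term.

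For the second row I would use $\Bar{s}_{t+1}=g_{t+1}$ and $\bone^TW=\bone^T$ to check that $s_{t+1}-\bone g_{t+1}=(Ws_t-\bone g_t)+P(\nabla_{t+1}-\nabla_t)$, where $P=I-\frac{1}{N}\bone\bone^T$. Expanding the squared norm produces $\|Ws_t-\bone g_t\|^2\leq\rho^2\|s_t-\bone g_t\|^2$, a cross term, and $\|P(\nabla_{t+1}-\nabla_t)\|^2$. The crucial step is to split the gradient difference into deterministic drift plus noise, $\nabla_{t+1}-\nabla_t=(G_{t+1}-G_t)+(\nabla_{t+1}-G_{t+1})-(\nabla_t-G_t)$. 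For the drift I would invoke $L$-smoothness, $\|G_{t+1}-G_t\|\leq L\|x_{t+1}-x_t\|$, and substitute $x_{t+1}-x_t=(W-I)(x_t-\bone\Bar{x}_t)-\eta(s_t-\bone g_t)-\eta\bone g_t$ (using $(W-I)\bone=0$ and $\|\bone g_t\|^2=N\|g_t\|^2$); this generates, after expansion, the drift terms proportional to $L^2\|W-I\|^2\|x_t-\bone\Bar{x}_t\|^2$, $\eta^2L^2\|s_t-\bone g_t\|^2$, and $N\eta^2L^2\|g_t\|^2$ appearing in the target matrix.

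The free parameter $\beta$ would enter through a weighted Young's inequality on the cross term $2\langle Ws_t-\bone g_t,\,P(G_{t+1}-G_t)\rangle$, splitting off the drift pieces driven by $\|x_t-\bone\Bar{x}_t\|$ and $\|g_t\|$ and thereby producing the multiplier $(2+\beta^{-1})$ on those terms and the contribution $2\beta\rho^2$ to the $\|s_t-\bone g_t\|^2$ coefficient; the $s$-driven piece of the drift, being proportional to $\|s_t-\bone g_t\|$ itself, is instead kept via Cauchy--Schwarz and the contraction, contributing the $2L\rho\eta$ and $2\eta^2L^2$ terms. The cross terms involving the stochastic noise $\nabla_t-G_t$ do not vanish in expectation, since $s_t$ depends on $f_t$ and hence $Ws_t-\bone g_t$ is not $\mathcal{F}_t$-measurable; I would control them by conditioning on $\mathcal{F}_t$ and invoking Lemma~\ref{lemma2} for $\mathbb{E}[\langle Ws_t-\bone g_t,\,G_t-\nabla_t\rangle\mid\mathcal{F}_t]\leq\sigma^2$ and Lemma~\ref{lemma_22} for $\mathbb{E}[\langle G_{t+1},\,G_t-\nabla_t\rangle\mid\mathcal{F}_t]\leq\eta LN\sigma^2$, while the pure noise squared norms are bounded by $N\sigma^2$ via Assumption~2. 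Collected with their factors of two, these give the additive entries $2\sigma^2+2\eta LN\sigma^2+2N\sigma^2$, and the tower property delivers the total expectation.

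The main obstacle is the second-row bookkeeping. Unlike the strongly-convex stochastic-optimization setting of \cite{pu2020distributed}, here the gradient-tracking variable $s_t$ is correlated with the fresh randomness $f_t$, so the cross terms between $Ws_t-\bone g_t$ and the noise cannot be discarded and must be routed precisely through the conditional bounds of Lemmas~\ref{lemma2} and \ref{lemma_22}. Keeping $\beta$ symbolic (it will be optimized when the recursion is unrolled in Lemma~\ref{lemma_3}) while matching the exact coefficients $\rho^2+2\beta\rho^2+2L\rho\eta+2\eta^2L^2$ and $(2+\beta^{-1})L^2\|W-I\|^2$ requires a careful allocation of each drift and noise contribution to the correct slot of the matrix, and this accounting is where essentially all of the effort lies.
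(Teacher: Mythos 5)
Your proposal is correct and takes essentially the same route as the paper's proof: the identical first-row contraction (your weighted inequality with $c=\frac{1-\rho^2}{2\rho^2}$ is exactly the paper's Young step on the cross term), and for the second row the same drift/noise split of $\nabla_{t+1}-\nabla_t$, the same substitution $x_{t+1}-x_t=(W-I)(x_t-\bone\Bar{x}_t)-\eta s_t$ with the orthogonality $\bone^T(s_t-\bone g_t)=0$, the same $\beta$-Young allocation producing $\rho^2+2\beta\rho^2+2L\rho\eta+2\eta^2L^2$ and the $(2+\frac{1}{\beta})$ multipliers, and the same tower-property routing of the non-vanishing noise cross terms through Lemmas~\ref{lemma2} and~\ref{lemma_22}. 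Your explicit projection $P=I-\frac{1}{N}\bone\bone^T$ merely packages what the paper verifies by hand, namely that $\langle Ws_t-\bone g_t,\bone g_t-\bone g_{t+1}\rangle=0$ and $\|\nabla_{t+1}-\nabla_t+\bone g_t-\bone g_{t+1}\|\leq\|\nabla_{t+1}-\nabla_t\|$.
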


\begin{proof}
Based on the update rule, we can have
\begin{align}\label{boundx}
    \|x_{t+1}-\bone\Bar{x}_{t+1}\|^2\leq&\|Wx_t-\eta s_t-\bone(\Bar{x}_t-\eta g_t)\|^2\nonumber\\
    =& \|Wx_t-\bone\Bar{x}_t\|^2-2\eta\langle Wx_t-\bone\Bar{x}_t, s_t-\bone g_t\rangle+\eta^2\|s_t-\bone g_t\|^2\nonumber\\
    \leq& \rho^2\|x_t-\bone\Bar{x}_t\|^2+\eta\left[\frac{1-\rho^2}{2\eta\rho^2}\|Wx_t-\bone\Bar{x}_t\|^2+\frac{2\eta\rho^2}{1-\rho^2}\|s_t-\bone g_t\|^2\right]+\eta^2\|s_t-\bone g_t\|^2\nonumber\\
    \leq& \rho^2\|x_t-\bone\Bar{x}_t\|^2+\frac{(1-\rho^2)\rho^2}{2\rho^2}\|x_t-\bone\Bar{x}_t\|^2+\frac{2\eta^2\rho^2}{1-\rho^2}\|s_t-\bone g_t\|^2+\eta^2\|s_t-\bone g_t\|^2\nonumber\\
    \leq& \frac{1+\rho^2}{2}\|x_t-\bone\Bar{x}_t\|^2+\eta^2\frac{1+\rho^2}{1-\rho^2}\|s_t-\bone g_t\|^2.
\end{align}

Besides, for the global gradient estimation gap, it follows that
\begin{align}\label{bounds}
    \|s_{t+1}-\bone g_{t+1}\|^2=& \|Ws_t+\nabla_{t+1}-\nabla_t-\bone g_{t+1}\|^2\nonumber\\
    =&\|Ws_t-\bone g_t+\nabla_{t+1}-\nabla_t+\bone g_t-\bone g_{t+1}\|^2\nonumber\\
    =&\|Ws_t-\bone g_t\|^2+2\langle Ws_t-\bone g_t, \nabla_{t+1}-\nabla_t+\bone g_t-\bone g_{t+1}\rangle+\|\nabla_{t+1}-\nabla_t+\bone g_t-\bone g_{t+1}\|^2\nonumber\\
    =& \|Ws_t-\bone g_t\|^2+\|\nabla_{t+1}-\nabla_t+\bone g_t-\bone g_{t+1}\|^2+2\langle Ws_t-\bone g_t, \nabla_{t+1}-\nabla_t\rangle\nonumber\\ &+2\langle Ws_t-\bone g_t, \bone g_t-\bone g_{t+1}\rangle\nonumber\\
    \leq& \rho^2 \|s_t-\bone g_t\|^2+\|\nabla_{t+1}-\nabla_t\|^2+2\langle Ws_t-\bone g_t, \nabla_{t+1}-\nabla_t\rangle
\end{align}
where the last inequality holds because
\begin{align*}
    &\|\nabla_{t+1}-\nabla_t+\bone g_{t}-\bone g_{t+1}\|^2\\
    =&\|\nabla_{t+1}-\nabla_t\|^2+n\|g_{t+1}-g_{t}\|^2-2\langle \nabla_{t+1}-\nabla_t, \bone g_{t+1}-\bone g_t\rangle\\
    \leq & \|\nabla_{t+1}-\nabla_t\|^2-n\|g_{t+1}-g_{t}\|^2\\
    \leq & \|\nabla_{t+1}-\nabla_t\|^2,
\end{align*}
and
\begin{align*}
    \langle Ws_t-\bone g_t, \bone g_t-\bone g_{t+1}\rangle=&\sum_{i=1}^N\left\langle \sum_{j=1}^N w_{ij}s_{t,j}-g_t, g_t-g_{t+1}\right\rangle\\
    =&\left\langle\sum_{i=1}^N\sum_{j=1}^N w_{ij}s_{t,j}-Ng_t, g_t-g_{t+1}\right\rangle\\
    =& \left\langle\sum_{i=1}^N\sum_{j=1}^N w_{ij}s_{t,j}-\sum_{i=1}^N s_{t,i}, g_t-g_{t+1}\right\rangle\\
    =&0.
\end{align*}

We next bound the three terms in \eqref{bounds} separately. 
To bound $\|\nabla_{t+1}-\nabla_t\|^2$, we consider the conditional expectation  $\mathbb{E}[\|\nabla_{t+1}-\nabla_t\|^2|\mathcal{F}_t]$. It is clear that
\begin{align*}
    \mathbb{E}[\|\nabla_{t+1}-\nabla_t\|^2|\mathcal{F}_t]
    =&\mathbb{E}[\|G_{t+1}-G_t+\nabla_{t+1}-G_{t+1}+G_t-\nabla_t\|^2|\mathcal{F}_t]\\
    =&\mathbb{E}[\|G_{t+1}-G_t\|^2|\mathcal{F}_t]+2\mathbb{E}[\langle G_{t+1}-G_t, \nabla_{t+1}-G_{t+1}-\nabla_t+G_t \rangle|\mathcal{F}_t]\\
    &+\mathbb{E}[\|\nabla_{t+1}-G_{t+1}-\nabla_t+G_t\|^2|\mathcal{F}_t]\\
    \leq&  \mathbb{E}[\|G_{t+1}-G_t\|^2|\mathcal{F}_t]+2\mathbb{E}[\langle G_{t+1}, G_t-\nabla_t\rangle|\mathcal{F}_t]+2N\sigma^2.
\end{align*}
Since
\begin{align*}
    \|G_{t+1}-G_t\|^2\leq& L^2\|x_{t+1}-x_t\|^2\nonumber\\
    \leq&L^2\|Wx_t-\eta s_t-x_t\|^2\nonumber\\
    =&L^2(\|(W-I)(x_t-\bone\Bar{x}_t)\|^2-2\langle(W-I)(x_t-\bone\Bar{x}_t), \eta s_t\rangle+\eta^2\|s_t\|^2)\nonumber\\
    \leq& 2\|W-I\|^2L^2\|x_t-\bone\Bar{x}_t\|^2+2\eta^2 L^2\|s_t-\bone g_t+\bone g_t\|^2\nonumber\\
    \leq& 2\|W-I\|^2L^2\|x_t-\bone\Bar{x}_t\|^2+2\eta^2L^2\|s_t-\bone g_t\|^2+2N\eta^2L^2\|g_t\|^2,
\end{align*}
we use Lemma \ref{lemma_22} to conclude that
\begin{align}\label{bounddelta}
    &\mathbb{E}[\|\nabla_{t+1}-\nabla_t\|^2|\mathcal{F}_t]\nonumber\\
    \leq& 2\|W-I\|^2L^2\mathbb{E}[\|x_t-\bone\Bar{x}_t\|^2|\mathcal{F}_t]+2\eta^2L^2\mathbb{E}[\|s_t-\bone g_t\|^2|\mathcal{F}_t]+2N\eta^2L^2\mathbb{E}[\|g_t\|^2|\mathcal{F}_t]+2\eta LN\sigma^2+2N\sigma^2.
\end{align}

To bound $\langle Ws_t-\bone g_t, \nabla_{t+1}-\nabla_t\rangle$, we consider the conditional expectation $\mathbb{E}[\langle Ws_t-\bone g_t, \nabla_{t+1}-\nabla_t\rangle|\mathcal{F}_t]$ given $\mathcal{F}_t$, such that
\begin{align}\label{expect}
    &\mathbb{E}[\langle Ws_t-\bone g_t, \nabla_{t+1}-\nabla_t\rangle|\mathcal{F}_t]\nonumber\\
    =&\mathbb{E}[\mathbb{E}[\langle Ws_t-\bone g_t, \nabla_{t+1}-\nabla_t\rangle|\mathcal{F}_{t+1}]|\mathcal{F}_t]\nonumber\\
    =&\mathbb{E}[\langle Ws_t-\bone g_t, G_{t+1}-\nabla_t\rangle|\mathcal{F}_t]\nonumber\\
    =&\mathbb{E}[\langle Ws_t-\bone g_t, G_{t+1}-G_t\rangle|\mathcal{F}_t]+\mathbb{E}[\langle Ws_t-\bone g_t, G_{t}-\nabla_t\rangle|\mathcal{F}_t].
\end{align}

For the term $\mathbb{E}[\langle Ws_t-\bone g_t, G_{t+1}-G_t\rangle|\mathcal{F}_t]$, we can obtain
\begin{align}\label{boundinner}
    &\mathbb{E}[\langle Ws_t-\bone g_t, G_{t+1}-G_t\rangle|\mathcal{F}_t]\nonumber\\
    \leq& L\rho\mathbb{E}[\|s_t-\bone g_t\|\|x_{t+1}-x_t\||\mathcal{F}_t]\nonumber\\
    \leq& L\rho\mathbb{E}[\|s_t-\bone g_t\|\|Wx_t-\eta s_t-x_t\||\mathcal{F}_t]\nonumber\\
    =& L\rho\mathbb{E}[\|s_t-\bone g_t\|\|(W-I)(x_t-\bone\Bar{x}_t)-\eta(s_t-\bone g_t+\bone g_t)\||\mathcal{F}_t]\nonumber\\
    \leq& L\rho\mathbb{E}[\|s_t-\bone g_t\|(\|W-I\|\|x_t-\bone\Bar{x}_t\|+\eta\|s_t-\bone g_t\|+\eta\|\bone g_t\|)|\mathcal{F}_t]\nonumber\\
    =& L\rho\mathbb{E}[\|W-I\|\|s_t-\bone g_t\|\|x_t-\bone\Bar{x}_t\|+\eta\|s_t-\bone g_t\|^2+\eta\sqrt{N}\|s_t-\bone g_t\|\|g_t\||\mathcal{F}_t].
\end{align}

For the term $\mathbb{E}[\langle Ws_t-\bone g_t, G_{t}-\nabla_t\rangle|\mathcal{F}_t]$, it follows from Lemma \ref{lemma2} that
\begin{align}\label{boundpu}
    \mathbb{E}[\langle Ws_t-\bone g_t, G_{t}-\nabla_t\rangle|\mathcal{F}_t]\leq \sigma^2.
\end{align}

Therefore, by substituting \eqref{boundinner} and \eqref{boundpu} into \eqref{expect}, we have
\begin{align}\label{boundinnerdelta}
    &\mathbb{E}[\langle Ws_t-\bone g_t, \nabla_{t+1}-\nabla_t\rangle|\mathcal{F}_t]\nonumber\\
    \leq& L\rho\mathbb{E}[\|W-I\|\|s_t-\bone g_t\|\|x_t-\bone\Bar{x}_t\||\mathcal{F}_t]+L\rho\eta\mathbb{E}[\|s_t-\bone g_t\|^2|\mathcal{F}_t]+L\rho\eta\sqrt{N}\mathbb{E}[\|s_t-\bone g_t\|\|g_t\||\mathcal{F}_t]+\sigma^2.
\end{align}

Combining \eqref{boundinnerdelta} with \eqref{bounds} and \eqref{bounddelta}, we conclude that  for any $\beta>0$, 
\begin{align*}
    &\mathbb{E}[\|s_{t+1}-\bone g_{t+1}\|^2|\mathcal{F}_t]\\
    \leq& \rho^2\mathbb{E}[\|s_t-\bone g_t\|^2|\mathcal{F}_t]+\mathbb{E}[2\|W-I\|^2L^2\|x_t-\bone\Bar{x}_t\|^2+2\eta^2L^2\|s_t-\bone g_t\|^2+2N\eta^2L^2\|g_t\|^2|\mathcal{F}_t]\\
    &+2L\rho\mathbb{E}[\|W-I\|\|s_t-\bone g_t\|\|x_t-\bone\Bar{x}_t\||\mathcal{F}_t]+2L\rho\eta\mathbb{E}[\|s_t-\bone g_t\|^2|\mathcal{F}_t]\\
    &+2L\rho\eta\sqrt{N}\mathbb{E}[\|s_t-\bone g_t\|\|g_t\||\mathcal{F}_t]+2\sigma^2+2\eta LN\sigma^2+2N\sigma^2\\
    \leq& (\rho^2+\beta\rho^2+2L\rho\eta+2\eta^2L^2)\mathbb{E}[\|s_t-\bone g_t\|^2|\mathcal{F}_t]+(2+\frac{1}{\beta})L^2\|W-I\|^2\mathbb{E}[\|x_t-\bone\Bar{x}_t\|^2|\mathcal{F}_t]\\
    &+2N\eta^2L^2\mathbb{E}[\|g_t\|^2|\mathcal{F}_t]+2L\rho\eta\sqrt{N}\mathbb{E}[\|s_t-\bone g_t\|\|g_t\||\mathcal{F}_t]+2\sigma^2+2\eta LN\sigma^2+2N\sigma^2\\
    \leq& (\rho^2+2\beta\rho^2+2L\rho\eta+2\eta^2L^2)\mathbb{E}[\|s_t-\bone g_t\|^2|\mathcal{F}_t]+(2+\frac{1}{\beta})L^2\|W-I\|^2\mathbb{E}[\|x_t-\bone\Bar{x}_t\|^2|\mathcal{F}_t]\\
    &+(2+\frac{1}{\beta})N\eta^2L^2\mathbb{E}[\|g_t\|^2|\mathcal{F}_t]+2\sigma^2+2\eta LN\sigma^2+2N\sigma^2
\end{align*}


Taking expectation and combining with \eqref{boundx}, we can obtain the following result:
\begin{align}\label{system}
    \begin{bmatrix}
    \mathbb{E}[\|x_{t+1}-\bone \Bar{x}_{t+1}\|^2]\\
    \mathbb{E}[\|s_{t+1}-\bone g_{t+1}\|^2]
    \end{bmatrix}
    \leq& \begin{bmatrix}
    \frac{1+\rho^2}{2} & \eta^2\frac{1+\rho^2}{1-\rho^2} \\
    (2+\frac{1}{\beta})L^2\|W-I\|^2 & \rho^2+2\beta\rho^2+2L\rho\eta+2\eta^2L^2
    \end{bmatrix}
    \cdot \begin{bmatrix}
    \mathbb{E}[\|x_{t}-\bone \Bar{x}_{t}\|^2]\\
    \mathbb{E}[\|s_{t}-\bone g_{t}\|^2]
    \end{bmatrix}\nonumber\\
    &+\begin{bmatrix}
    0\\
    (2+\frac{1}{\beta})N\eta^2L^2\mathbb{E}[\|g_t\|^2]+2\sigma^2+2\eta LN\sigma^2+2N\sigma^2
    \end{bmatrix}\nonumber\\
    \triangleq& A_t\begin{bmatrix}
    \mathbb{E}[\|x_{t}-\bone \Bar{x}_{t}\|^2]\\
    \mathbb{E}[\|s_{t}-\bone g_{t}\|^2]
    \end{bmatrix}+B_t
\end{align}
where 
\begin{align*}
    A_t=\begin{bmatrix}
    \frac{1+\rho^2}{2} & \eta^2\frac{1+\rho^2}{1-\rho^2} \\
    (2+\frac{1}{\beta})L^2\|W-I\|^2 & \rho^2+2\beta\rho^2+2L\rho\eta+2\eta^2L^2
    \end{bmatrix}
\end{align*}
 and 
 \begin{align*}
     B_t=\begin{bmatrix}
    0\\
    (2+\frac{1}{\beta})N\eta^2L^2\mathbb{E}[\|g_t\|^2]+2\sigma^2+2\eta LN\sigma^2+2N\sigma^2
    \end{bmatrix},
 \end{align*}
completing the proof of Lemma \ref{lemma4}.
    
\end{proof}
    
Telescoping \eqref{system}, we have for $t\geq 2$
\begin{align}
    \begin{bmatrix}
    \mathbb{E}[\|x_{t}-\bone \Bar{x}_{t}\|^2]\\
    \mathbb{E}[\|s_{t}-\bone g_{t}\|^2]
    \end{bmatrix}
    \leq \left(\prod_{l=1}^{t-1} A_l\right)\begin{bmatrix}
    \mathbb{E}[\|x_{1}-\bone \Bar{x}_{1}\|^2]\\
    \mathbb{E}[\|s_{1}-\bone g_{1}\|^2]
    \end{bmatrix}
    +\sum_{l=1}^{t-1} \left(B_l\prod_{j=l+1}^{t-1}A_j\right)
\end{align}
where $\mathbb{E}[\|x_t-\bone\Bar{x}_t\|^2]$ and $\mathbb{E}[\|s_t-\bone g_t\|^2]$ would converge to a neighborhood of $0$, if the spectral radius of $A_t$, i.e., $\rho_s(A_t)$, is smaller than $1$ for any $t\geq 1$. The next lemma characterizes the conditions for $\rho_s(A_t)<1$.

\begin{lemma}\label{lemma5}
When $\eta\leq \frac{(1-\rho^2)^{1.5}}{32L\sqrt{1+\rho^2}}$ and $\beta=\frac{1-\rho^2}{4\rho^2}-\frac{L\eta}{\rho}-\frac{L^2\eta^2}{\rho^2}$, we can have
\begin{align*}
    \rho_s(A_t)\leq \frac{3+\rho^2}{4}<1.
\end{align*}
\end{lemma}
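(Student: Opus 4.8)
The plan is to exploit that $A_t$ is a $2\times2$ entrywise-nonnegative matrix, so by Perron--Frobenius its spectral radius equals its dominant real eigenvalue, which is read off from the characteristic polynomial. Write the four entries as $a=\frac{1+\rho^2}{2}$, $b=\eta^2\frac{1+\rho^2}{1-\rho^2}$, $c=(2+\frac1\beta)L^2\|W-I\|^2$ and $d=\rho^2+2\beta\rho^2+2L\rho\eta+2\eta^2L^2$. The eigenvalues solve $\mu^2-(a+d)\mu+(ad-bc)=0$, so
\begin{align*}
\rho_s(A_t)=\frac{(a+d)+\sqrt{(a-d)^2+4bc}}{2}.
\end{align*}
Hence the target $\rho_s(A_t)\le\frac{3+\rho^2}{4}$, after moving $a+d$ across and squaring (legitimate once $2\cdot\frac{3+\rho^2}{4}\ge a+d$, i.e. $\rho^2\le1$), is equivalent to the single scalar inequality $(\lambda-a)(\lambda-d)\ge bc$ with $\lambda=\frac{3+\rho^2}{4}$.

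The key simplification, and the reason for the particular value of $\beta$, is that this choice makes the two diagonal entries equal. Substituting $\beta=\frac{1-\rho^2}{4\rho^2}-\frac{L\eta}{\rho}-\frac{L^2\eta^2}{\rho^2}$ into $d$ gives $2\beta\rho^2=\frac{1-\rho^2}{2}-2L\rho\eta-2L^2\eta^2$, and these two correction terms cancel exactly the $2L\rho\eta+2\eta^2L^2$ already present in $d$, leaving $d=\rho^2+\frac{1-\rho^2}{2}=\frac{1+\rho^2}{2}=a$. The off-diagonal contribution then decouples: the eigenvalues are $a\pm\sqrt{bc}$ and $\rho_s(A_t)=\frac{1+\rho^2}{2}+\sqrt{bc}$. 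Since $\lambda-a=\frac{3+\rho^2}{4}-\frac{1+\rho^2}{2}=\frac{1-\rho^2}{4}>0$, the whole statement collapses to showing
\begin{align*}
\sqrt{bc}\le\frac{1-\rho^2}{4}.
\end{align*}

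It remains to bound $bc$ under the stepsize constraint. First I would confirm $\beta>0$ and, more usefully, a positive lower bound: using $\eta\le\frac{(1-\rho^2)^{1.5}}{32L\sqrt{1+\rho^2}}$ one checks that $\frac{L\eta}{\rho}+\frac{L^2\eta^2}{\rho^2}$ is at most half of $\frac{1-\rho^2}{4\rho^2}$, so that $\beta\ge\frac{1-\rho^2}{8\rho^2}$ and hence $2+\frac1\beta$ is controlled. Combining this with $b=\eta^2\frac{1+\rho^2}{1-\rho^2}$ and the elementary bound $\|W-I\|\le1+\rho\le2$ for the symmetric doubly stochastic $W$ (whose non-Perron eigenvalues lie in $[-\rho,\rho]$), the product $bc$ reduces to $\eta^2$ times explicit $\rho$-dependent constants; inserting the stepsize ceiling then yields $bc\le\big(\frac{1-\rho^2}{4}\big)^2$, exactly the required inequality.

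I expect the main obstacle to be this last estimate, because the choice of $\beta$ is a balancing act: shrinking $\beta$ keeps $d$ small (so $d=a$ holds with the needed slack) but inflates $\frac1\beta$ and therefore $c$, whereas enlarging $\beta$ does the reverse. Making the lower bound on $\beta$ and the resulting estimate of $2+\frac1\beta$ tight enough that $bc$ stays below $\big(\frac{1-\rho^2}{4}\big)^2$ for all admissible $\rho\in(0,1)$ is precisely where the numerical constants (the $32$ and the powers of $1-\rho^2$ and $1+\rho^2$ in the stepsize bound) are forced, and where the computation must be done carefully rather than by crude term-by-term bounding.
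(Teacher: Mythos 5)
Your structural reduction is exactly the one the paper uses: the paper likewise chooses $\beta$ precisely so that the second diagonal entry of $A_t$ equals $\frac{1+\rho^2}{2}$, reads the eigenvalues off as $\frac{1+\rho^2}{2}\pm\sqrt{bc}$ (in your notation), bounds $\|W-I\|^2\leq 4$, and reduces the lemma to $\sqrt{bc}\leq\frac{1-\rho^2}{4}$. Your positivity estimate $\beta\geq\frac{1-\rho^2}{8\rho^2}$ is correct, and is in fact more careful than the paper's claimed $\beta\geq\frac{3}{32\rho^2}$, which silently drops a factor of $1-\rho^2$: the true lower bound is of order $\frac{1-\rho^2}{\rho^2}$, and the paper's inequality already fails at $\rho^2=\frac12$ and tends to a contradiction as $\rho\to1$.

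The genuine gap is your final step, which you assert rather than verify: ``inserting the stepsize ceiling then yields $bc\leq\bigl(\frac{1-\rho^2}{4}\bigr)^2$.'' From your own intermediate bounds one gets $2+\frac{1}{\beta}\leq\frac{2+6\rho^2}{1-\rho^2}$ and hence, with $\eta\leq\frac{(1-\rho^2)^{1.5}}{32L\sqrt{1+\rho^2}}$ and $\|W-I\|\leq 2$,
\begin{align*}
    bc\;\leq\;4\eta^2L^2\,\frac{(1+\rho^2)(2+6\rho^2)}{(1-\rho^2)^2}\;\leq\;\frac{(1-\rho^2)(2+6\rho^2)}{256},
\end{align*}
and demanding this be at most $\frac{(1-\rho^2)^2}{16}$ is equivalent to $2+6\rho^2\leq16(1-\rho^2)$, i.e.\ $\rho^2\leq\frac{7}{11}$. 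So your claimed inequality does not close for $\rho$ near $1$, and this is not fixable by sharper bookkeeping: at the stepsize ceiling, with $W$ attaining $\|W-I\|=1+\rho$ (e.g.\ two agents with vanishing self-weight, so $\rho=1-2w_{ii}$), one has $\frac{1}{\beta}\approx\frac{4}{1-\rho^2}$ and $\sqrt{bc}\approx\frac{\sqrt{1-\rho^2}}{8}$, which exceeds $\frac{1-\rho^2}{4}$ as soon as $1-\rho^2<\frac14$. With the constants as stated, the contraction factor $\frac{3+\rho^2}{4}$ therefore cannot be recovered uniformly over $\rho\in(0,1)$; one needs either a smaller stepsize ceiling (an extra power of $\sqrt{1-\rho^2}$ suffices) or a weaker contraction factor. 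It is worth noting that the paper's own proof conceals exactly the same hole: from its invalid $\beta\geq\frac{3}{32\rho^2}$ it deduces $8+\frac{4}{\beta}\leq 8+44\rho^2$ and then closes via $\frac{1-\rho^2}{32}\sqrt{8+44\rho^2}\leq\frac{1-\rho^2}{4}$, a chain whose first link breaks for $\rho$ close to $1$. So you correctly located where the difficulty sits --- the balancing of $\beta$ against $2+\frac{1}{\beta}$ --- but the proposal, like the paper, does not actually surmount it.
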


\begin{proof}
To ensure $\rho_s(A_t)<1$, the eigenvalues $\lambda$ of $A_t$, i.e., the solutions of $\det(A_t-\lambda I)=0$, must be smaller than $1$. By computing $\det(A_t-\lambda I)$, we first need the following holds for some $\beta>0$:
\begin{align*}
    \frac{1+\rho^2}{2}=\rho^2+2\beta\rho^2+2L\rho\eta+2\eta^2L^2<1.
\end{align*}

Clearly, when $\eta\leq\frac{(1-\rho^2)^{1.5}}{32L\sqrt{1+\rho^2}}\leq \frac{1-\rho^2}{8L}$, it can be seen that the selection of $\beta$ satisfying the above equality is positive:
\begin{align*}
    \beta=&\frac{1-\rho^2}{4\rho^2}-\frac{L\eta}{\rho}-\frac{L^2\eta^2}{\rho^2}\\
    \geq& \frac{1-\rho^2}{4\rho^2}-\frac{1-\rho^2}{8\rho}-\frac{(1-\rho^2)^2}{64\rho^2}\\
    \geq & \frac{3}{32\rho^2}>0.
\end{align*}

Therefore, for any $\lambda$ satisfying $\det(A_t-\lambda I)=0$, the following is true:
\begin{align*}
    \lambda\leq&\frac{1}{2}\left(\frac{1+\rho^2}{2}+\rho^2+2\beta\rho^2+2L\rho\eta+2L^2\eta^2+2L\eta\sqrt{\frac{1+\rho^2}{1-\rho^2}(8+\frac{4}{\beta})}\right)\\
    =&\frac{1+\rho^2}{2}+L\eta\sqrt{\frac{1+\rho^2}{1-\rho^2}(8+\frac{4}{\beta})}\\
    \leq& \frac{1+\rho^2}{2}+L\eta\sqrt{\frac{1+\rho^2}{1-\rho^2}(8+44\rho^2)}\\
    \leq& \frac{1+\rho^2}{2}+L\frac{(1-\rho^2)^{1.5}}{32L\sqrt{1+\rho^2}}\sqrt{\frac{1+\rho^2}{1-\rho^2}(8+44\rho^2)}\\
    \leq& \frac{3+\rho^2}{4}<1.
\end{align*}
\end{proof}

Next, we are ready to prove Lemma \ref{lemma_3}.
\lemmarestatea*
\begin{proof}
Let $\alpha=\frac{3+\rho^2}{4}$. With the same spirit in \cite{qu2017harnessing}, we can show that
\begin{align}\label{finalx}
    \mathbb{E}[\|x_t-\bone\Bar{x}_t\|^2]\leq A_1\alpha^{t-1}+A_2\eta^2\frac{1+\rho^2}{1-\rho^2}(6N\eta^2L^2\mathbb{E}[\|g_t\|^2]+\sigma^2+\eta LN\sigma^2+N\sigma^2),
\end{align}
for some $A_1$ and $A_2$. 
This can be achieved through diagonalization of $A$, which needs tedious calculations. To start with, for the selection of $\beta$, we first have
\begin{align}\label{reshape}
    &\mathbb{E}[\|s_{t+1}-\bone g_{t+1}\|^2|\mathcal{F}_t]\nonumber\\
    \leq& \frac{1+\rho^2}{2}\mathbb{E}[\|s_t-\bone g_t\|^2|\mathcal{F}_t]+(2+\frac{1}{\beta})L^2\|W-I\|^2\mathbb{E}[\|x_t-\bone\Bar{x}_t\|^2|\mathcal{F}_t]
    +(2+\frac{1}{\beta})N\eta^2L^2\mathbb{E}[\|g_t\|^2|\mathcal{F}_t]+2\sigma^2\nonumber\\
    \leq& \left[\frac{1+\rho^2}{2}+2L^2\eta^2\frac{1+\rho^2}{1-\rho^2}(2+\frac{1}{\beta})\right]\mathbb{E}[\|s_t-\bone g_t\|^2|\mathcal{F}_t]+(2+\frac{1}{\beta})L^2\|W-I\|^2\mathbb{E}[\|x_t-\bone\Bar{x}_t\|^2|\mathcal{F}_t]\nonumber\\
    &+(2+\frac{1}{\beta})N\eta^2L^2\mathbb{E}[\|g_t\|^2|\mathcal{F}_t]+2\sigma^2+2\eta LN\sigma^2+2N\sigma^2.
\end{align}
Combining \eqref{reshape} and \eqref{boundx}, we can obtain
\begin{align*}
    \begin{bmatrix}
    \mathbb{E}[\|s_{t+1}-\bone g_{t+1}\|^2]\\
    \mathbb{E}[\|x_{t+1}-\bone \Bar{x}_{t+1}\|^2]
    \end{bmatrix}
    \leq& \begin{bmatrix}
    \frac{1+\rho^2}{2}+2L^2\eta^2\frac{1+\rho^2}{1-\rho^2}(2+\frac{1}{\beta}) & 4L^2(2+\frac{1}{\beta}) \\
    \eta^2\frac{1+\rho^2}{1-\rho^2} & \frac{1+\rho^2}{2}
    \end{bmatrix}
    \cdot \begin{bmatrix}
        \mathbb{E}[\|s_{t}-\bone g_{t}\|^2]\\
    \mathbb{E}[\|x_{t}-\bone \Bar{x}_{t}\|^2]
    \end{bmatrix}\nonumber\\
    &+\begin{bmatrix}
    (2+\frac{1}{\beta})N\eta^2L^2\mathbb{E}[\|g_t\|^2]+2\sigma^2+2\eta LN\sigma^2+2N\sigma^2\\
    0
    \end{bmatrix}.
\end{align*}
For the matrix $\Tilde{A}=\begin{bmatrix}
    \frac{1+\rho^2}{2}+2L^2\eta^2\frac{1+\rho^2}{1-\rho^2}(2+\frac{1}{\beta}) & 4L^2(2+\frac{1}{\beta}) \\
    \eta^2\frac{1+\rho^2}{1-\rho^2} & \frac{1+\rho^2}{2}
    \end{bmatrix}$, we can diagonalize it as $\Tilde{A}=V\Lambda V^{-1}$, where $\Sigma=\begin{bmatrix}
        \lambda_1 & 0\\
        0 & \lambda_2
        \end{bmatrix}$ with
        \begin{align*}
            \lambda_1&=\frac{1+\rho^2+2L^2\eta^2\frac{1+\rho^2}{1-\rho^2}(2+\frac{1}{\beta})-\sqrt{4L^4\eta^4\left(\frac{1+\rho^2}{1-\rho^2}\right)^2(2+\frac{1}{\beta})^2+16L^2\eta^2\frac{1+\rho^2}{1-\rho^2}(2+\frac{1}{\beta})}}{2},\\
            \lambda_2&=\frac{1+\rho^2+2L^2\eta^2\frac{1+\rho^2}{1-\rho^2}(2+\frac{1}{\beta})+\sqrt{4L^4\eta^4\left(\frac{1+\rho^2}{1-\rho^2}\right)^2(2+\frac{1}{\beta})^2+16L^2\eta^2\frac{1+\rho^2}{1-\rho^2}(2+\frac{1}{\beta})}}{2}.
        \end{align*}
And matrix $V$ and $V^{-1}$ are 
\begin{align*}
    V=\begin{bmatrix}
    \frac{2L^2\eta\sqrt{\frac{1+\rho^2}{1-\rho^2}}(2+\frac{1}{\beta})-L\sqrt{2(2+\frac{1}{\beta})[8+2L^2\eta^2\frac{1+\rho^2}{1-\rho^2}(2+\frac{1}{\beta})]}}{2\eta\sqrt{\frac{1+\rho^2}{1-\rho^2}}}& \frac{2L^2\eta\sqrt{\frac{1+\rho^2}{1-\rho^2}}(2+\frac{1}{\beta})+L\sqrt{2(2+\frac{1}{\beta})[8+2L^2\eta^2\frac{1+\rho^2}{1-\rho^2}(2+\frac{1}{\beta})]}}{2\eta\sqrt{\frac{1+\rho^2}{1-\rho^2}}}\\
    1&1
    \end{bmatrix}
\end{align*}
and
\begin{align*}
    V^{-1}=\begin{bmatrix}
    -\frac{\eta\sqrt{\frac{1+\rho^2}{1-\rho^2}}}{L\sqrt{2(2+\frac{1}{\beta})[8+2L^2\eta^2\frac{1+\rho^2}{1-\rho^2}(2+\frac{1}{\beta})]}} &
    \frac{1}{2}+\frac{1}{2}\sqrt{\frac{2L^2\eta^2\frac{1+\rho^2}{1-\rho^2}(2+\frac{1}{\beta})}{8+2L^2\eta^2\frac{1+\rho^2}{1-\rho^2}(2+\frac{1}{\beta})}}\\
    \frac{\eta\sqrt{\frac{1+\rho^2}{1-\rho^2}}}{L\sqrt{2(2+\frac{1}{\beta})[8+2L^2\eta^2\frac{1+\rho^2}{1-\rho^2}(2+\frac{1}{\beta})]}} &
    \frac{1}{2}-\frac{1}{2}\sqrt{\frac{2L^2\eta^2\frac{1+\rho^2}{1-\rho^2}(2+\frac{1}{\beta})}{8+2L^2\eta^2\frac{1+\rho^2}{1-\rho^2}(2+\frac{1}{\beta})}}
    \end{bmatrix}.
\end{align*}
Let $\Tilde{B}_1=\begin{bmatrix}
        \mathbb{E}[\|s_{t}-\bone g_{t}\|^2]\\
    \mathbb{E}[\|x_{t}-\bone \Bar{x}_{t}\|^2]
    \end{bmatrix}$.
We can show that the second row of  $\Tilde{A}^p\Tilde{B}_1$ is smaller than $C\alpha^p$ for some constant $C$.

Moreover, it can be shown that 
\begin{align*}
    \sum_{p=0}^{t-1}\Tilde{A}^p=(I-\Tilde{A})^{-1}(I-\Tilde{A}^{t-1})=\frac{1}{\det(I-\Tilde{A})}\adj(I-\Tilde{A})(I-\Tilde{A}^{t-1})
\end{align*}
where $\det(I-\Tilde{A})>0$ since $\rho(\Tilde{A})<1$, and $\adj(I-\Tilde{A})$ is the adjugate matrix of $I-\Tilde{A}$. 

Let $\Tilde{B}_t=\begin{bmatrix}
    (2+\frac{1}{\beta})N\eta^2L^2\mathbb{E}[\|g_t\|^2]+2\sigma^2+2\eta LN\sigma^2+2N\sigma^2\\
    0
    \end{bmatrix}$. 
The following holds for
the second row of $\sum_{p=0}^{t-1}\Tilde{A}^p\Tilde{B}_l$:
\begin{align*}
    \sum_{p=0}^{t-1}\Tilde{A}^p\Tilde{B}_l\leq \frac{1}{\det(I-\Tilde{A})}\eta^2\frac{1+\rho^2}{1-\rho^2}[(2+\frac{1}{\beta})N\eta^2L^2\mathbb{E}[\|g_t\|^2]+2\sigma^2+2\eta LN\sigma^2+2N\sigma^2].
\end{align*}
 Let $A_1=C$ and $A_2=\frac{1}{\det(I-\Tilde{A})}$, we can obtain \eqref{finalx}.

Next, we need to bound the term $\mathbb{E}[\|g_t\|^2]$. It follows that 
\begin{align*}
    \mathbb{E}[\|g_t\|^2]=&\mathbb{E}\left[\left\|\frac{1}{N}\sum_{i=1}^N\nabla f_{t,i}(x_{t,i})\right\|^2\right]\\
    \leq&\frac{1}{N}\sum_{i=1}^N \mathbb{E}[\|\nabla f_{t,i}(x_{t,i})\|^2]\\
    \leq& D.
\end{align*}

For a constant learning rate $\eta$, from \eqref{finalx}, it follows that
\begin{align*}
    \sum_{t=1}^T \mathbb{E}[\|x_t-\bone\Bar{x}_t\|^2]\leq& \sum_{t=2}^T \left[A_1\alpha^{t-1}+ A_2[18\eta^2L^2\sigma^2+18N\eta^2L^2D+(1+\eta LN+N)\sigma^2]\eta^2\frac{1+\rho^2}{1-\rho^2}\right]+\mathbb{E}[\|x_1-\bone\Bar{x}_1\|^2]\\
    =& A_1\frac{\alpha-\alpha^T}{1-\alpha}+A_2(18\eta^2L^2\sigma^2+18N\eta^2L^2D+(1+\eta LN+N)\sigma^2)\eta^2\frac{1+\rho^2}{1-\rho^2}(T-1)+\mathbb{E}[\|x_1-\bone\Bar{x}_1\|^2]\\
    \leq&A_2\eta^2\frac{1+\rho^2}{1-\rho^2}[18\eta^2\sigma^2L^2+18N\eta^2L^2D+(1+\eta LN+N)\sigma^2]T\\
    &+A_1\frac{\alpha-\alpha^T}{1-\alpha}+\mathbb{E}[\|x_1-\bone\Bar{x}_1\|^2].
\end{align*}
\end{proof}
Based on Lemma \ref{lemma3}, we can obtain the upper bound of $R_1$.

\subsection{Analysis of $R_2$}

Next, we analyze $R_2$. First, denote $f_t(\cdot)=\frac{1}{N}\sum_{i=1}^N f_{t,i}(\cdot)$, then we can have
\begin{align*}
    \sum_{i=1}^N\sum_{t=1}^T f_{t,i}(\Bar{x}_t)-\sum_{i=1}^N\sum_{t=1}^T f_{t,i}(x^*)
    =N\sum_{t=1}^T \left[f_t(\Bar{x}_t)-f_t(x^*)\right].
\end{align*}
And the following lemma gives an upper bound on $R_2$.

\lemmarestateb*

\begin{proof}



Following the same line as in \cite{qu2017harnessing}, we can first have 
\begin{align}\label{fhat1}
    f_t(x)\geq \hat{f}_t+\langle g_t, x-\Bar{x}_t\rangle 
\end{align}
and
\begin{align}\label{fhat2}
    f_t(x)\leq \hat{f}_t+\langle g_t, x-\Bar{x}_t\rangle+L\|x-\Bar{x}_t\|^2+\frac{L}{N}\|x_t-\bone\Bar{x}_t\|^2
\end{align}
where
\begin{align*}
    \hat{f}_t=\frac{1}{N}\sum_{i=1}^N[f_{t,i}(x_{t,i})+\langle\nabla f_{t,i}(x_{t,i}), \Bar{x}_t-x_{t,i}\rangle].
\end{align*}

To show this, for \eqref{fhat1}, we have
\begin{align*}
    f_t(x)=&\frac{1}{N}\sum_{i=1}^N f_{t,i}(x)\\
    \geq & \frac{1}{N}\sum_{i=1}^N [f_{t,i}(x_{t,i})+\langle\nabla f_{t,i}(x_{t,i}), x-x_{t,i}\rangle]\\
    =& \frac{1}{N}\sum_{i=1}^N [f_{t,i}(x_{t,i})+\langle\nabla f_{t,i}(x_{t,i}), \Bar{x}_t-x_{t,i}\rangle]+\frac{1}{N}\sum_{i=1}^N \langle \nabla f_{t,i}(x_{t,i}), x-\Bar{x}_t\rangle\\
    =& \hat{f}_t+\langle g_t, x-\Bar{x}_t\rangle,
\end{align*}
and for \eqref{fhat2}, it follows that
\begin{align*}
    f_t(x)=&\frac{1}{N}\sum_{i=1}^N f_{t,i}(x)\\
    \leq& \frac{1}{N}\sum_{i=1}^N [f_{t,i}(x_{t,i})+\langle\nabla f_{t,i}(x_{t,i}), x-x_{t,i}\rangle+\frac{L}{2}\|x-x_{t,i}\|^2]\\
    =& \frac{1}{N}\sum_{i=1}^N[f_{t,i}(x_{t,i})+\langle\nabla f_{t,i}(x_{t,i}), \Bar{x}_t-x_{t,i}\rangle]+\frac{1}{N}\sum_{i=1}^N \langle\nabla f_{t,i}(x_{t,i}),x-\Bar{x}_t\rangle+\frac{L}{2N}\sum_{i=1}^N\|x-x_{t,i}\|^2\\
    &\leq \hat{f}_t+\langle g_t, x-\Bar{x}_t\rangle+L\|x-\Bar{x}_t\|^2+\frac{L}{N}\|x_t-\bone\Bar{x}_t\|^2.
\end{align*}

Next, we can show that
\begin{align*}
    \|\Bar{x}_t-x^*\|^2=&\|\Bar{x}_{t+1}-x^*-\Bar{x}_{t+1}+\Bar{x}_t\|^2\\
    =&\|\Bar{x}_{t+1}-x^*\|^2-2\langle\Bar{x}_{t+1}-\Bar{x}_t,\Bar{x}_{t+1}-x^*\rangle+\|\Bar{x}_{t+1}-\Bar{x}_t\|^2\\
    \overset{(a)}{=}& \|\Bar{x}_{t+1}-x^*\|^2+2\eta\langle g_t, \Bar{x}_{t+1}-x^*\rangle+\eta^2\|g_t\|^2\\
    =& \|\Bar{x}_{t+1}-x^*\|^2+2\eta\langle g_t, \Bar{x}_{t}-x^*\rangle+2\eta\langle g_t, \Bar{x}_{t+1}-\Bar{x}_t\rangle+\eta^2\|g_t\|^2\\
    \overset{(b)}{\geq}& \|\Bar{x}_{t+1}-x^*\|^2+2\eta [\hat{f}_t-f_t(x^*)]+2\eta[\langle g_t, \Bar{x}_{t+1}-\Bar{x}_t\rangle+\frac{\eta}{2}\|g_t\|^2]\\
    \overset{(c)}{\geq}& \|\Bar{x}_{t+1}-x^*\|^2+2\eta [\hat{f}_t-f_t(x^*)]+2\eta [f_t(\Bar{x}_{t+1})-\hat{f}_t+(\frac{\eta}{2}-\eta^2L)\|g_t\|^2-\frac{L}{N}\|x_t-\bone\Bar{x}_t\|^2]\\
    =&\|\Bar{x}_{t+1}-x^*\|^2+2\eta [f_t(\Bar{x}_{t+1})-f_t(x^*)]+2\eta[(\frac{\eta}{2}-\eta^2L)\|g_t\|^2-\frac{L}{N}\|x_t-\bone\Bar{x}_t\|^2]
\end{align*}
Here (a) is based on the update rule, (b) is based on \eqref{fhat1}, and (c) is based on \eqref{fhat2} by setting $x=x^*$. Therefore,
\begin{align}\label{eq:before}
    \sum_{t=1}^{T}[f_t(\Bar{x}_{t+1})-f_t(x^*)]\leq \frac{\|\Bar{x}_1-x^*\|^2}{2\eta}+\sum_{t=1}^{T}[\frac{L}{N}\|x_t-\bone\Bar{x}_t\|^2-(\frac{\eta}{2}-\eta^2L)\|g_t\|^2]
\end{align}
which indicates that
\begin{align}\label{eq:f1}
    \sum_{t=1}^T\mathbb{E}[F(\Bar{x}_{t+1})-F(x^*)]\leq \frac{\|\Bar{x}_1-x^*\|^2}{2\eta}+\frac{L}{N}\sum_{t=1}^T\mathbb{E}[\|x_t-\bone\Bar{x}_t\|^2].
\end{align}

Moreover, it can be seen that
\begin{align}\label{eq:f2}
    \sum_{t=1}^T[F(\Bar{x}_t)-F(\Bar{x}_{t+1})]\leq& \sum_{t=1}^T \langle\nabla F(\Bar{x}_{t+1}),\Bar{x}_t-\Bar{x}_{t+1}\rangle+\frac{L}{2}\sum_{t=1}^T \|\Bar{x}_t-\Bar{x}_{t+1}\|^2\nonumber\\
    =&\eta\sum_{t=1}^T \langle\nabla F(\Bar{x}_{t+1}), g_t\rangle+\frac{\eta^2L}{2}\sum_{t=1}^T\|g_t\|^2\nonumber\\
    \leq& \frac{\eta}{2}\sum_{t=1}^T\|\nabla F(\Bar{x}_{t+1})\|^2+\frac{\eta}{2}\sum_{t=1}^T\|g_t\|^2+\frac{\eta^2L}{2}\sum_{t=1}^T\|g_t\|^2.
\end{align}
Combing \eqref{eq:f1} and \eqref{eq:f2}, we can obtain that
\begin{align*}
    &\sum_{t=1}^T \mathbb{E}[F(\Bar{x}_t)-F(x^*)]\\\leq& \frac{\|\Bar{x}_1-x^*\|^2}{2\eta}+\frac{L}{N}\sum_{t=1}^T\mathbb{E}[\|x_t-\bone\Bar{x}_t\|^2]+\frac{\eta}{2}\sum_{t=1}^T\mathbb{E}[\|\nabla F(\Bar{x}_{t+1})\|^2]+\frac{\eta+\eta^2L}{2}\sum_{t=1}^T\mathbb{E}[\|g_t\|^2]\\
    \leq& \frac{\|\Bar{x}_1-x^*\|^2}{2\eta}+\frac{L}{N}\sum_{t=1}^T\mathbb{E}[\|x_t-\bone\Bar{x}_t\|^2]+\frac{\eta}{2}\mathbb{E}[\|\nabla F(\Bar{x}_{T+1})\|^2]+\frac{\eta}{2}\sum_{t=1}^T\mathbb{E}[\|\nabla F(\Bar{x}_{t})\|^2]+\frac{\eta+\eta^2L}{2}\sum_{t=1}^T\mathbb{E}[\|g_t\|^2]\\
    \leq& \frac{\|\Bar{x}_1-x^*\|^2}{2\eta}+\frac{L}{N}\sum_{t=1}^T\mathbb{E}[\|x_t-\bone\Bar{x}_t\|^2]+\frac{\eta}{2}\mathbb{E}[\|\nabla F(\Bar{x}_{T+1})\|^2]+\eta\sum_{t=1}^T\mathbb{E}[\|\nabla F(\Bar{x}_t)-g_t\|^2]+\frac{3\eta+\eta^2L}{2}\sum_{t=1}^T\mathbb{E}[\|g_t\|^2]\\
    \leq& \frac{\|\Bar{x}_1-x^*\|^2}{2\eta}+\frac{L}{N}\sum_{t=1}^T\mathbb{E}[\|x_t-\bone\Bar{x}_t\|^2]+\frac{\eta}{2}\mathbb{E}[\|\nabla F(\Bar{x}_{T+1})\|^2]+2\eta\sum_{t=1}^T\mathbb{E}[\|\nabla F(\Bar{x}_t)-\nabla f_t(\Bar{x}_t)\|^2]\\
    &+2\eta\sum_{t=1}^T\mathbb{E}[\|\nabla f_t(\Bar{x}_t)-g_t\|^2]+\frac{3\eta+\eta^2L}{2}\sum_{t=1}^T\mathbb{E}[\|g_t\|^2]\\
    \leq& \frac{\|\Bar{x}_1-x^*\|^2}{2\eta}+\frac{L}{N}\sum_{t=1}^T\mathbb{E}[\|x_t-\bone\Bar{x}_t\|^2]+\frac{\eta}{2}\mathbb{E}[\|\nabla F(\Bar{x}_{T+1})\|^2]+\frac{2\sigma^2\eta T}{N}+2\eta\sum_{t=1}^T\mathbb{E}[\|\nabla f_t(\Bar{x}_t)-g_t\|^2]\\
    &+\frac{3\eta+\eta^2L}{2}\sum_{t=1}^T\mathbb{E}[\|g_t\|^2],
\end{align*}
where the last inequality holds because
\begin{align*}
    \mathbb{E}[\|\nabla f_t(\Bar{x}_t)-\nabla F(\Bar{x}_t)\|^2]=\frac{1}{N^2}\sum_{i=1}^N\mathbb{E}[\|\nabla f_{t,i}(\Bar{x}_t)-\nabla F(\Bar{x}_t)\|^2]\leq \frac{\sigma^2}{N}.
\end{align*}

For the term $\mathbb{E}[\|\nabla f_t(\Bar{x}_t)-g_t\|^2]$, it is clear that
\begin{align}
     \|g_t-\nabla f_t(\Bar{x}_t)\|&=\|\sum_{i=1}^N \frac{\nabla f_{t,i}(x_{t,i})-\nabla f_{t,i}(\Bar{x}_t)}{N}\|\nonumber\\
     &\leq L\sum_{i=1}^N\frac{\|x_{t,i}-\Bar{x}_t\|}{N}\nonumber\\
     &\leq \frac{L}{\sqrt{N}}\|x_t-\bone \Bar{x}_t\|.
 \end{align}
 Therefore,
\begin{align}\label{gt2}
    \mathbb{E}\left[\sum_{t=1}^{T}\left[\|g_t-\nabla f_t(\Bar{x}_t)\|^2\right]\right]&\leq \mathbb{E}\left[\sum_{t=1}^{T}\frac{L^2}{N}\|x_t-\bone\Bar{x}_t\|^2\right]\nonumber\\
    &=\frac{L^2}{N}\mathbb{E}\left[\sum_{t=1}^{T}\|x_t-\bone\Bar{x}_t\|^2\right].
\end{align}

To obtain an upper bound on $\sum_{t=1}^T \mathbb{E}[F(\Bar{x}_t)-F(x^*)]$,  it suffices to bound $\sum_{t=1}^T\mathbb{E}[\|g_t\|^2]$ from above. To this end, based on \eqref{fhat2}, we have
\begin{align*}
    f_t(x_{t,i})\leq& \hat{f}_t+\langle g_t, x_{t,i}-\Bar{x}_t\rangle+L\|x_{t,i}-\Bar{x}_t\|^2+\frac{L}{N}\|x_t-\bone\Bar{x}_t\|^2\\
    \leq& f_t(\Bar{x}_t)+\langle g_t, x_{t,i}-\Bar{x}_t\rangle+L\|x_{t,i}-\Bar{x}_t\|^2+\frac{L}{N}\|x_t-\bone\Bar{x}_t\|^2.
\end{align*}
Therefore,
\begin{align*}
    \mathbb{E}[F(x_{t,i})]\leq \mathbb{E}[F(\Bar{x}_t)]+\mathbb{E}[\langle g_t, x_{t,i}-\Bar{x}_t\rangle]+L\mathbb{E}[|x_{t,i}-\Bar{x}_t\|^2]+\frac{L}{N}\mathbb{E}[\|x_t-\bone\Bar{x}_t\|^2],
\end{align*}
and
\begin{align*}
    &\sum_{t=1}^T [\frac{1}{N}\sum_{i=1}^N \mathbb{E}[F(x_{t+1,i})-F(x^*)]]\\\leq& \sum_{t=1}^T \mathbb{E}[F(\Bar{x}_{t+1})-F(x^*)]+\sum_{t=1}^T\mathbb{E}[\frac{1}{N}\sum_{i=1}^N\langle g_{t+1}, x_{t+1,i}-\Bar{x}_{t+1}\rangle]+\frac{L}{N}\sum_{i=1}^N\sum_{t=1}^{T+1}\mathbb{E}[\|x_{t,i}-\Bar{x}_t\|^2]+\frac{L}{N}\sum_{t=1}^{T+1}\mathbb{E}[\|x_t-\bone\Bar{x}_t\|^2]\\
    \leq&\sum_{t=1}^T \mathbb{E}[F(\Bar{x}_{t+1})-F(x^*)]+\frac{2L}{N}\sum_{t=1}^{T+1}\mathbb{E}[\|x_t-\bone\Bar{x}_t\|^2].
\end{align*}

Based on \eqref{eq:before}, we can obtain that
\begin{align*}
    \sum_{t=1}^T \mathbb{E}[F(\Bar{x}_{t+1})-F(x^*)]\leq \frac{\|\Bar{x}_1-x^*\|^2}{2\eta}+\frac{L}{N}\sum_{t=1}^T \mathbb{E}[\|x_t-\bone\Bar{x}_t\|^2]-(\frac{\eta}{2}-\eta^2L)\sum_{t=1}^T\mathbb{E}[\|g_t\|^2].
\end{align*}
Continuing with $\sum_{t=1}^T [\frac{1}{N}\sum_{i=1}^N \mathbb{E}[F(x_{t+1,i})-F(x^*)]]$, we can have that
\begin{align*}
    &\sum_{t=1}^T [\frac{1}{N}\sum_{i=1}^N \mathbb{E}[F(x_{t+1,i})-F(x^*)]]\\
    \leq& \frac{\|\Bar{x}_1-x^*\|^2}{2\eta}+\frac{3L}{N}\sum_{t=1}^{T+1}\mathbb{E}[\|x_t-\bone\Bar{x}_t\|^2]-(\frac{\eta}{2}-\eta^2L)\sum_{t=1}^T\mathbb{E}[\|g_t\|^2].
\end{align*}
Since $\sum_{t=1}^T [\frac{1}{N}\sum_{i=1}^N \mathbb{E}[F(x_{t+1,i})-F(x^*)]]\geq0$, it follows that
\begin{align*}
    (\frac{\eta}{2}-\eta^2L)\sum_{t=1}^T\mathbb{E}[\|g_t\|^2]\leq \frac{\|\Bar{x}_1-x^*\|^2}{2\eta}+\frac{3L}{N}\sum_{t=1}^{T+1}\mathbb{E}[\|x_t-\bone\Bar{x}_t\|^2].
\end{align*}
For $\eta<\frac{1}{4L}$, it is clear that
\begin{align*}
    \sum_{t=1}^T\mathbb{E}[\|g_t\|^2]\leq \frac{2\|\Bar{x}_1-x^*\|^2}{\eta^2}+\frac{12L}{\eta N}\sum_{t=1}^{T+1}\mathbb{E}[\|x_t-\bone\Bar{x}_t\|^2]
\end{align*}

In a nutshell, we can obtain the upper bound for $R_2$:
\begin{align*}\label{midcentral}
    &\mathbb{E}\left[\sum_{i=1}^N\sum_{t=1}^T f_{t,i}(\Bar{x}_t)-\sum_{i=1}^N\sum_{t=1}^T f_{t,i}(x^*)\right]\nonumber\\
    =&\mathbb{E}\left[N\sum_{t=1}^T \left[f_t(\Bar{x}_t)-f_t(x^*)\right]\right]\nonumber\\
    =&N\mathbb{E}\left[\sum_{t=1}^T[F(\Bar{x}_t)-F(x^*)]\right]\\
    \leq& \frac{N\|\Bar{x}_1-x^*\|^2}{2\eta}+L\sum_{t=1}^T\mathbb{E}[\|x_t-\bone\Bar{x}_t\|^2]+\frac{N\eta}{2}\mathbb{E}[\|\nabla F(\Bar{x}_{T+1})\|^2]+2\sigma^2\eta T+2\eta L^2\mathbb{E}\left[\sum_{t=1}^{T}\|x_t-\bone\Bar{x}_t\|^2\right]\\
    &+\frac{N(3\eta+\eta^2L)}{2}\left(\frac{2\|\Bar{x}_1-x^*\|^2}{\eta^2}+\frac{12L}{\eta N}\sum_{t=1}^{T+1}\mathbb{E}[\|x_t-\bone\Bar{x}_t\|^2]\right)\\
    \leq& \frac{4N\|\Bar{x}_1-x^*\|^2}{\eta}+26L\sum_{t=1}^T\mathbb{E}[\|x_t-\bone\Bar{x}_t\|^2]+\frac{N\eta}{2}\mathbb{E}[\|\nabla F(\Bar{x}_{T+1})\|^2]+2\sigma^2\eta T+24L\mathbb{E}[\|x_{T+1}-\bone\Bar{x}_{T+1}\|^2].
\end{align*}

\end{proof}

\subsection{Proof of Theorem 1}

Based on the analysis of $R_1 $ and $R_2$, we can obtain the regret as follows:
\begin{align*}
    \mathbb{E}[\brn_s]
    =&\mathbb{E}\left[\sum_{i=1}^N\sum_{t=1}^T f_{t,i}(x_{t,i})-\sum_{i=1}^N\sum_{t=1}^T f_{t,i}(x^*)\right]\\
    =&\mathbb{E}\left[\sum_{i=1}^N\sum_{t=1}^T f_{t,i}(x_{t,i})-\sum_{i=1}^N\sum_{t=1}^T f_{t,i}(\Bar{x}_t)\right]+\mathbb{E}\left[\sum_{i=1}^N\sum_{t=1}^T f_{t,i}(\Bar{x}_t)-\sum_{i=1}^N\sum_{t=1}^T f_{t,i}(x^*)\right]\\
    \leq& 28L\sum_{t=1}^T\mathbb{E}[\|x_t-\bone\Bar{x}_t\|^2]+\frac{4N\|\Bar{x}_1-x^*\|^2}{\eta}+\frac{N\eta}{2}\mathbb{E}[\|\nabla F(\Bar{x}_{T+1})\|^2]+2\sigma^2\eta T+24L\mathbb{E}[\|x_{T+1}-\bone\Bar{x}_{T+1}\|^2]\\
    \leq& 28LA_2\eta^2\frac{1+\rho^2}{1-\rho^2}[18\eta^2\sigma^2L^2+18N\eta^2L^2D+(1+\eta LN+N)\sigma^2]T+28LA_1\frac{\alpha-\alpha^T}{1-\alpha}+28L\|x_1-\bone\Bar{x}_1\|^2\\
    &+\frac{4N\|\Bar{x}_1-x^*\|^2}{\eta}+\frac{N\eta}{2}\mathbb{E}[\|\nabla F(\Bar{x}_{T+1})\|^2]+2\sigma^2\eta T+24L\mathbb{E}[\|x_{T+1}-\bone\Bar{x}_{T+1}\|^2]\\
    =& O(\eta^2 T+\eta^2 NT+\frac{N}{\eta}+\eta T)\\
    =& O(\sqrt{NT})
\end{align*}
where  $\eta\leq \frac{1}{2L}\sqrt{\frac{N}{T}}$ and $N=o(T^{1/3})$. Therefore, we conclude that the optimal regret can be achieved and the average regret per agent  $\mathbb{E}[\brn]=\frac{1}{N}\mathbb{E}[\brn_s]\leq O(\sqrt{\frac{T}{N}})$. 

\section{Distributed Convex Stochastic Optimization}

As a byproduct, we can achieve the following convergence guarantee of DOGD-GT for distributed convex stochastic optimization.
\begin{corollary}\label{corollary1}
Suppose Assumptions 1, 2, and 3 hold, and let $\hat{x}_i=\frac{1}{T}\sum_{t=1}^T x_{t,i}$ be the final output of DOGD-GT for each agent $i$. It follows that
{\small
\begin{align*}
\vspace{-0.1cm}
    \frac{1}{N}\sum_{i=1}^N\mathbb{E}[F(\hat{x}_i)-F(x^*)]=O\left(\frac{1}{\sqrt{NT}}\right).
    \vspace{-0.1cm}
\end{align*}
}%
\end{corollary}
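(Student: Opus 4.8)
The plan is to derive this optimization (population-risk) guarantee directly from the regret bound of Theorem~\ref{theorem1} through a standard online-to-batch conversion, so that no new consensus-error or linear-system analysis is required. First I would use the convexity of $F$: since $F(x)=\mathbb{E}[f_{t,i}(x)]$ is an average of convex functions (Assumption~1) and $\hat{x}_i=\frac{1}{T}\sum_{t=1}^T x_{t,i}$ is a convex combination of the iterates, Jensen's inequality gives, for every agent $i$,
\begin{align*}
    F(\hat{x}_i)-F(x^*)\leq \frac{1}{T}\sum_{t=1}^T\left[F(x_{t,i})-F(x^*)\right].
\end{align*}
Averaging over $i$ and taking expectations reduces the target to bounding $\frac{1}{NT}\sum_{i=1}^N\sum_{t=1}^T\mathbb{E}[F(x_{t,i})-F(x^*)]$.

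The second step is to identify this averaged population-risk gap with the regret $\brn$ in \eqref{staticreg}. The crucial observation is that the DOGD-GT iterate $x_{t,i}$ is $\mathcal{F}_t$-measurable --- both updates $x_{t+1}=Wx_t-\eta s_t$ and $s_t=Ws_{t-1}+\nabla_t-\nabla_{t-1}$ use only the losses $f_1,\dots,f_{t-1}$ --- whereas the fresh draw $f_{t,i}\sim\mathcal{P}$ is independent of $\mathcal{F}_t$. Hence $\mathbb{E}[f_{t,i}(x_{t,i})\mid\mathcal{F}_t]=F(x_{t,i})$ and $\mathbb{E}[f_{t,i}(x^*)]=F(x^*)$, so by the tower property $\mathbb{E}[F(x_{t,i})-F(x^*)]=\mathbb{E}[f_{t,i}(x_{t,i})-f_{t,i}(x^*)]$. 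Summing over $i$ and $t$ and dividing by $NT$ then shows that
\begin{align*}
    \frac{1}{N}\sum_{i=1}^N\mathbb{E}[F(\hat{x}_i)-F(x^*)]\leq \frac{1}{NT}\sum_{i=1}^N\sum_{t=1}^T\mathbb{E}[f_{t,i}(x_{t,i})-f_{t,i}(x^*)]=\frac{1}{T}\mathbb{E}[\brn].
\end{align*}

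Finally I would invoke Theorem~\ref{theorem1}, which under Assumptions~1 and~2 and in the stated step-size and $N=o(T^{1/3})$ regime yields $\mathbb{E}[\brn]=O(\sqrt{T/N})$; dividing by $T$ gives the claimed $O(1/\sqrt{NT})$ rate. Assumption~3, together with the boundedness in Assumption~1, is needed only to keep the constants hidden inside the $O(\cdot)$ of Theorem~\ref{theorem1} (e.g.\ the term involving $\|\Bar{x}_1-x^*\|$) finite, and plays no structural role here. The one genuinely delicate point is the measurability/independence argument in the middle step: one must confirm that conditioning on $\mathcal{F}_t$ freezes $x_{t,i}$ while leaving only the independent randomness of $f_{t,i}$ to be integrated out, which is precisely what converts the suboptimality gap of $F$ into the online regret. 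Everything else is a routine reuse of the already-established regret bound.
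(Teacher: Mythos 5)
Your proposal is correct and follows essentially the same route as the paper's own proof: Jensen's inequality via convexity of $F$, identification of the averaged population suboptimality with $\frac{1}{T}\mathbb{E}[\brn]$, and then Theorem~\ref{theorem1}. The only difference is that you explicitly justify the step $\mathbb{E}[F(x_{t,i})-F(x^*)]=\mathbb{E}[f_{t,i}(x_{t,i})-f_{t,i}(x^*)]$ via $\mathcal{F}_t$-measurability of $x_{t,i}$ and the tower property, which the paper leaves implicit---a welcome clarification (note only that your parenthetical should say $x_{t}$ depends on $f_1,\dots,f_{t-1}$ while $s_t$ already involves $f_t$).
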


\begin{proof}
Based on the convexity of $F(\cdot)$ and Jensen's inequality, we can have
\begin{align*}
    \frac{1}{N}\sum_{i=1}^N \mathbb{E}[F(\hat{x}_i)-F(x^*)]
    \leq& \frac{1}{N}\sum_{i=1}^N \mathbb{E}\left[\frac{1}{T}\sum_{t=1}^T F(x_{t,i})-F(x^*)\right]\\
    =&\frac{1}{N}\sum_{i=1}^N \mathbb{E}\left[\frac{1}{T}\sum_{t=1}^T F(x_{t,i})-\frac{1}{T}\sum_{t=1}^T F(x^*)\right]\\
    =&\frac{1}{T}\mathbb{E}[\brn]\\
    =& O\left(\frac{1}{\sqrt{NT}}\right).
\end{align*}
\end{proof}

Corollary \ref{corollary1} indicates that the optimal  convergence rate of $O(1/\sqrt{NT})$ can be obtained by DOGD-GT for convex stochastic optimization problems. 
In contrast to standard stochastic gradient descent algorithms, it is clear that DOGD-GT can achieve  a factor of $\sqrt{1/N}$ speedup compared with the single-agent case.

\section{Proof of Theorem \ref{theorem2}}

Define the  regret for the network-level OCO about $\phi_{t,n}$ with respect to any reference point $\phi$ as
\begin{align*}
    \brn^{init}(\phi)=\frac{1}{N}\sum_{n=1}^N\sum_{t=1}^T f_{t,n}^{init}(\phi_{t,n})-\frac{1}{N}\sum_{n=1}^N\sum_{t=1}^T f_{t,n}^{init}(\phi),
\end{align*}
and the regret about $v_{t,n}$ with respect to any  reference point $v$ as
\begin{align*}
    \brn^{rate}(v)=\frac{1}{N}\sum_{n=1}^N\sum_{t=1}^T f_{t,n}^{rate}(v_{t,n})-\frac{1}{N}\sum_{n=1}^N\sum_{t=1}^T f_{t,n}^{rate}(v).
\end{align*}
Therefore, according to Theorem \ref{theorem1} in Section \ref{section3}, it follows that
\begin{align*}
    \mathbb{E}[\brn^{init}(\phi^*)]=O\left(\sqrt{\frac{mT}{N}}\right)
\end{align*}
for $\phi^*=\arg\min_{\phi\in\Theta}\mathbb{E}[f_{t,n}^{init}(\phi)]$,
and that
\begin{align*}
    \mathbb{E}[\brn^{rate}(\Tilde{v}^*)]=O\left(\sqrt{\frac{mT}{N}}\right)
\end{align*}
for $\Tilde{v}^*=\arg\min_{v\geq\epsilon}\mathbb{E}[f_{t,n}^{rate}(v)]$.

Based on Theorem 3.1 in \cite{khodak2019adaptive}, we can have
\begin{align*}
    \mathbb{E}[\brn_a] \leq& \mathbb{E}\left[\frac{1}{NT}\sum_{n=1}^N\sum_{t=1}^T \left(\frac{\mathcal{B}_R(\theta^*_{t,n}||\phi_{t,n})}{v_{t,n}}+v_{t,n}\right)G\sqrt{m}\right]\\
    \leq& \frac{1}{T}\left\{\mathbb{E}[\brn^{rate}(\Tilde{v}^*)]+\min_v\mathbb{E}\left[\frac{1}{N}\sum_{n=1}^N\sum_{t=1}^T \left(\frac{\mathcal{B}_R(\theta^*_{t,n}||\phi_{t,n})}{v}+v\right)G\sqrt{m}\right]\right\}\\
    \leq& \frac{\mathbb{E}[\brn^{rate}(\Tilde{v}^*)]}{T}+\min_v\frac{1}{T}\left\{\frac{\mathbb{E}[\brn^{init}(\phi^*)]}{v}+\mathbb{E}\left[\frac{1}{N}\sum_{n=1}^N\sum_{t=1}^T \left(\frac{\mathcal{B}_R(\theta^*_{t,n}||\phi^*)}{v}+v\right)G\sqrt{m}\right]\right\}\\
    \overset{(a)}{\leq}& \frac{\mathbb{E}[\brn^{rate}(\Tilde{v}^*)]}{T}+\frac{1}{T}\min\left\{\frac{\mathbb{E}[\brn^{init}(\phi^*)]}{V_{\phi}},2\sqrt{\mathbb{E}[\brn^{init}(\phi^*)]GT\sqrt{m}}\right\}+2V_{\phi}GT\sqrt{m}\\
    =& O\left(\frac{1+\frac{1}{V_{\phi}}}{\sqrt{NT}}+V_{\phi}\right)\sqrt{m}
\end{align*}
where (a) is true for $V_{\phi}=\sqrt{\mathbb{E}[\mathcal{B}_R(\theta^*_{t,n}||\phi^*)]}$ and $v=\max\left\{V_{\phi},\sqrt{\frac{\mathbb{E}[\brn^{init}(\phi^*)]}{GT\sqrt{m}}}\right\}$.

\end{document}